\documentclass[11pt,letterpaper]{article}
\usepackage{booktabs}
\usepackage[letterpaper,margin=1in]{geometry}

\usepackage{authblk}

\usepackage{bm}

\usepackage[multiple]{footmisc}

\usepackage{mlmodern}
\usepackage{inconsolata}

\usepackage{booktabs}
\usepackage{mdframed}

\usepackage{dsfont}

\usepackage{amsfonts,nicefrac}
\usepackage[utf8]{inputenc}
\usepackage[dvipsnames]{xcolor}
\usepackage{graphicx}
\usepackage{multirow}
\usepackage{amsmath}
\usepackage{amsthm}
\usepackage{amssymb}
\usepackage{comment}
\usepackage{cancel}

\usepackage[shortlabels]{enumitem}
\usepackage[style=alphabetic, maxbibnames=15, giveninits=true, maxcitenames=15, natbib=true,
  maxalphanames=10, backend=biber, sorting=nty, backref=true, uniquename=false]{biblatex}
\DefineBibliographyStrings{english}{backrefpage = {page},backrefpages = {pages},}
\DeclareNameAlias{sortname}{given-family}
\renewbibmacro{in:}{%
  \ifentrytype{article}{}{\printtext{\bibstring{in}\intitlepunct}}}
  
\makeatletter

\newtheorem{theorem}{Theorem}[section]
\newtheorem{corollary}{Corollary}[theorem]
\newtheorem{lemma}[theorem]{Lemma}
\newtheorem{proposition}[theorem]{Proposition}

\newtheorem{assumption}{Assumption}[section]
\theoremstyle{remark}
\newtheorem{remark}{Remark}[section]

\usepackage{hyperref}
\hypersetup{
  colorlinks   = true, %
  urlcolor     = blue, %
  linkcolor    = blue, %
  citecolor   = red %
}

\renewcommand{\epsilon}{\varepsilon}

\usepackage{bm}

\DeclareMathOperator*{\E}{\mathbb{E}}

\newcommand\Var{\mathrm{Var}}

\newcommand\Cov{\mathrm{Cov}}

\DeclareMathOperator{\sign}{sign}

\makeatother

\usepackage{algorithm}
\usepackage{algpseudocode}

\usepackage{caption}
\usepackage{subcaption}
\usepackage{mathtools}

\floatstyle{ruled}
\newfloat{subroutine}{htbp}{lok}
\floatname{subroutine}{Subroutine}
\usepackage{tikz}
\usetikzlibrary{decorations.pathreplacing,calc}

\definecolor{comment}{RGB}{2,128, 9}

\usepackage{eqparbox}

\usepackage[capitalise]{cleveref}
\crefname{equation}{}{}

\usepackage{enumitem}

\usetikzlibrary{arrows.meta, calc, positioning}

\definecolor{softblue}{rgb}{0.0,0.4,0.9}

\title{Adaptive Optimization via Momentum on Variance-Normalized Gradients
\vspace{3mm}
}

\renewcommand*{\Affilfont}{\normalsize}

\renewcommand*{\Affilfont}{\normalsize}

\makeatletter
\renewcommand\AB@affilsepx{\quad \protect\Affilfont}
\renewcommand\AB@affilsep{\protect\Affilfont}
\makeatother

\author[$\dagger$]{Francisco Patitucci}
\author[$\dagger$,$\ddagger$]{Aryan Mokhtari}

\affil[$\dagger$]{UT Austin}
\affil[$\ddagger$]{Google Research}

\date{}

\date{}

\begin{document}

\maketitle

\begin{abstract}

We introduce {MVN-Grad} (\textit{Momentum on Variance-Normalized Gradients}), an Adam-style optimizer that improves stability and performance by combining two complementary ideas: variance-based normalization and momentum applied \emph{after} normalization. MVN-Grad scales each coordinate by an exponential moving average of gradient uncertainty and applies momentum to the resulting normalized gradients, removing the cross-time coupling between stale momentum and a stochastic normalizer present in standard Adam-type updates. 
We prove that this decoupling yields smaller one-step conditional update variance than momentum--then--normalize variance methods, and that MVN-Grad has a uniformly bounded response to isolated gradient spikes.
In low-variance regimes, we further show that variance normalization avoids sign-type collapse of second-moment scaling and can yield accelerated convergence. Beyond these comparisons, we prove a general nonconvex convergence guarantee for MVN-Grad under bounded-gradient stochastic assumptions. On CIFAR-100 and GPT-style language modeling, MVN-Grad matches or improves on Adam, AdaBelief, and LaProp, delivering smoother training and better generalization at the cost of one additional state tensor.

\end{abstract}

\newpage

\section{Introduction}

Adaptive gradient methods such as Adam \citep{kingma2014adam} are widely used due to their empirical robustness and ease of hyperparameter tuning. Predecessors like AdaGrad \citep{duchi2011adaptive} and RMSProp \citep{hinton2012neural} pioneered the use of coordinate-wise preconditioning based on running statistics. Adam builds on these foundations by combining an exponential moving average (EMA) of gradients with per-coordinate normalization based on the second moment, enabling rapid convergence across diverse architectures. Despite their success, a growing body of work has identified failure modes and generalization limitations of Adam-type methods in stochastic settings \citep{wilson2017marginal, reddi2019convergence, chen2018convergence}.
We identify two fundamental structural limitations in the Adam design that contribute to these failures: temporal coupling and suboptimal scaling.

First, a critical weakness lies in Adam's standard \emph{momentum-then-normalize} ordering, which creates a stochastic coupling between the numerator and denominator. While both terms utilize EMAs, their interaction induces a temporal mismatch. The momentum term $m_t$ acts as a heavy memory buffer for \emph{past} gradients, whereas the adaptive denominator is continuously modulated by the \emph{current} stochastic gradient $g_t$. 
This creates a risk of ``cross-time'' instability: if the normalizer transiently dips---for
instance, during a mini-batch with low gradient magnitude---while the momentum buffer still retains
a large value from a past outlier, the update can become disproportionately sensitive to the current
normalizer. In regimes where the denominator forgets faster than the numerator, this carry-over can
lead to spike-dependent responses.
This coupling amplifies stale gradients and distorts the update direction, a failure mode that has been also highlighted in some recent work \citep{ziyin2020laprop, zhou2018adashift, zhuang2021momentum}.

Second, reliance on the \emph{uncentered second moment} for normalization can lead to ineffective scaling, since $\mathbb{E}[g_t^2]=(\mathbb{E}[g_t])^2+\mathrm{Var}(g_t)$ mixes signal and noise. In low-variance regimes where the signal dominates, the normalizer tracks the squared gradient magnitude, causing the adaptive term to scale the gradient by roughly its own size ($g_t/\sqrt{g_t^2}\approx\mathrm{sign}(g_t)$) and degenerate into a coordinate-wise sign operation. This “sign collapse” discards scale information, forcing uniform step sizes regardless of local curvature and significantly slowing convergence in high-signal regimes \citep{balles2018dissecting, bernstein2018signsgd}.

Past attempts to improve Adam tackle these axes individually, trading one limitation for another. To resolve the temporal mismatch, LaProp \citep{ziyin2020laprop} applies momentum to the \emph{normalized} gradient, while AdaShift \citep{zhou2018adashift} decorrelates numerator and denominator via temporally shifted statistics. Although these methods decouple history from current noise, they retain the second-moment normalizer and remain susceptible to sign collapse. Conversely, AdaBelief \citep{zhuang2020adabelief} adopts variance-based scaling but ignores the ordering flaw; because variance estimates are smaller and more volatile than second moments, the standard ordering can make them less stable than Adam \citep{yuan2020eadam}. Consequently, the design space remains fragmented, forcing a choice between structural stability and adaptive sensitivity.

\noindent\textbf{Contributions.}
We introduce MVN-Grad (Momentum on Variance-Normalized Gradients), which addresses two key limitations via a normalize-then-momentum structure and variance-based normalization.
Applying momentum \emph{after} normalization removes stochastic coupling and improves stability when the normalizer fluctuates, while replacing the second moment with a variance estimate avoids sign collapse and preserves gradient scale in high-signal regimes.
Crucially, these two axes are \emph{orthogonal}: prior methods fix one flaw while leaving the other unresolved. LaProp and AdaShift correct the ordering but retain second-moment normalization (and thus sign collapse); AdaBelief adopts variance-based scaling but retains the momentum-then-normalize ordering (and thus stochastic coupling). MVN-Grad is the principled completion of this design space, resolving both failure modes. %

\begin{itemize}[leftmargin=*, itemsep=4pt, topsep=2pt, parsep=0pt]
    \item \textbf{Theoretical Stability Analysis.} We prove that the \emph{normalize-then-momentum} ordering yields smaller one-step conditional update variance than momentum--then--normalize variance methods such as AdaBelief, with extensions beyond exact symmetry (Theorem~\ref{thm:var_adabelief_mvngrad}; Proposition~\ref{prop:approx_mean}). We also show that MVN-Grad has a bounded response to large gradient spikes under the stated spike conditions (Theorem~\ref{thm:spike}; Corollary~\ref{cor:multi_spike}).

    \item \textbf{High-Signal Convergence.} We show that second-moment methods such as LaProp can collapse to sign-like updates in low-noise regimes, while MVN-Grad preserves gradient magnitudes and avoids the resulting dimension-dependent slowdowns (Theorem~\ref{thm:separation_MVN-Grad_LaProp_smallvar}).

    \item \textbf{General Nonconvex Convergence.} We prove a general convergence guarantee for MVN-Grad under bounded-gradient stochastic nonconvex assumptions, recovering the standard $O(1/\sqrt{T})$ rate up to the cost of variance normalization (Theorem~\ref{thm:mvngrad_convergence}).

    \item \textbf{Empirical Robustness.} On CIFAR-100 and GPT-style language modeling \citep{he2016deep, krizhevsky2009learning, radford2019language, karpathy_nanogpt_2022}, MVN-Grad is consistently competitive with Adam(W), AdaBelief, and LaProp, often improving robustness across hyperparameter sweeps with one additional optimizer-state tensor.

\end{itemize}

\textbf{Additional related work.}
A large body of work has proposed Adam variants targeting stability and convergence (e.g., AMSGrad \citep{reddi2019convergence}, Yogi \citep{zaheer2018adaptive}, RAdam \citep{liu2019variance}), practical training refinements (e.g., AdamW \citep{loshchilov2017fixing}, AdaFactor \citep{shazeer2018adafactor}), or stronger preconditioning (e.g., Sophia \citep{liu2023sophia}, AdaHessian~\citep{yao2021adahessian}).
These methods address complementary aspects of adaptive methods; we review them in Appendix~\ref{app:related-work}.

\section{Proposed Algorithm}
\label{sec:mvn-grad}

\begin{algorithm}[t]
\caption{MVN-Grad (Momentum on Variance-Normalized Gradients)}
\label{alg:mvn-grad}
\begin{algorithmic}[1]
\Require step size $\eta$, $\beta_1,\beta_2\in[0,1)$, $\varepsilon>0$, $\varepsilon_s\ge 0$
\State Initialize $m_0\leftarrow 0$, $s_0\leftarrow 0$, $u_0\leftarrow 0$, parameters $x_0$
\For{$t=1,2,\dots$}
    \State $g_t \gets \nabla_x f(x_{t-1}; \xi_t)$
    \State $m_t \gets \beta_1 m_{t-1} + (1-\beta_1)\, g_t$
    \State $s_t \gets \beta_2 s_{t-1} + (1-\beta_2)\,(g_t - m_t)^2 + \varepsilon_s$
    \State $c_m \gets 1-\beta_1^t$;\;\; $c_v \gets 1-\beta_2^t$
    \State $z_t \gets g_t / (\sqrt{\frac{s_t}{c_v}}+\varepsilon)$
    \State $u_t \gets \beta_1 u_{t-1} + (1-\beta_1)\, z_t$
    \State $x_t \gets x_{t-1} - \eta\, u_t / c_m$
\EndFor
\end{algorithmic}
\end{algorithm}

Consider the stochastic optimization of a differentiable function $F(x)$, where at iteration $t$ we observe a stochastic gradient $g_t=\nabla_x f(x_{t-1};\xi_t)$ with $\xi_t\sim\mathcal{D}$. Updates are element-wise; for readability, we present the scalar form.
We introduce \emph{Momentum on Variance-Normalized Gradients} (MVN-Grad), an Adam-style optimizer defined by two complementary choices: \emph{(i)} variance-based normalization and \emph{(ii)} applying momentum \emph{after} normalization. Concretely, MVN-Grad replaces Adam's second-moment EMA normalizer with a variance estimator, and it applies momentum to the normalized gradients rather than normalizing the momentum itself. We first state the MVN-Grad update, then explain these two ingredients, and finally position MVN-Grad relative to other adaptive methods.

The procedure is summarized in Algorithm~\ref{alg:mvn-grad}. After computing the stochastic gradient $g_t$ (Line 3), we update a gradient EMA, $m_t$, in Line 4. This $m_t$ serves solely to center the second moment for the variance estimation in Line 5. $s_t$ tracks the moving average of squared deviations between $g_t$ and $m_t$, acting as a variance proxy. In Line 7, we use the bias-corrected $s_t$ to normalize the current gradient, producing the direction $z_t$. Finally, we compute the momentum of this normalized sequence, storing it in $u_t$ (Line 8), and use its bias-corrected version to update parameters in Line 9.

\textbf{Implementation notes.}
Algorithm~\ref{alg:mvn-grad} reuses $\beta_1$ for both the mean estimator $m_t$ and the momentum buffer $u_t$ to match Adam's interface; decoupling them may help in non-stationary settings where variance estimation and momentum evolve on different timescales. The constant $\varepsilon_s$ in Line 5 lower-bounds $s_t$, preventing excessive normalization when $(g_t-m_t)^2$ is near zero. Our theory assumes $\varepsilon_s=0$; in practice a very small $\varepsilon_s>0$ improves numerical stability.

\textbf{Taxonomy of Adaptive Architectures.}
Adaptive methods can be categorized along two orthogonal axes. The first axis is the choice of \emph{normalizer}. Standard methods track the uncentered second moment $v_t \approx \mathbb{E}[g_t^2]$, whereas variance-based methods track the centered second moment $s_t \approx \mathrm{Var}(g_t)$. Let $\text{EMA}_{\beta}(x_t)$ denote the exponential moving average of a sequence $x_t$ with decay $\beta$. Using momentum $m_t$ as the mean estimate, the normalizers are:
\begin{equation}\label{v_and_s_definitions}
    v_t = \text{EMA}_{\beta_2}(g_t^2) \quad \text{vs} \quad s_t = \text{EMA}_{\beta_2}\big((g_t - m_t)^2\big).
\end{equation}
The second axis is the \emph{ordering}. Let $r_t \in \{v_t, s_t\}$ denote the chosen normalizer. \emph{Momentum-then-normalize} methods aggregate gradients before scaling, while \emph{normalize-then-momentum} methods scale the gradient before aggregation. Ignoring bias corrections, the two update directions are:
\begin{equation}\label{pre_post_updates}
    \Delta_t^{\text{pre}} = \frac{\text{EMA}_{\beta_1}(g_t)}{\sqrt{r_t} + \varepsilon} \quad \text{vs} \quad \Delta_t^{\text{post}} = \text{EMA}_{\beta_1}\left( \frac{g_t}{\sqrt{r_t} + \varepsilon} \right).
\end{equation}
In the first case ($\Delta_t^{\text{pre}}$), the EMA is applied to the raw gradient $g_t$. In the second case ($\Delta_t^{\text{post}}$), the EMA acts on the normalized gradient, serving as the final operation.
These choices define a complete $2\times2$ design space. The \emph{momentum-then-normalize} form ($\Delta_t^{\text{pre}}$) includes Adam with second-moment scaling ($r_t=v_t$) and AdaBelief with variance scaling ($r_t=s_t$). The \emph{normalize-then-momentum} form ($\Delta_t^{\text{post}}$) includes LaProp (second-moment scaling) and MVN-Grad (variance normalization).

\textbf{Why the two axes matter.}
MVN-Grad differs from Adam along two complementary design axes: normalizer choice and ordering, each addressing a distinct failure mode. We motivate both modifications below and formalize the resulting guarantees in Section~\ref{sec:theory}.

\begin{figure*}[t!]
  \centering

  \begin{subfigure}[t]{0.42\linewidth}
    \centering
    \includegraphics[width=\linewidth]{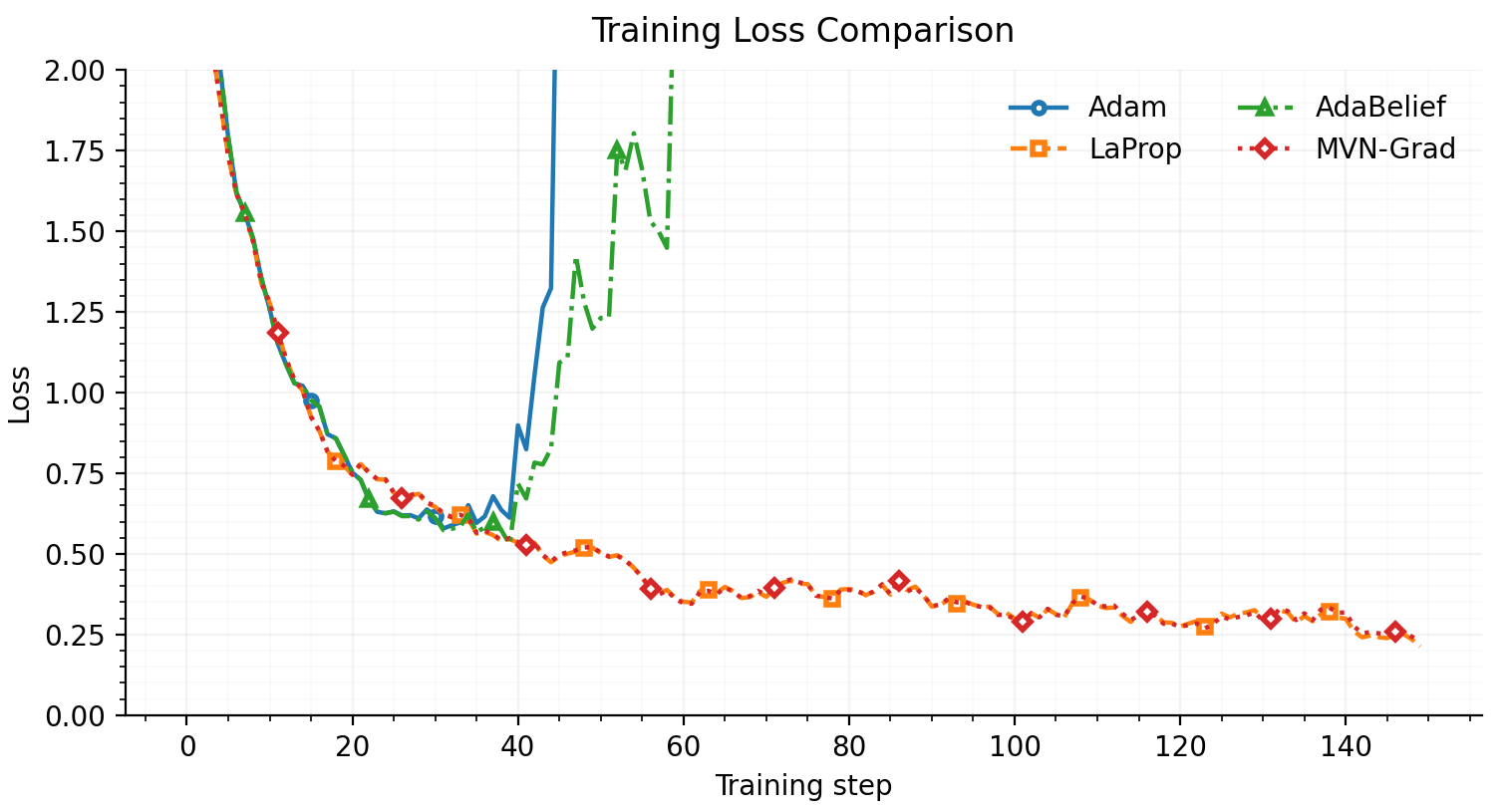}
    \caption{Batch size 128}
    \label{fig:loss_smallbatch}
  \end{subfigure}\quad
  \begin{subfigure}[t]{0.42\linewidth}
    \centering
    \includegraphics[width=\linewidth]{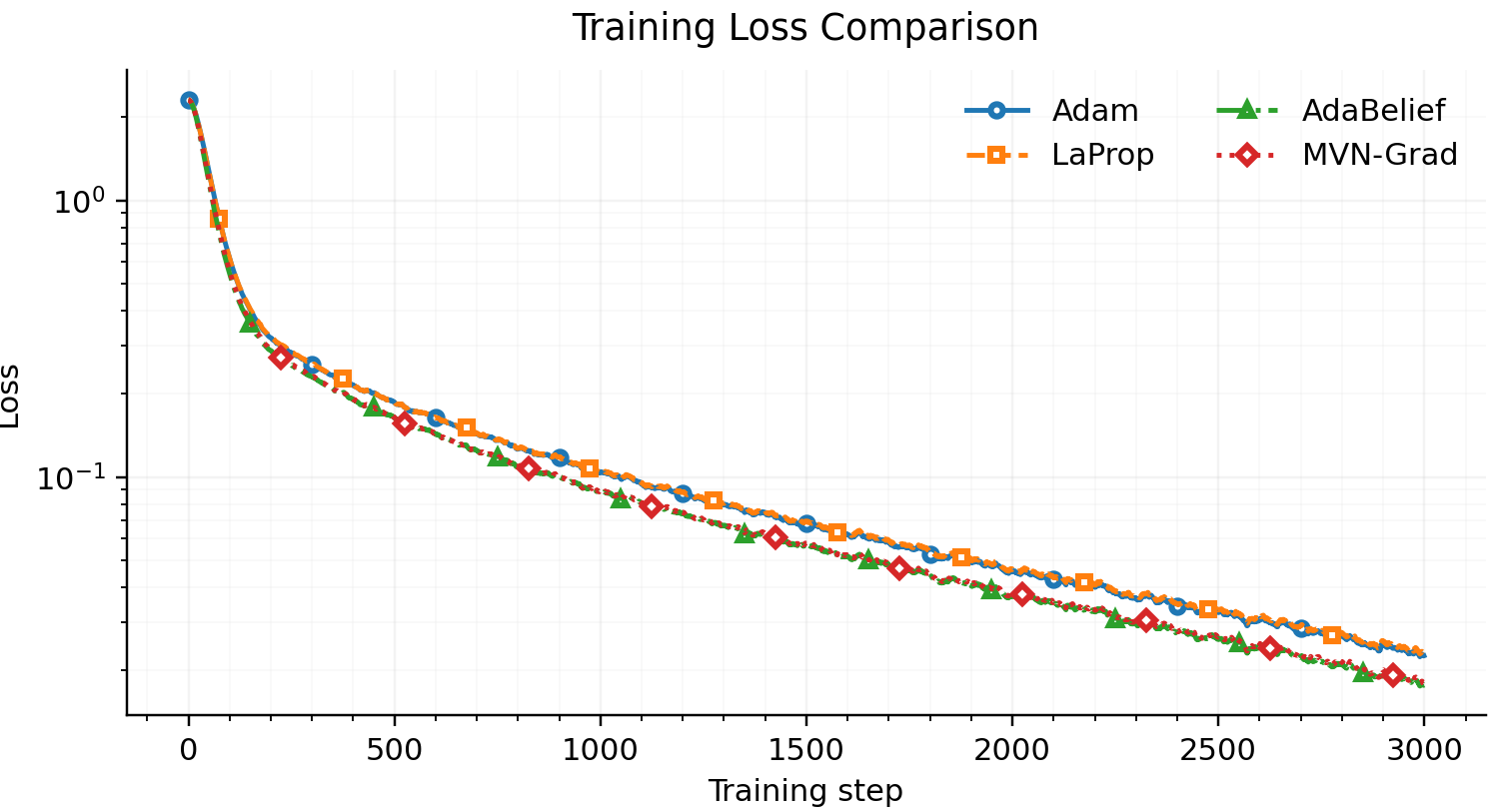}
    \caption{Batch size 1024}
    \label{fig:loss_largebatch}
  \end{subfigure}

  \caption{MNIST training loss (2-layer MLP). {(a)}~At small batch size, MVN-Grad produces smoother dynamics with fewer loss spikes. {(b)}~At large batch size (low-variance), variance normalization preserves gradient scale and yields faster convergence. Hyperparameters in Appendix~\ref{app:mnist-hparams}.}
\end{figure*}

\noindent\textbf{\textit{(A) Choosing the ordering.}} Swapping normalization and momentum changes the \emph{temporal dependency} of the update. Momentum-then-normalize methods (Adam, AdaBelief) divide the momentum buffer $m_t$ by the current normalizer $r_t$. Since $m_t=\beta_1 m_{t-1}+(1-\beta_1)g_t$ contains both past and current gradients, the update can be decomposed as $\Delta_t^{\text{pre}} = \beta_1 m_{t-1}/(\sqrt{r_t}+\varepsilon) + (1-\beta_1)g_t/(\sqrt{r_t}+\varepsilon)$.
The first term divides a history-dependent quantity $m_{t-1}$ by the present-time statistic $r_t$, creating a risky “cross-time” ratio. If the normalizer transiently dips while the momentum buffer still reflects a past spike, this contribution can be artificially amplified. In contrast, normalize-then-momentum forms the ratio $g_t / \sqrt{r_t}$ at the \emph{same timestep}, ensuring the gradient is stabilized \emph{before} entering the momentum buffer and eliminating this coupling; as shown by the toy example in Fig~\ref{fig:loss_smallbatch}.
This behavior is not merely heuristic: the structural change yields two formal stability gains. First, Section~\ref{subsec:cond-moments} shows that applying momentum after normalization reduces the one-step conditional variance by eliminating stochastic interaction between carry-over momentum and a random denominator. Second, this ordering ensures isolated gradient spikes are clipped immediately, formalized in Section~\ref{subsec:spike}. Unlike Adam, where spikes are re-emitted over many steps, MVN-Grad scales outliers at the source. 

\noindent\textbf{\textit{(B) Choosing the normalizer.}}
Switching from the second-moment normalizer $v_t$ to the variance proxy $s_t$ changes the \emph{scale} of the update. The second moment mixes signal and noise and scales with $\mathbb{E}[g_t^2] \approx \mu^2 + \sigma^2$, whereas the variance $s_t$ isolates the noise level $\sigma^2$. This distinction is especially pronounced in low-variance regimes ($\sigma \ll \mu$), which are illustrated by the toy example in Fig.~\ref{fig:loss_largebatch}. In such settings, second-moment normalization effectively rescales the gradient by its own magnitude ($g_t / \sqrt{v_t} \approx g_t / |g_t|$), causing the method to collapse to a conservative, sign-based update. In contrast, variance normalization scales deviations around the mean, preserving informative gradient magnitudes up to a noise-dependent preconditioner. Importantly, this behavior is not merely anecdotal: as we formally prove in Section~\ref{subsec:var-vs-second}, variance-based methods achieve fast convergence in low-noise regions, whereas second-moment methods incur dimension-dependent slowdowns.

\section{Theoretical Guarantees}
\label{sec:theory}

This section justifies MVN-Grad's two design choices. We first show that normalize--then--momentum reduces one-step conditional update variance and limits spike carry-over. We then show that variance normalization avoids sign collapse in low-noise regimes, and finally prove a general bounded-gradient nonconvex convergence guarantee.

Before presenting our results, we fix notation. We analyze coordinate-wise and let $g_t$ denote the stochastic gradient at step $t$ for a fixed coordinate. All random variables are defined on a filtered probability space $(\Omega, \mathcal{F}, (\mathcal{F}_t)_{t\ge0}, \mathbb{P})$, where $\mathcal{F}_{t-1}$ captures the history up to time $t-1$. Let $\mu_t \coloneqq \mathbb{E}[g_t \mid \mathcal{F}_{t-1}]$ denote the conditional mean. For clarity, we omit bias-correction factors of the form $1/(1-\beta^t)$, as they do not affect our conclusions.
We denote the update directions for Adam and AdaBelief by $\Delta_t^{\text{A}}$ and $\Delta_t^{\text{AB}}$, corresponding to the \emph{pre-normalization} structure $\Delta_t^{\text{pre}}$ in \eqref{pre_post_updates} with $r_t=v_t$ and $r_t=s_t$ from \eqref{v_and_s_definitions}, respectively. Likewise, LaProp and MVN-Grad are denoted by $\Delta_t^{\text{LP}}$ and $\Delta_t^{\text{MV}}$, corresponding to the \emph{post-normalization} structure $\Delta_t^{\text{post}}$ with $r_t=v_t$ and $r_t=s_t$, respectively.

\subsection{Conditional variance comparison}
\label{subsec:cond-moments}

We quantify the stability gain of applying momentum \emph{after} normalization while using the same variance proxy $s_t$. Specifically, we compare the one-step \emph{conditional update variance} of AdaBelief and MVN-Grad, $\Var(\Delta_t^{\text{AB}}\mid \mathcal F_{t-1})$ versus $\Var(\Delta_t^{\text{MV}}\mid \mathcal F_{t-1})$. This conditional variance measures the unpredictability of the next update given the past, with smaller values indicating smoother dynamics. Under a standard symmetry assumption on centered gradient noise and an idealized mean-tracking condition on the EMA, the variance gap has a closed form and is always nonnegative.

\begin{theorem}
\label{thm:var_adabelief_mvngrad}
Assume that, conditional on $\mathcal F_{t-1}$, the innovation $\zeta_t:=g_t-m_{t-1}$ is symmetric about zero. Define
$
\Delta\Var_t :=
\Var\!\left[\Delta_t^{\text{AB}}\mid\mathcal F_{t-1}\right]
-
\Var\!\left[\Delta_t^{\text{MV}}\mid\mathcal F_{t-1}\right]$.
Then, we have the following 
$\Delta\Var_t
=
(2\beta_1-\beta_1^2)\, m_{t-1}^2\,
\Var(\frac{1}{\sqrt{s_t}+\varepsilon}\,|\,\mathcal F_{t-1})
\ge 0.$

\end{theorem}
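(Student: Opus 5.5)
The plan is a direct computation. Conditional on $\mathcal F_{t-1}$, both updates are affine in the same two random quantities. One is the normalizer factor $D_t:=1/(\sqrt{s_t}+\varepsilon)$ and the other is $\zeta_t D_t$. The symmetry assumption will then kill every cross term. Throughout I treat $m_{t-1}$, $s_{t-1}$ and $u_{t-1}$ as $\mathcal F_{t-1}$-measurable constants. I also drop bias corrections, as the paper does.

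\emph{Step 1: express everything through $\zeta_t$.} Write $g_t=m_{t-1}+\zeta_t$, so that $m_t=m_{t-1}+(1-\beta_1)\zeta_t$. Then
\[
g_t-m_t=\beta_1\zeta_t, \qquad s_t=\beta_2 s_{t-1}+(1-\beta_2)\beta_1^2\zeta_t^2 .
\]
Hence $s_t$, and therefore $D_t$, is an \emph{even} function of $\zeta_t$. Moreover $0<D_t\le 1/\varepsilon$, so $D_t$ is bounded. I will assume $\zeta_t$ has a finite conditional second moment, so that all variances below are finite.

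\emph{Step 2: decompose both updates.} For AdaBelief,
\[
\Delta_t^{\text{AB}}=m_tD_t=m_{t-1}D_t+(1-\beta_1)\zeta_tD_t .
\]
For MVN-Grad,
\[
\Delta_t^{\text{MV}}=\beta_1u_{t-1}+(1-\beta_1)(m_{t-1}+\zeta_t)D_t .
\]
In the second expression the term $\beta_1u_{t-1}$ is $\mathcal F_{t-1}$-measurable, so it does not contribute to the conditional variance.

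\emph{Step 3: cross terms vanish.} Since $D_t$ is even in $\zeta_t$, both $\zeta_tD_t$ and $\zeta_tD_t^2$ are odd functions of $\zeta_t$. By conditional symmetry their conditional means are zero. Consequently
\[
\Cov(D_t,\zeta_tD_t\mid\mathcal F_{t-1})=\E[\zeta_tD_t^2\mid\mathcal F_{t-1}]-\E[D_t\mid\mathcal F_{t-1}]\,\E[\zeta_tD_t\mid\mathcal F_{t-1}]=0 .
\]
Expanding the variances with this fact gives
\begin{align*}
\Var(\Delta_t^{\text{AB}}\mid\mathcal F_{t-1}) &= m_{t-1}^2\Var(D_t\mid\mathcal F_{t-1})+(1-\beta_1)^2\Var(\zeta_tD_t\mid\mathcal F_{t-1}),\\
\Var(\Delta_t^{\text{MV}}\mid\mathcal F_{t-1}) &= (1-\beta_1)^2\bigl[m_{t-1}^2\Var(D_t\mid\mathcal F_{t-1})+\Var(\zeta_tD_t\mid\mathcal F_{t-1})\bigr].
\end{align*}

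\emph{Step 4: subtract.} The $\Var(\zeta_tD_t\mid\mathcal F_{t-1})$ terms cancel. What remains is
\[
\bigl(1-(1-\beta_1)^2\bigr)\,m_{t-1}^2\,\Var(D_t\mid\mathcal F_{t-1})=(2\beta_1-\beta_1^2)\,m_{t-1}^2\,\Var(D_t\mid\mathcal F_{t-1}),
\]
which is the claimed identity. It is nonnegative because $\beta_1\in[0,1)$ gives $2\beta_1-\beta_1^2\ge0$, and a variance is nonnegative.

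The only genuinely delicate point is Step 3. It depends on noticing that the centering in Line~5 of Algorithm~\ref{alg:mvn-grad} uses $m_t$, and that $g_t-m_t=\beta_1\zeta_t$ is exactly proportional to the innovation. This is what makes $s_t$ even in $\zeta_t$. If the normalizer instead depended on $g_t$ asymmetrically, for example through an uncentered $g_t^2=(m_{t-1}+\zeta_t)^2$, the covariance would not vanish. The other point needing care is integrability, which the boundedness of $D_t$ together with a finite second moment of $\zeta_t$ handles.
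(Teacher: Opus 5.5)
Your proof is correct and is the paper's argument. In both, the key step is that $g_t-m_t=\beta_1\zeta_t$ makes $s_t$ even in $\zeta_t$, so $\Cov(D_t,\zeta_tD_t\mid\mathcal F_{t-1})=0$ by symmetry. The rest is bookkeeping: you expand both updates in $D_t$ and $\zeta_tD_t$, whereas the paper writes $\Delta_t^{\text{AB}}=\beta_1m_{t-1}Y_t+(1-\beta_1)X_t$ and evaluates $\Cov(Y_t,X_t\mid\mathcal F_{t-1})=m_{t-1}\Var(Y_t\mid\mathcal F_{t-1})$. Your explicit integrability assumption on $\zeta_t$ is left implicit in the paper.
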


Theorem~\ref{thm:var_adabelief_mvngrad} isolates a concrete stability gap between the two updates.
Even when both use the same variance proxy $s_t$, AdaBelief's update $m_{t-1}/(\sqrt{s_t}+\varepsilon)$ multiplies stale momentum by the current \emph{random} normalizer, injecting an extra variance term proportional to
$m_{t-1}^2 \Var\big((\sqrt{s_t}+\varepsilon)^{-1}\mid\mathcal F_{t-1}\big)$.
MVN-Grad avoids this: its carry-over is $\mathcal F_{t-1}$-measurable and decoupled from the normalizer's randomness, yielding strictly more predictable one-step updates.
The theorem gives an explicit closed-form gap under a local symmetry condition on $g_t-m_{t-1}$; the underlying mechanism persists beyond this setting, as our diagnostics confirm.

\begin{figure}%
    \centering
    \includegraphics[width=0.5\textwidth]{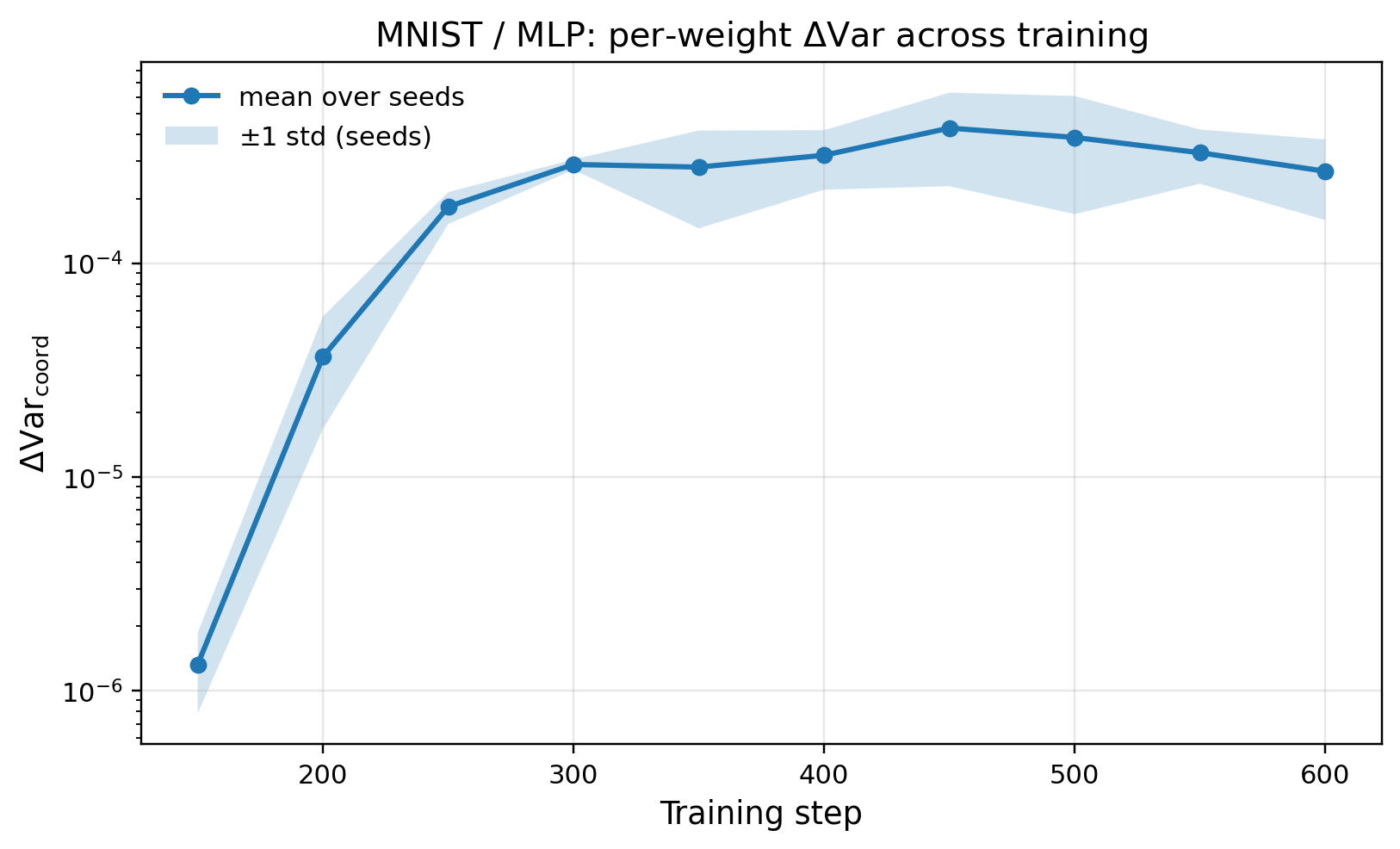}
    \caption{Conditional update-variance gap via Monte Carlo at frozen checkpoints.}
    \label{fig:thm_vgap_ada_prop}
\end{figure}

Fig.~\ref{fig:thm_vgap_ada_prop} estimates the one-step conditional
update-variance gap along a real MNIST training trajectory by freezing optimizer
states and resampling mini-batches. Across checkpoints, the estimated gap is
positive on average, supporting the normalize--then--momentum mechanism beyond
the idealized assumptions of Theorem~\ref{thm:var_adabelief_mvngrad}. The gap
increases during the early EMA transient and then plateaus rather than
vanishing, suggesting that the ordering-induced variance reduction persists
after the optimizer statistics stabilize. Full diagnostic details are in
Appendix~\ref{app:mnist-vgap}.

\begin{proposition}%
\label{prop:approx_mean}
Adopt the setup of Theorem~\ref{thm:var_adabelief_mvngrad}, but replace the symmetry assumption with the following:
conditional on $\mathcal F_{t-1}$, the stochastic gradient admits $g_t = \mu_t + \xi_t$ where 
$\xi_t$ is symmetric about zero,  $\E[\xi_t^2 \mid \mathcal F_{t-1}] \le \sigma_\xi^2$, and the EMA tracking error satisfies $|m_{t-1}-\mu_t|\le \delta$.
Define $Y_t := (\sqrt{s_t}+\varepsilon)^{-1}$. Then:
$\Delta\Var_t
\;\ge\;
(2\beta_1 - \beta_1^2)\,m_{t-1}^2\,\Var\!\left[Y_t \mid \mathcal F_{t-1}\right]
\;-\;
2\beta_1(1-\beta_1)\,|m_{t-1}|\,\big[\delta\,\Var[Y_t \mid \mathcal F_{t-1}] + \sigma_\xi/\varepsilon^2\big].$
In particular, the gap remains non-negative whenever
$|m_{t-1}|$ is sufficiently large relative to $\delta$ and $\sigma_\xi/\varepsilon^2$, i.e.,
$(2-\beta_1)|m_{t-1}|\,\Var[Y_t] \ge 2(1-\beta_1)[\delta\,\Var[Y_t] + \sigma_\xi/\varepsilon^2]$.
\end{proposition}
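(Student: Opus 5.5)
We redo the variance decomposition behind Theorem~\ref{thm:var_adabelief_mvngrad}, but do not let the cross term vanish by symmetry; instead we bound it using the tracking error $\delta$ and the noise level $\sigma_\xi$.

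\textbf{Decomposition.} Write $a:=m_{t-1}$, which is $\mathcal F_{t-1}$-measurable, and $Y_t=(\sqrt{s_t}+\varepsilon)^{-1}$. Ignoring bias corrections, the two updates are
\[
\Delta_t^{\text{AB}}=\beta_1 aY_t+(1-\beta_1)g_tY_t,\qquad
\Delta_t^{\text{MV}}=\beta_1 u_{t-1}+(1-\beta_1)g_tY_t .
\]
Since $u_{t-1}$ is $\mathcal F_{t-1}$-measurable, we have $\Var[\Delta_t^{\text{MV}}\mid\mathcal F_{t-1}]=(1-\beta_1)^2\Var[g_tY_t\mid\mathcal F_{t-1}]$. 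Expanding the AdaBelief variance gives
\[
\Delta\Var_t=\beta_1^2a^2\Var[Y_t]+2\beta_1(1-\beta_1)a\,\Cov(Y_t,g_tY_t),
\]
with all moments conditional on $\mathcal F_{t-1}$.

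\textbf{The covariance term.} Split $g_t=a+\zeta_t$, so that
\[
\Cov(Y_t,g_tY_t)=a\Var[Y_t]+\Cov(Y_t,\zeta_tY_t).
\]
Substituting yields the main term
\[
\bigl(\beta_1^2+2\beta_1(1-\beta_1)\bigr)a^2\Var[Y_t]=(2\beta_1-\beta_1^2)a^2\Var[Y_t],
\]
plus the remainder $R:=2\beta_1(1-\beta_1)a\,\Cov(Y_t,\zeta_tY_t)$. Under exact symmetry of $\zeta_t$ the remainder vanishes, which recovers the theorem. Here we instead write $\zeta_t=(\mu_t-a)+\xi_t$ and set $d:=\mu_t-a$, with $|d|\le\delta$. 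Then
\[
\Cov(Y_t,\zeta_tY_t)=d\Var[Y_t]+\Cov(Y_t,\xi_tY_t).
\]
The first piece is bounded in absolute value by $\delta\Var[Y_t]$.

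\textbf{Main obstacle: bounding $\Cov(Y_t,\xi_tY_t)$.} Symmetry of $\xi_t$ alone does not kill this term, because $s_t$ depends on $(g_t-m_t)^2=(1-\beta_1)^2(d+\xi_t)^2$, which is not even in $\xi_t$ when $d\neq0$. So we bound it crudely. Since $0<Y_t\le 1/\varepsilon$,
\[
|\Cov(Y_t,\xi_tY_t)|\le \E[Y_t^2|\xi_t|]+\E[Y_t]\,\E[Y_t|\xi_t|]\le \frac{2}{\varepsilon^2}\E|\xi_t|\le\frac{2\sigma_\xi}{\varepsilon^2},
\]
using Jensen in the last step. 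This loses a factor $2$ against the target constant. To recover the stated constant $\sigma_\xi/\varepsilon^2$, I would use the sharper identity $\Cov(Y_t,\xi_tY_t)=\E[\xi_tY_t(Y_t-\E Y_t)]$. Since $|Y_t-\E Y_t|\le 1/\varepsilon$ (both values lie in $(0,1/\varepsilon]$), this is at most $\varepsilon^{-2}\E|\xi_t|\le\sigma_\xi/\varepsilon^2$.

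\textbf{Conclusion.} Combining the pieces,
\[
|R|\le 2\beta_1(1-\beta_1)|a|\bigl[\delta\Var[Y_t]+\sigma_\xi/\varepsilon^2\bigr],
\]
and therefore $\Delta\Var_t\ge(2\beta_1-\beta_1^2)a^2\Var[Y_t]-|R|$, which is the stated lower bound. The nonnegativity condition follows by dividing through by $\beta_1|a|>0$, since $2\beta_1-\beta_1^2=\beta_1(2-\beta_1)$.
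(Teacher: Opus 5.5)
Your proof is correct and follows the paper's argument: the same variance-gap identity, the same split $g_t=m_{t-1}+(\mu_t-m_{t-1})+\xi_t$, and the same bound $|\Cov(Y_t,\xi_tY_t)|\le\sigma_\xi/\varepsilon^2$; the only difference is that you reach this bound via $\E[\xi_tY_t(Y_t-\E[Y_t])]$ and Jensen, whereas the paper uses Cauchy--Schwarz in the form $\sqrt{\E[Y_t^2]\,\E[\xi_t^2Y_t^2]}$. One harmless slip: since $g_t-m_t=\beta_1(g_t-m_{t-1})$, the term entering $s_t$ is $\beta_1^2(d+\xi_t)^2$ rather than $(1-\beta_1)^2(d+\xi_t)^2$, and this does not affect the argument.
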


\subsection{Spike response and momentum carry-over}
\label{subsec:spike}

\begin{figure*}[t!]
  \centering
  \begin{subfigure}{0.45\linewidth}
    \centering
    \includegraphics[width=\linewidth]{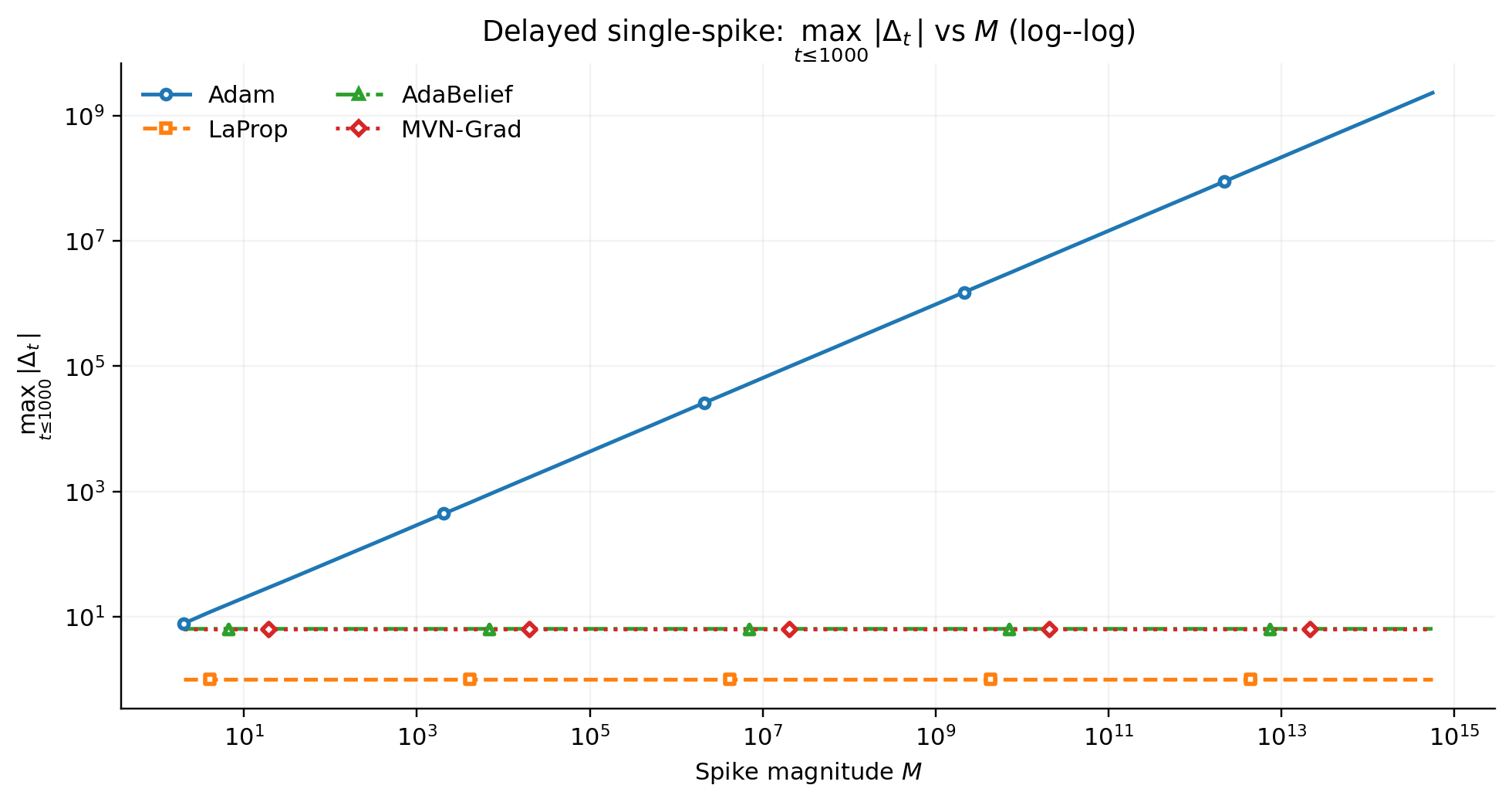}
    \caption{$T=1000$;\; $\beta_1=0.9,\ \beta_2=0.6$.}
    \label{fig:thm_ssr_1_1}
  \end{subfigure}\hfill
  \begin{subfigure}{0.45\linewidth}
    \centering
    \includegraphics[width=\linewidth]{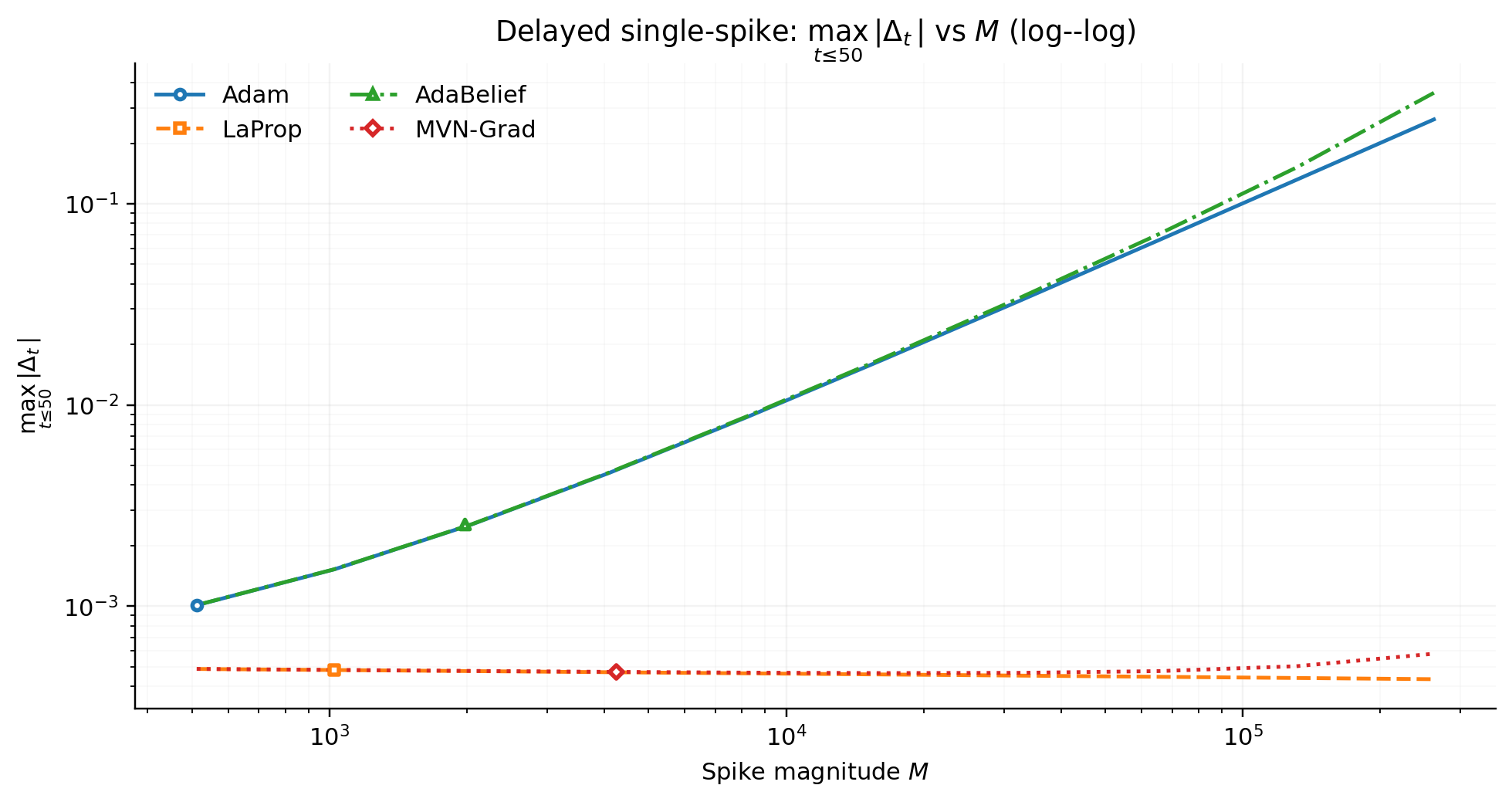}
    \caption{$T=50$;\; $\beta_1=0.99999,\ \beta_2=0.1$.}
    \label{fig:thm_ssr_1_2}
  \end{subfigure}
  \caption{Delayed single-spike robustness: peak update magnitude $\max_{0\le \tau \le T}|\Delta_\tau|$ versus spike size~$M$. Within each panel, all optimizers use the same hyperparameters. Details in Appendix~\ref{app:ssr-hparams}.}
  \label{fig:thm_ssr_1}
\end{figure*}

To make the ordering effect in Section~\ref{subsec:cond-moments} concrete, we study a spike-response model that isolates whether a large gradient is normalized before entering the momentum buffer or stored first and attenuated only later. We begin with a single spike, where the mechanism admits a clean closed-form analysis, and then give an extension to later spikes that dominate the current mean estimate. The purpose of this section is not to claim universal robustness to arbitrary heavy-tailed sequences, but to isolate a concrete carry-over mechanism induced by the ordering of momentum and normalization.

\begin{theorem}\label{thm:spike}
Consider the single-spike model $g_0 = Mu$ for $M\gg 1$ and $u\neq 0$, and $g_t=u$ for all $t\ge 1$.
Assume $v_{-1}=s_{-1}=\bar d>0$ and $\varepsilon>0$.
For LaProp and MVN-Grad, the update magnitudes are uniformly bounded: $|\Delta_t^{\mathrm{LP}}|\le C,\ |\Delta_t^{\mathrm{MV}}|\le C,\ \forall\, t\ge 0,$ where $C$ depends only on $(\beta_1,\beta_2,\bar d,u,\varepsilon)$ and not on $M$. In contrast, for Adam if we define
 $t^\star := \min\Big\{t\ge 1:\ (1-\beta_2)\beta_2^t M^2 \le 1\Big\}$, then there exists $c>0$, depending only on $(\beta_1,\beta_2,\bar d,u,\varepsilon)$, such that
$|\Delta_{t^\star}^{\mathrm{A}}|
\ge c\,(1-\beta_1)\beta_1^{t^\star}M.$

\end{theorem}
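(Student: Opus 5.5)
The plan is to write out all four recursions in closed form for the spike sequence $g_0=Mu$, $g_t=u$ ($t\ge1$). We take $m_{-1}=u_{-1}=0$ and $v_{-1}=s_{-1}=\bar d$, and omit bias corrections as in the paper's convention. The key quantities are
\[
m_t=(1-\beta_1)\beta_1^t Mu+(1-\beta_1^t)u,\qquad v_t=\beta_2^{t+1}\bar d+(1-\beta_2)\beta_2^t M^2u^2+(1-\beta_2^t)u^2 .
\]
For $s_t$ we use the same unrolling with squared deviations $(g_k-m_k)^2$. At $k=0$ this deviation is $\beta_1^2M^2u^2$. At $k\ge1$ it is $\bigl(u-m_k\bigr)^2=\beta_1^{2k}\bigl((1-\beta_1)M-1\bigr)^2u^2$.

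\textbf{Post-normalization methods.} For LaProp and MVN-Grad the update is $\Delta_t=\sum_{k=0}^t(1-\beta_1)\beta_1^{t-k}z_k$, so
\[
|\Delta_t|\le \sup_k|z_k|\sum_{k}(1-\beta_1)\beta_1^{t-k}\le\sup_k|z_k|.
\]
It therefore suffices to bound each $|z_k|$ uniformly in $M$. We split into two cases.
\begin{itemize}
\item For $k\ge1$ we have $|g_k|=|u|$, and the denominators are at least $\varepsilon$, so $|z_k|\le |u|/\varepsilon$.
\item For $k=0$, LaProp gives $v_0\ge(1-\beta_2)M^2u^2$, hence $|z_0|\le M|u|/(\sqrt{1-\beta_2}M|u|)=1/\sqrt{1-\beta_2}$.
\item For MVN-Grad at $k=0$, $s_0\ge(1-\beta_2)\beta_1^2M^2u^2$, hence $|z_0|\le 1/(\beta_1\sqrt{1-\beta_2})$ when $\beta_1>0$.
\end{itemize}
The degenerate case $\beta_1=0$ means there is no momentum. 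Then $s_0=\beta_2\bar d$ and $z_0$ is unbounded in $M$, so this case needs care: either the constant is allowed to degrade, or we use that Algorithm~\ref{alg:mvn-grad} centers with $m_t$ after the update. I would note this and assume $\beta_1\in(0,1)$. So $C=\max\{|u|/\varepsilon,\,1/(\beta_1\sqrt{1-\beta_2}),\,1/\sqrt{1-\beta_2}\}$, which is independent of $M$.

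\textbf{Adam lower bound.} Here $\Delta_t^{A}=m_t/(\sqrt{v_t}+\varepsilon)$. At $t=t^\star$ the definition gives $(1-\beta_2)\beta_2^{t^\star}M^2\le1$, so
\[
v_{t^\star}\le \bar d+u^2+u^2=:D,
\]
a bound independent of $M$. For the numerator, both terms of $m_{t^\star}$ have the sign of $u$, so $|m_{t^\star}|\ge(1-\beta_1)\beta_1^{t^\star}M|u|$. Combining these gives $|\Delta^{A}_{t^\star}|\ge c(1-\beta_1)\beta_1^{t^\star}M$ with $c=|u|/(\sqrt{D}+\varepsilon)$. We should also check that $t^\star$ is finite, which holds since $\beta_2^t\to0$ (or $t^\star=1$ trivially if the condition holds immediately).

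The main obstacle is the MVN-Grad bound at $k=0$. There the centering uses $m_0$, which already contains $(1-\beta_1)Mu$, so the deviation is only $\beta_1 Mu$. The bound therefore relies on $\beta_1>0$, and the constant should be stated accordingly. The later terms $k\ge1$ are harmless because $|g_k|=|u|$, so the growing $s_k$ only helps.
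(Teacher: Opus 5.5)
Your proposal is correct and matches the paper's proof essentially step for step. You use the same bounds: $|z_0|\le 1/\sqrt{1-\beta_2}$ for LaProp, $|z_0|\le 1/(\beta_1\sqrt{1-\beta_2})$ for MVN-Grad via $g_0-m_0=\beta_1 Mu$, $|z_k|\le |u|/\varepsilon$ for $k\ge 1$, and, for Adam, $v_{t^\star}\le \bar d+2u^2$ together with $|m_{t^\star}|\ge (1-\beta_1)\beta_1^{t^\star}M|u|$. Your convex-combination bound on $\Delta_t$ is only a cosmetic variant of the paper's induction.

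Your $\beta_1>0$ caveat is accurate and implicit in the paper's constant, which blows up as $\beta_1\to 0$. However, your aside about centering with $m_t$ is backwards: that post-update centering causes the failure at $\beta_1=0$ (it gives $s_0=\beta_2\bar d$) rather than remedying it.
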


Theorem~\ref{thm:spike} shows a contrast in how a spike propagates through the optimizer state.
For Adam, the spike enters the raw momentum $m_0$ and decays as $\beta_1^t M$, while $v_t$ forgets it as $\beta_2^t M^2$.
At time $t^\star$ when the spike contribution in $v_t$ returns to baseline, Adam retains a carry-over of order $(1-\beta_1)\beta_1^{t^\star}M$ in the numerator.
In contrast, LaProp and MVN-Grad normalize the spike at $t=0$ before it enters the momentum buffer, so the resulting updates cannot reintroduce a factor growing with $M$.

Fig.~\ref{fig:thm_ssr_1} corroborates Theorem~\ref{thm:spike}: LaProp and MVN-Grad remain bounded as $M$ grows, while Adam exhibits spike-dependent growth when the denominator forgets the spike faster than the numerator. AdaBelief can amplify spikes in certain regimes due to its variance--momentum coupling. These are mechanism stress tests, not universal robustness claims; details in Appendix~\ref{app:ssr-hparams}.

\begin{remark}\label{rem:adam_bound}
When $\beta_2>\beta_1^2$, Adam's update is uniformly bounded: $|\Delta_t^{\mathrm{A}}| \le \tfrac{1-\beta_1}{\sqrt{1-\beta_2}}\cdot\tfrac{1}{\sqrt{1-\beta_1^2/\beta_2}}$ for all $t\geq 0$. In this regime, the forgetting time $t^\star$ grows logarithmically with $M$, so $\beta_1^{t^\star}$ decays and the lower bound in Theorem~\ref{thm:spike} becomes loose. Outside this regime (e.g., $\beta_1\approx 1$ with smaller $\beta_2$), the momentum remnant can yield spike-dependent growth over practical $M$, as shown in Fig.~\ref{fig:thm_ssr_1}.
\end{remark}

\begin{corollary}%
\label{cor:multi_spike}
For LaProp, the uniform boundedness in Theorem~\ref{thm:spike} extends to arbitrary gradient sequences: since $v_t = \beta_2 v_{t-1} + (1-\beta_2)g_t^2 \ge (1-\beta_2)g_t^2$, any spike $g_t = M$ yields a normalized input $|z_t| \le M/(\sqrt{(1-\beta_2)M^2}+\varepsilon) \le 1/\sqrt{1-\beta_2}$, uniformly in $M$.
For MVN-Grad, the variance proxy uses $s_t = \beta_2 s_{t-1} + (1-\beta_2)\beta_1^2(g_t - m_{t-1})^2$, so
$|z_t| = |g_t|/(\sqrt{s_t}+\varepsilon) \le |g_t|/(\beta_1\sqrt{1-\beta_2}\,|g_t - m_{t-1}|+\varepsilon)$.
For any spike satisfying $|g_t| \ge 2|m_{t-1}|$, we have $|g_t - m_{t-1}| \ge |g_t|/2$, giving $|z_t| \le 2/(\beta_1\sqrt{1-\beta_2})$.
The momentum buffer $u_t$ is a convex combination of the normalized inputs, any such spike contributes a bounded amount to the update. Thus, for MVN-Grad, spike responses remain uniformly bounded for any sequence of spikes satisfying $|g_t|\ge 2|m_{t-1}|$ at the spike times.
\end{corollary}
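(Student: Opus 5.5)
The argument splits into two parts. For LaProp we use a one-step lower bound on the normalizer and the fact that an EMA is a convex combination. For MVN-Grad we rewrite the variance proxy in terms of the innovation $g_t-m_{t-1}$, get the analogous lower bound on $s_t$ at spike times, and then control the steps between spikes.

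\textbf{LaProp.} From $v_t=\beta_2 v_{t-1}+(1-\beta_2)g_t^2$ and $v_{t-1}\ge 0$ we get $v_t\ge (1-\beta_2)g_t^2$. Hence
\[
|z_t|=\frac{|g_t|}{\sqrt{v_t}+\varepsilon}\le \frac{1}{\sqrt{1-\beta_2}},
\]
which is uniform in $g_t$, so in particular uniform in any spike size $M$. The update is $u_t=\beta_1u_{t-1}+(1-\beta_1)z_t$. With $u_{-1}=0$ this unrolls to $u_t=\sum_{k\le t}(1-\beta_1)\beta_1^{t-k}z_k$, whose weights sum to at most one. Induction on $t$ then gives $|u_t|\le \sup_k|z_k|\le 1/\sqrt{1-\beta_2}$.

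\textbf{MVN-Grad.} First substitute $m_t=\beta_1m_{t-1}+(1-\beta_1)g_t$ into the centered residual:
\[
g_t-m_t=\beta_1(g_t-m_{t-1}).
\]
The recursion therefore becomes $s_t=\beta_2s_{t-1}+(1-\beta_2)\beta_1^2(g_t-m_{t-1})^2$, with $\varepsilon_s=0$ as in the theory. Dropping the nonnegative carry-over term gives $\sqrt{s_t}\ge \beta_1\sqrt{1-\beta_2}\,|g_t-m_{t-1}|$, and so
\[
|z_t|\le \frac{|g_t|}{\beta_1\sqrt{1-\beta_2}\,|g_t-m_{t-1}|+\varepsilon}.
\]
At any spike time with $|g_t|\ge 2|m_{t-1}|$, the reverse triangle inequality gives $|g_t-m_{t-1}|\ge |g_t|-|m_{t-1}|\ge |g_t|/2$, hence $|z_t|\le 2/(\beta_1\sqrt{1-\beta_2})$, a bound independent of the spike magnitude. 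Each spike therefore enters $u_t$ with weight $(1-\beta_1)\beta_1^{t-k}$ times a uniformly bounded quantity. The convex-combination argument above then bounds the spike contributions to $|u_t|$ by $\max\{2/(\beta_1\sqrt{1-\beta_2}),\ \sup_{\text{non-spike }k}|z_k|\}$.

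\textbf{Main obstacle: the non-spike steps.} At those steps $g_t$ may be close to $m_{t-1}$, so the lower bound on $s_t$ degenerates. Pushing the estimate through only yields $|z_k|\le |g_k|/\varepsilon$, and we cannot control this uniformly in $M$ without an assumption on the ordinary gradients. The plan is to state the claim as in the corollary: the \emph{increment to the update attributable to spikes} is bounded uniformly in $M$, while the baseline (non-spike) inputs are bounded by a quantity independent of the spike sizes, namely $\sup|g_k|/\varepsilon$ over non-spike times. This baseline is finite, for instance, for the bounded baseline sequence of Theorem~\ref{thm:spike}. With this, $u_t$ is a convex combination of terms each bounded independently of $M$.

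The condition $|g_t|\ge 2|m_{t-1}|$ matters precisely because a spike arriving while the mean buffer is already inflated by an earlier spike could sit close to $m_{t-1}$ and fail to be normalized. I would add a remark making that explicit.
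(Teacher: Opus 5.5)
Your proposal is correct and follows the paper's own argument, which is contained in the corollary statement itself: the lower bounds $v_t\ge(1-\beta_2)g_t^2$ and, via $g_t-m_t=\beta_1(g_t-m_{t-1})$, $s_t\ge(1-\beta_2)\beta_1^2(g_t-m_{t-1})^2$; the reverse triangle inequality at spike times; and the convex-combination bound on $u_t$. Your explicit bound $|z_k|\le|g_k|/\varepsilon$ at non-spike steps, with those gradients bounded independently of $M$ as in Theorem~\ref{thm:spike}, simply makes precise an assumption the paper's corollary leaves implicit.
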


\subsection{Second moment vs.\ variance normalization}
\label{subsec:var-vs-second}

We now consider a complementary low-variance regime common late in training, where gradients align closely with the signal. Here, the second moment mixes signal and noise ($\mathbb E[g_t^2] = (\mathbb E[g_t])^2 + \Var(g_t)$), whereas variance isolates the noise. When $\Var(g_t) \ll (\mathbb E[g_t])^2$, second-moment normalization becomes overly conservative and collapses to sign-based updates. Conversely, variance normalization preserves magnitude information up to an inverse-noise preconditioner.

To formalize this, we adopt a standard approximation for exponential moving averages. Assuming the gradient distribution varies slowly over the effective window $(1-\beta_2)^{-1}$, we decompose the stochastic gradient into a deterministic signal and a slowly varying noise term. Under this locally stationary approximation, bias-corrected EMAs estimate the gradient's second moment and variance. This requires only moment stability over the EMA window, not global stationarity.

\begin{assumption}
\label{ass:stationary}
At iteration $t$, the stochastic gradient admits the decomposition $
g_t = \nabla F(x_{t-1}) + \xi_t$,
where $\mathbb E[\xi_t \mid \mathcal F_{t-1}] = 0$ and
$\Var(\xi_{t,i} \mid \mathcal F_{t-1}) \approx \sigma_i^2$ remains approximately constant over the EMA window.
The estimators satisfy 
$\mathbb E[ v_{t,i}] \approx \nabla_i F(x_{t-1})^2 + \sigma_i^2$ and $
\mathbb E[ s_{t,i}] \approx \sigma_i^2$.

\end{assumption}

Under Assumption~\ref{ass:stationary}, both LaProp and MVN-Grad admit the coordinate-wise recursion 
$
x_t = x_{t-1} - \eta\,u_t$, where  $u_t = \beta_1 u_{t-1} + (1-\beta_1) z_t$,
but they differ in the normalized direction $z_t$, i.e., 
$z_t^{\mathrm{LP}} = \mathrm{sign}(g_t)\,\sqrt{1\big/(1+\tfrac{v_t-g_t^2}{g_t^2})}$ and $z_t^{\mathrm{MV}} = g_t/\sqrt{s_t}$.
Thus, the comparison reduces to characterizing these directions in the low-variance regime.
To isolate the effect of the normalizer and suppress stochastic fluctuations, we formalize this regime as follows.

\begin{assumption}[Small variance]
\label{ass:small_variance}
Along the trajectory, $g_{t,i} \approx \nabla_i F(x_{t-1})$ and $\sigma_i^2 \ll \nabla_i F(x_{t-1})^2$.
\end{assumption}

Under Assumptions~\ref{ass:stationary}--\ref{ass:small_variance}, the two normalization schemes admit distinct deterministic limits.
We analyze them in turn.
For LaProp, which uses second-moment normalization, the EMA satisfies
$ v_t \approx \nabla_i F(x_{t-1})^2$.
As a result, the corrective factor obeys
$\sqrt{\frac{1}{1+(v_t-g_t^2)/g_t^2}} \approx 1,$ 

and the normalized direction reduces to $
z_t^{\mathrm{LP}} \approx \mathrm{sign}(g_t) \approx \mathrm{sign}(\nabla_i F(x_{t-1}))$.
Thus, in the low-variance regime, LaProp behaves like a momentum-smoothed sign method and largely discards gradient magnitude information.
In contrast, MVN-Grad uses variance normalization, for which $\hat s_t$ estimates the coordinate-wise variance $\sigma_i^2$.
When this estimate is stable and strictly positive on the EMA time scale, the normalized direction satisfies 
$z_t^{\mathrm{MV}} = \frac{g_t}{\sqrt{ s_t}}
\approx \frac{\nabla_i F(x_{t-1})}{\sigma_i}$.
Consequently, MVN-Grad preserves gradient magnitudes up to an inverse-noise preconditioner and avoids collapse to a sign direction.
Although the above discussion applies broadly, focusing on smooth objectives allows a precise separation and sharp convergence rates in low-variance regimes.

\begin{theorem}
\label{thm:separation_MVN-Grad_LaProp_smallvar}
Let $F$ be $L$-smooth and bounded below on $\mathbb{R}^d$, and suppose Assumptions~\ref{ass:stationary}--\ref{ass:small_variance} hold.
In addition, assume a high-SNR stochastic oracle: $g_t=\nabla F(x_{t-1})+\xi_t$ with $\E[\xi_t\!\mid\!\mathcal F_{t-1}]=0$, and for some $\delta\in(0,1)$ and $\sigma\ge 0$, $|\xi_{t,i}|\le \delta\,|\nabla_i F(x_{t-1})|$ a.s. and $\E[\xi_{t,i}^2\mid\mathcal F_{t-1}]\le \sigma^2$ $\forall i$.

\noindent (i) (MVN-Grad, upper bound.) Assume isotropic noise $\sigma_i=\sigma>0$ for all $i$ and a stable variance estimate
$ s_t \approx \sigma^2\mathbf 1$
along the trajectory, so that the limiting MVN-Grad direction satisfies $z_t^{\mathrm{MV}}\approx g_t/\sigma$.
Define $\alpha:=\eta(1-\beta_1)/\sigma$ and assume $\alpha \le (1-\beta_1^2)/L$.
Then it suffices to take $T=\mathcal{O}(1/\varepsilon^2)$ iterations to ensure $\min_{0\le t\le T}\E\|\nabla F(x_t)\|_2\le \varepsilon$.

\noindent (ii) (LaProp, lower bound.) For every horizon $T$, there exists an $L$-smooth function $F$ (satisfying the same assumptions and oracle model above) and an initialization $x_0=x_0(T)$  such that, in the low-variance phase where
$z_t^{\mathrm{LP}}\approx \sign(g_t)$, the following holds.
For LaProp with step size $\eta_t = \tfrac{c}{L\sqrt{t+1}}$, achieving $
\min_{0 \le t \le T} \|\nabla F(x_t)\|_2 \le \varepsilon $
requires $
T = \Omega\!\left(\tfrac{d}{\varepsilon^2}\right)$ iterations.
\end{theorem}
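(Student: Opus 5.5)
For part (i), I would treat the limiting MVN-Grad iteration as SGD with heavy-ball momentum. With $z_t=g_t/\sigma$ and $u_t=\beta_1u_{t-1}+(1-\beta_1)z_t$, the update reads
\[
x_t=x_{t-1}-\alpha\sum_{k\le t}\beta_1^{t-k}g_k,
\qquad \alpha=\eta(1-\beta_1)/\sigma .
\]
The analysis would use the auxiliary sequence $y_t=x_t+\frac{\beta_1}{1-\beta_1}(x_t-x_{t-1})$. It satisfies $y_t=y_{t-1}-\frac{\alpha}{1-\beta_1}g_t$ exactly.

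The steps would be as follows.
\begin{enumerate}
\item Apply $L$-smoothness to $F(y_t)$. Since $\E[g_t\mid\mathcal F_{t-1}]=\nabla F(x_{t-1})$, this gives a descent term $-\frac{\alpha}{1-\beta_1}\langle\nabla F(y_{t-1}),\nabla F(x_{t-1})\rangle$ and a quadratic term $\frac{L\alpha^2}{2(1-\beta_1)^2}\E\|g_t\|^2$.
\item Write $\langle\nabla F(y_{t-1}),\nabla F(x_{t-1})\rangle\ge\|\nabla F(x_{t-1})\|^2-L\|y_{t-1}-x_{t-1}\|\,\|\nabla F(x_{t-1})\|$. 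Bound $\|y_{t-1}-x_{t-1}\|=\frac{\beta_1}{1-\beta_1}\|x_{t-1}-x_{t-2}\|$ by the geometric sum $\alpha\sum_k\beta_1^{t-1-k}\|g_k\|$.
\item Control $\E\|g_k\|^2\le(1+\delta)^2\|\nabla F(x_{k-1})\|^2$ using the high-SNR oracle (or $\|\nabla F\|^2+d\sigma^2$ in general).
\item Sum over $t$ and swap the order of the geometric double sums. The condition $\alpha\le(1-\beta_1^2)/L$ is meant to make the resulting coefficient of $\sum_t\E\|\nabla F(x_t)\|^2$ at least a constant times $\alpha$.
\end{enumerate}
The main obstacle is this last step: calibrating constants so that the momentum error and the quadratic term are absorbed by the descent term under exactly the stated step-size condition. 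If the constants do not close as written, I would accept a modified absolute constant in the step-size bound.

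Steps 1–4 yield $\frac1T\sum_t\E\|\nabla F(x_t)\|^2\lesssim (F(x_0)-F^\star)/(\alpha T)$, plus a noise term of order $L\alpha\sigma^2$ if one does not use the multiplicative bound. Because the oracle noise is multiplicative, the noise term is absorbed and no variance floor remains. Jensen then gives $\min_t\E\|\nabla F(x_t)\|\le\varepsilon$ once $T=O(1/\varepsilon^2)$, with $\alpha$ held constant.

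For part (ii), I would construct a separable hard instance for sign-type methods. Take $F(x)=\sum_{i=1}^d f(x_i)$ with $f$ a one-dimensional $L$-smooth Huber-like function, quadratic near $0$ and with slope of magnitude about $\varepsilon/\sqrt d$ far away. Choose the noise as $\xi_{t,i}=\delta'\nabla_iF\cdot r_{t,i}$ with Rademacher $r_{t,i}$, so it satisfies the oracle assumptions.

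The argument would proceed in three steps.
\begin{enumerate}
\item Since $|\xi_{t,i}|<|\nabla_iF|$, $\operatorname{sign}(g_{t,i})=\operatorname{sign}(\nabla_iF)$. The LaProp direction $u_t$ is then a convex combination of signs, so every coordinate moves by at most $\eta_t$ per step, independently of gradient size.
\item Start each coordinate at distance $D(T)$ from the minimizer and take $D(T)\gtrsim\sum_{t\le T}\eta_t\asymp c\sqrt T/L$. Then every coordinate remains on the linear plateau for all $t\le T$. Hence $\|\nabla F(x_t)\|_2=\sqrt d\cdot\varepsilon/\sqrt d\cdot\Theta(1)$, which stays above $\varepsilon$.
\item To convert this into $T=\Omega(d/\varepsilon^2)$, relate the plateau slope, smoothness and the initial gap $F(x_0)-F^\star\approx d\,D\,\varepsilon/\sqrt d$. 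Normalizing $F(x_0)-F^\star=\Delta_0$ forces $D\sim\Delta_0/(\sqrt d\,\varepsilon)$, and the travel condition $\sqrt T/L\gtrsim D$ then gives $T\gtrsim\Delta_0^2/(d\varepsilon^2)$.
\end{enumerate}
This last step goes the wrong way, so the bookkeeping must be redone. The bound should be stated at fixed per-coordinate distance, i.e.\ with $\Delta_0$ scaling with $d$. Comparing with MVN-Grad, whose constant in part (i) does not carry this $d$ factor, is what yields the dimension-dependent separation. Getting this normalization consistent is the delicate point of (ii).
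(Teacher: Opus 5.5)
There are genuine gaps in both parts.

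\textbf{Part (i).} The virtual-iterate route is a legitimate technique, but it does not reach the step-size range the theorem assumes, and the shortfall is not an absolute constant. Set $\gamma=\alpha/(1-\beta_1)$ and bound $\|x_{t-1}-x_{t-2}\|^2\le\frac{\alpha^2}{1-\beta_1}\sum_k\beta_1^{t-1-k}\|g_k\|^2$. After summing the geometric tails, your mismatch term $\gamma L^2\|y_{t-1}-x_{t-1}\|^2$ has relative size $L^2\alpha^2\beta_1^2(1+\delta)^2/(1-\beta_1)^4$ against the descent term. Closing the argument therefore forces $L\alpha\lesssim(1-\beta_1)^2/\beta_1$. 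The hypothesis allows $L\alpha\le 1-\beta_1^2\approx 2(1-\beta_1)$, which is larger by a factor of order $(1-\beta_1)^{-1}$, so you would prove (i) only on a strictly smaller range. A smaller point: use $\E[\|g_t\|^2\mid\mathcal F_{t-1}]\le(1+\delta^2)\|\nabla F(x_{t-1})\|^2$, which follows from $\E[\xi_t\mid\mathcal F_{t-1}]=0$. The cruder $(1+\delta)^2$ already fails at $\beta_1=0$, $\alpha=1/L$ once $\delta>\sqrt2-1$.

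The paper avoids the virtual iterate and works on $x_t$ in heavy-ball form $x_t=x_{t-1}+\beta_1(x_{t-1}-x_{t-2})-\alpha g_t$. It uses the energy $\Phi_t=F(x_t)+\frac{\beta_1^2}{2\alpha}\|h_t\|^2$ with $h_t:=x_t-x_{t-1}$. Squaring the exact relation $h_t+\alpha\nabla F(x_{t-1})=\beta_1h_{t-1}-\alpha\xi_t$ and taking conditional expectations gives an exact expression for $\E[\langle\nabla F(x_{t-1}),h_t\rangle\mid\mathcal F_{t-1}]$. Inserting it into the smoothness inequality makes the coefficient of $\E\|h_t\|^2$ equal to $\frac{L}{2}-\frac{1-\beta_1^2}{2\alpha}$, which is nonpositive exactly under the stated condition. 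This yields $\E[\Phi_t\mid\mathcal F_{t-1}]\le\Phi_{t-1}-\frac{\alpha}{2}(1-\delta^2)\|\nabla F(x_{t-1})\|^2$ with no residual to absorb, and telescoping finishes the argument.

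\textbf{Part (ii).} As your step 3 suspects, the plateau construction has the wrong mechanism, and the fix is not a renormalization of $\Delta_0$. The statement lets $x_0$ depend on $T$, and the paper never normalizes $\Delta_0$. On a linear plateau a sign-type method travels about $\sum_{t<T}\eta_t\approx 2c\sqrt T/L$ per coordinate regardless of the slope. The only bound this gives is a travel-time bound $T\gtrsim(LD/c)^2$ in the per-coordinate distance $D$, which contains neither $d$ nor $\varepsilon$. Taking $D\gtrsim\sum_t\eta_t$, as in your step 2, only says the target is not reached by time $T$, for every $T$; it carries no $d/\varepsilon^2$ information.

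The scaling comes instead from the resolution of the prescribed step size near the minimizer. The paper takes $F=\frac{L}{2}\|x\|^2$ and $x_0=a\mathbf 1$ with $a=\sum_{t=0}^{T-1}\eta_t(1-\beta_1^{t+1})+\frac{\eta_{T-1}}{2}(1-\beta_1^{T})$. While $x_{t-1}\succ 0$, the oracle bound $|\xi_{t,i}|\le\delta|\nabla_iF(x_{t-1})|$ forces $\sign(g_t)=\mathbf 1$, so $u_t=(1-\beta_1^{t+1})\mathbf 1$. By induction all coordinates decrease in lockstep, stay positive, and end at $x_T=\frac{\eta_{T-1}}{2}(1-\beta_1^T)\mathbf 1$. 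Since $\|x_t\|$ is decreasing, $\min_{t\le T}\|\nabla F(x_t)\|_2=\frac{L}{2}(1-\beta_1^T)\eta_{T-1}\sqrt d\ge\frac{c\sqrt d}{4\sqrt T}$ once $1-\beta_1^T\ge\frac{1}{2}$. This forces $T\ge c^2d/(16\varepsilon^2)$. The $\sqrt d$ is the $\ell_2$ norm of a constant vector whose entries are pinned at the scale of the last step $\eta_{T-1}=c/(L\sqrt T)$.
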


Theorem~\ref{thm:separation_MVN-Grad_LaProp_smallvar} shows a sharp separation between variance-based and second-moment normalization in low-variance regimes. By comparing MVN-Grad and LaProp, two methods that share the same normalize-then-momentum structure, we isolate the effect of the normalizer itself. In this setting, variance normalization yields dimension-free convergence, whereas second-moment normalization induces sign-like dynamics and dimension-dependent slowdown. A similar separation holds for Adam and AdaBelief, though it is confounded by differences in update ordering.

\subsection{General convergence guarantee}
\label{subsec:general-convergence}

Unlike prior results so far, we next establish MVN-Grad's general nonconvex convergence under standard stochastic conditions, without requiring high-SNR or stabilized-variance regimes.

\begin{theorem}
\label{thm:mvngrad_convergence}
Let $F:\mathbb{R}^d \to \mathbb{R}$ be $L$-smooth with
$F_\star := \inf F > -\infty$. Let
$g_t = \nabla F(x_{t-1}) + \xi_t$ with
$\E[\xi_t \mid \mathcal F_{t-1}] = 0$, $\|g_t\|_\infty \le G$ a.s.,
and $\E[\xi_{t,i}^2 \mid \mathcal F_{t-1}] \le \sigma_\xi^2$.
Consider MVN-Grad with constant learning rate $\eta > 0$,
momentum $\beta_1 \in (0,1)$, EMA parameter $\beta_2 \in [0,1)$,
and $\varepsilon > 0$. Also, assume that $\varepsilon \le G$ and
that the stepsize satisfies
$\eta \le c\frac{(1-\beta_1)^2\varepsilon}{L\beta_1^2}$
for a constant $c>0$. Then
\begin{equation}\label{eq:general_rate}
\vspace{-2mm}
\frac{1}{T}\sum_{t=0}^{T-1}
\E\!\left[\left\|\nabla F(x_t)\right\|^2\right]
\le
\frac{C_1 G\Delta_0}{\eta T}
+
C_1 L\eta\,\frac{dG^3}{\varepsilon^2}
+
\frac{C_2 d(1-\beta_2)G^3\sigma_\xi^3}{\varepsilon^4},
\end{equation}
where $\Delta_0 := F(x_0)-F_\star$, and $C_1,C_2>0$ depend only on
$\beta_1$.
\end{theorem}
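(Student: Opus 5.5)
The proof will be a standard Lyapunov / virtual-sequence argument for momentum methods, applied to the normalized gradients $z_t = g_t\odot D_t$ with $D_t := 1/(\sqrt{s_t}+\varepsilon)$ taken coordinate-wise. We ignore bias correction, as the section allows.

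\textbf{Virtual sequence.} Since $x_t = x_{t-1}-\eta u_t$ with $u_t=\beta_1u_{t-1}+(1-\beta_1)z_t$, define
\[
y_t := x_t - \tfrac{\eta\beta_1}{1-\beta_1}u_t,
\]
so that $y_t = y_{t-1} - \eta z_t$, a plain SGD-like recursion. Applying $L$-smoothness of $F$ along $y$ gives
\[
F(y_t)\le F(y_{t-1}) - \eta\langle \nabla F(y_{t-1}), z_t\rangle + \tfrac{L\eta^2}{2}\|z_t\|^2 .
\]

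\textbf{Key bounds.} We use $\|z_t\|_\infty\le G/\varepsilon$, hence $\|z_t\|^2\le dG^2/\varepsilon^2$, and also $\|u_t\|^2\le dG^2/\varepsilon^2$. We also use $D_t\ge 1/(G+\varepsilon)\ge 1/(2G)$, since $s_t\le G^2$ (each $(g-m)^2\le 4G^2$, which changes only constants).

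\textbf{Splitting the inner product.} We replace $\nabla F(y_{t-1})$ by $\nabla F(x_{t-1})$ at cost $L\|y_{t-1}-x_{t-1}\|\,\|z_t\|\lesssim L\eta\frac{\beta_1}{1-\beta_1}\,dG^2/\varepsilon^2$. The stepsize condition $\eta\lesssim(1-\beta_1)^2\varepsilon/(L\beta_1^2)$ is used to absorb these momentum terms into the $C_1L\eta dG^3/\varepsilon^2$ term; the extra factor $G$ comes from dividing by the lower bound $1/(2G)$ on the preconditioner at the end. Next write $D_t = \tilde D_t + (D_t-\tilde D_t)$, where $\tilde D_t$ is the $\mathcal F_{t-1}$-measurable surrogate
\[
\tilde D_t := \frac{1}{\sqrt{\beta_2 s_{t-1}}+\varepsilon}.
\]
Then $\E[\langle\nabla F(x_{t-1}), g_t\tilde D_t\rangle\mid\mathcal F_{t-1}] = \sum_i \tilde D_{t,i}\,\nabla_iF^2 \ge \|\nabla F(x_{t-1})\|^2/(2G)$. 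This is the descent term, and it explains the factor $G$ multiplying $\Delta_0/(\eta T)$.

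\textbf{Main obstacle: the correlation error.} The hard part is the term $\E\langle \nabla F, g_t(D_t-\tilde D_t)\rangle$, because $D_t$ depends on $g_t$. Coordinate-wise,
\[
|D_t-\tilde D_t|\le \frac{|\sqrt{s_t}-\sqrt{\beta_2 s_{t-1}}|}{\varepsilon^2}\le \frac{\sqrt{(1-\beta_2)}\,|g_t-m_t|}{\varepsilon^2}.
\]
Note $g_t-m_t = \beta_1(g_t-m_{t-1})$. Bounding $|\nabla_iF|\,|g_{t,i}|\le G^2$ and the deviation through the noise via Cauchy--Schwarz and $\E\xi^2\le\sigma_\xi^2$ should produce a term of order $d\,G^{2}\sqrt{1-\beta_2}\,\sigma_\xi/\varepsilon^2$. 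To reach the stated form $d(1-\beta_2)G^3\sigma_\xi^3/\varepsilon^4$ I would instead apply Young's inequality: put half of the term against the descent term $\|\nabla F\|^2/(4G)$ and leave a squared remainder $G\,\E[g_t^2(D_t-\tilde D_t)^2]\lesssim G\cdot G^2(1-\beta_2)\,\E(g-m)^2/\varepsilon^4$. This remainder brings in $(1-\beta_2)$ and the $\varepsilon^{-4}$. The $m_{t-1}$-versus-mean tracking part is controlled with powers of $\sigma_\xi$ and $G$, which I expect is where the odd power $\sigma_\xi^3$ appears after a Hölder step; matching that exponent exactly is the delicate bookkeeping.

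\textbf{Conclusion.} Finally, sum over $t$, telescope $F(y_0)-F(y_T)\le\Delta_0$ (using $y_0=x_0$ and $F\ge F_\star$), multiply by $2G/(\eta T)$, and collect the three terms.
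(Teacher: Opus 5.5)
Your virtual-sequence reduction and your handling of the momentum mismatch are fine. In fact, the direct bound $L\|y_{t-1}-x_{t-1}\|\,\|z_t\|\le L\eta\frac{\beta_1}{1-\beta_1}\frac{dG^2}{\varepsilon^2}$ already gives a term of the form $C_1L\eta\,dG^3/\varepsilon^2$ without using the stepsize condition.

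The genuine gap is in the correlation step, and it is not a bookkeeping issue. Your surrogate $\tilde D_t=1/(\sqrt{\beta_2 s_{t-1}}+\varepsilon)$ throws away the entire new increment $(1-\beta_2)(g_t-m_t)^2=(1-\beta_2)\beta_1^2(\delta_t+\xi_t)^2$, where $\delta_t:=\nabla F(x_{t-1})-m_{t-1}$ is the EMA tracking error. That error is \emph{not} controlled by $\sigma_\xi$. Already at $t=1$, with $m_0=s_0=0$ and a noiseless oracle, you get $\delta_1=\nabla F(x_0)$ and $\tilde D_1=1/\varepsilon$, while coordinate-wise $D_1=1/(\beta_1\sqrt{1-\beta_2}\,|\nabla_iF(x_0)|+\varepsilon)$. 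So your ``descent term'' $\sum_i\tilde D_{t,i}\nabla_iF(x_{t-1})^2$ can vastly overstate the true progress. The correction $\E\langle\nabla F(x_{t-1}),g_t\odot(D_t-\tilde D_t)\rangle$, which you bound in absolute value, is then comparable to the descent itself even with zero noise.

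Quantitatively, your Young remainder $G\,\E[g_t^2(D_t-\tilde D_t)^2]$ is of order $G^3(1-\beta_2)(\delta_t^2+\sigma_\xi^2)/\varepsilon^4$ per coordinate. That gives a floor of order $d(1-\beta_2)G^6/\varepsilon^4$, which does not vanish as $\sigma_\xi\to 0$. Even the pure-noise part scales as $G^4\sigma_\xi^2$ rather than $G^3\sigma_\xi^3$, because your error term carries no explicit factor of $\xi_t$. No H\"older step on the tracking part repairs this. Your first-order estimate $dG^2\sqrt{1-\beta_2}\,\sigma_\xi/\varepsilon^2$ fails for the same reason, since $|g_t-m_t|=\beta_1|\delta_t+\xi_t|$.

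What is missing is a covariance decomposition combined with a \emph{noise-free} surrogate, which is how the paper argues.

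First, write $\E_{t-1}[z_{t,i}]=\nabla_iF(x_{t-1})\,W_{t,i}+c_{t,i}$, with $W_{t,i}=\E_{t-1}[w_{t,i}]\ge 1/(2G+\varepsilon)$ and $c_{t,i}=\Cov_{t-1}(\xi_{t,i},w_{t,i})$. Then the descent coefficient is exact, and the only error carries an explicit factor $\xi_{t,i}$.

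Second, evaluate $c_{t,i}=\E_{t-1}[\xi_{t,i}(w_{t,i}-\bar w_{t,i})]$, where $\bar w_{t,i}$ is built from $\bar s_{t,i}=\beta_2 s_{t-1,i}+(1-\beta_2)\beta_1^2\delta_{t,i}^2$, the value of $s_{t,i}$ at $\xi_{t,i}=0$. This surrogate is $\mathcal F_{t-1}$-measurable and keeps the tracking error inside it.

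Third, observe that $s_{t,i}-\bar s_{t,i}=(1-\beta_2)\beta_1^2(2\xi_{t,i}\delta_{t,i}+\xi_{t,i}^2)$ is first order in the noise. Using $|\sqrt a-\sqrt b|\le\sqrt{|a-b|}$, Cauchy--Schwarz, and $\sigma_\xi\le 2G$ without loss of generality, you get $|c_{t,i}|\le\beta_1\sqrt{6G(1-\beta_2)}\,\sigma_\xi^{3/2}/\varepsilon^2$.

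The odd power $\sigma_\xi^3$ then comes from squaring $c_{t,i}$ in the Young term $\sum_i c_{t,i}^2/W_{t,i}$, that is, from $\E\xi^2$ times $\E|\xi|$. Multiplying by $(2G+\varepsilon)\le 3G$ twice (once from $1/W_{t,i}$, once from the final normalization) yields the stated $d(1-\beta_2)G^3\sigma_\xi^3/\varepsilon^4$.
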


The third term in~\eqref{eq:general_rate} is a \emph{bias floor} arising from the correlation between $g_{t,i}$ and the normalizer $w_{t,i} = 1/(\sqrt{s_{t,i}}+\varepsilon)$: since $s_t$ depends on $g_t$ through $(g_t - m_t)^2$, the normalized gradient $z_t = g_t \cdot w_t$ is not conditionally unbiased, and the resulting covariance contributes an irreducible error. This floor scales linearly with $(1-\beta_2)$, reflecting that a faster-adapting denominator (smaller $\beta_2$) amplifies the one-step correlation between numerator and denominator. Setting $\beta_2 = 1-1/T$ makes the bias floor $O(dG^3\sigma_\xi^3/(T\varepsilon^4))$, recovering the standard nonconvex rate $O(G^2\sqrt{dL\Delta_0}\big/(\varepsilon\sqrt{T}))$.

\textbf{Challenges in the analysis.}
The main difficulty in analyzing MVN-Grad is the \emph{correlation between the numerator and denominator} of the normalized gradient $z_t = g_t/(\sqrt{s_t}+\varepsilon)$.
In Adam, $v_t = \beta_2 v_{t-1} + (1-\beta_2)g_t^2$ ensures $\sqrt{v_{t,i}} \ge \sqrt{1-\beta_2}\,|g_{t,i}|$, yielding the uniform bound $\|z_t\|_\infty \le 1/\sqrt{1-\beta_2}$ \emph{independent of gradient magnitudes}.
For MVN-Grad, the variance proxy $s_t = \beta_2 s_{t-1} + (1-\beta_2)(g_t - m_t)^2$ tracks deviations from the mean, so when $g_t \approx m_t$ (high signal-to-noise), $s_t$ can be much smaller than $g_t^2$.
Consequently, $z_t$ is bounded by $G/\varepsilon$, introducing $\varepsilon^{-1}$ into the rate to avoid sign collapse. Moreover, because $s_t$ depends on $g_t$, $z_t$ is biased and $\E[z_t \mid \mathcal{F}_{t-1}]$ cannot factor out the denominator as in standard SGD.
The proof uses a virtual-iterate reduction to remove the momentum recursion and a covariance decomposition to control the $g_t$--$s_t$ dependence. The resulting covariance residual gives the bias floor in~\eqref{eq:general_rate}; details in Appendix~\ref{app:proof_convergence}.

\textbf{Comparison with Adam $\&$ LaProp.} $O(G\sqrt{d}/(\sqrt{(1-\beta_2)T}))$~\citep{defossez2020simple} is the best known nonconvex rate for Adam, with no $\varepsilon$ dependence. After optimizing $\eta$, the leading term in~\eqref{eq:general_rate} is $O_{\beta_1}({G^2\sqrt{dL\Delta_0}}/{(\varepsilon\sqrt T)})$.
This looser bound is the cost of variance normalization: the variance proxy can shrink much smaller than $g_t^2$, so the normalized direction is controlled only via $\sqrt{s_{t,i}}+\varepsilon\ge \varepsilon$. Crucially, this flexibility prevents sign collapse (Theorem~\ref{thm:separation_MVN-Grad_LaProp_smallvar}), linking $\varepsilon$-dependence directly to the algorithm's advantages.
While LaProp shares Adam's $1/\sqrt{1-\beta_2}$ bound, Theorem~\ref{thm:separation_MVN-Grad_LaProp_smallvar}(ii) shows it pays an $\Omega(d/\varepsilon^2)$ lower bound in the stabilized regime, whereas MVN-Grad achieves dimension-free $O(1/\varepsilon^2)$.
Indeed, in the stabilized regime ($s_t = \Theta(\sigma^2)$), the effective step size becomes $\eta/\sigma$, replacing the $\varepsilon$-dependence in~\eqref{eq:general_rate} with the noise scale $\sigma$ and eliminating the $(1-\beta_2)$-dependent bias floor. Under exact stabilization ($s_t = \sigma^2\mathbf{1}$), Theorem~\ref{thm:separation_MVN-Grad_LaProp_smallvar}(i) yields an $O(1/T)$ rate via a Heavy-Ball potential argument, strictly improving upon Adam whenever $\sigma \ll G/\sqrt{1-\beta_2}$.

\vspace{2mm}

\section{Experiments}
\label{sec:experiments}

We compare MVN-Grad with Adam, AdaBelief, and LaProp on CIFAR-100/ResNet-18 and GPT-style language modeling. Non-optimizer hyperparameters are fixed, and optimizer hyperparameters are tuned using a coarse-to-focused sweep. Tables report selected-configuration mean $\pm$ std over seeds. Boxplots aggregate all sweep runs: better-positioned boxes indicate stronger performance across the tuning grid, while narrower boxes and shorter tails indicate lower hyperparameter sensitivity.

\subsection{CIFAR-100 with ResNet-18}
\label{subsec:cifar}

We train ResNet-18 on CIFAR-100 for 200 epochs with cosine learning-rate decay, warmup, weight decay, RandAugment, MixUp, and label smoothing.
We consider batch sizes 128 and 1024 to probe different gradient-noise regimes.
We allocate the same sweep budget per optimizer over learning rate $\eta$ and $(\beta_1,\beta_2)$, and (for variance-normalized methods) $\varepsilon_s$.
We select the best configuration with one validation seed, then re-run it with three seeds and report mean $\pm$ std; full grids are in Appendix~\ref{app:hparams-cifar}.

\begin{table}[t]
\centering
\caption{CIFAR-100 / ResNet-18: accuracy (mean $\pm$ std over 3 seeds) for the selected configuration.}
\label{tab:cifar-bs128}
\begin{minipage}[t]{0.48\textwidth}%
\centering
\small
\textbf{Batch size 128}\vspace{2pt}

\begin{tabular}{lcc}
\toprule
\textbf{Optimizer} & \textbf{Test Acc.} & \textbf{Train Acc.} \\
\midrule
Adam & $77.82 \pm 0.18$ & $75.65 \pm 0.71$ \\
LaProp & $77.72 \pm 0.20$ & $75.59 \pm 0.99$ \\
AdaBelief & $79.93 \pm 0.15$ & $81.22 \pm 0.88$ \\
MVN-Grad & $\mathbf{79.94 \pm 0.17}$ & $\mathbf{81.26 \pm 0.94}$ \\
\bottomrule
\end{tabular}
\end{minipage}
\hfill
\begin{minipage}[t]{0.48\textwidth}
\centering
\small
\textbf{Batch size 1024}\vspace{2pt}

\begin{tabular}{lcc}
\toprule
\textbf{Optimizer} & \textbf{Test Acc.} & \textbf{Train Acc.} \\
\midrule
Adam & $78.26 \pm 0.14$ & $83.15 \pm 1.28$ \\
LaProp & $78.40 \pm 0.14$ & $83.05 \pm 1.34$ \\
AdaBelief & $79.34 \pm 0.27$ & $85.89 \pm 1.17$ \\
MVN-Grad & $\mathbf{79.63 \pm 0.12}$ & $\mathbf{85.89 \pm 1.19}$ \\
\bottomrule
\end{tabular}
\label{tab:cifar-bs1024}
\end{minipage}
\end{table}

\begin{figure}
    \centering
    \includegraphics[width=0.5\textwidth]{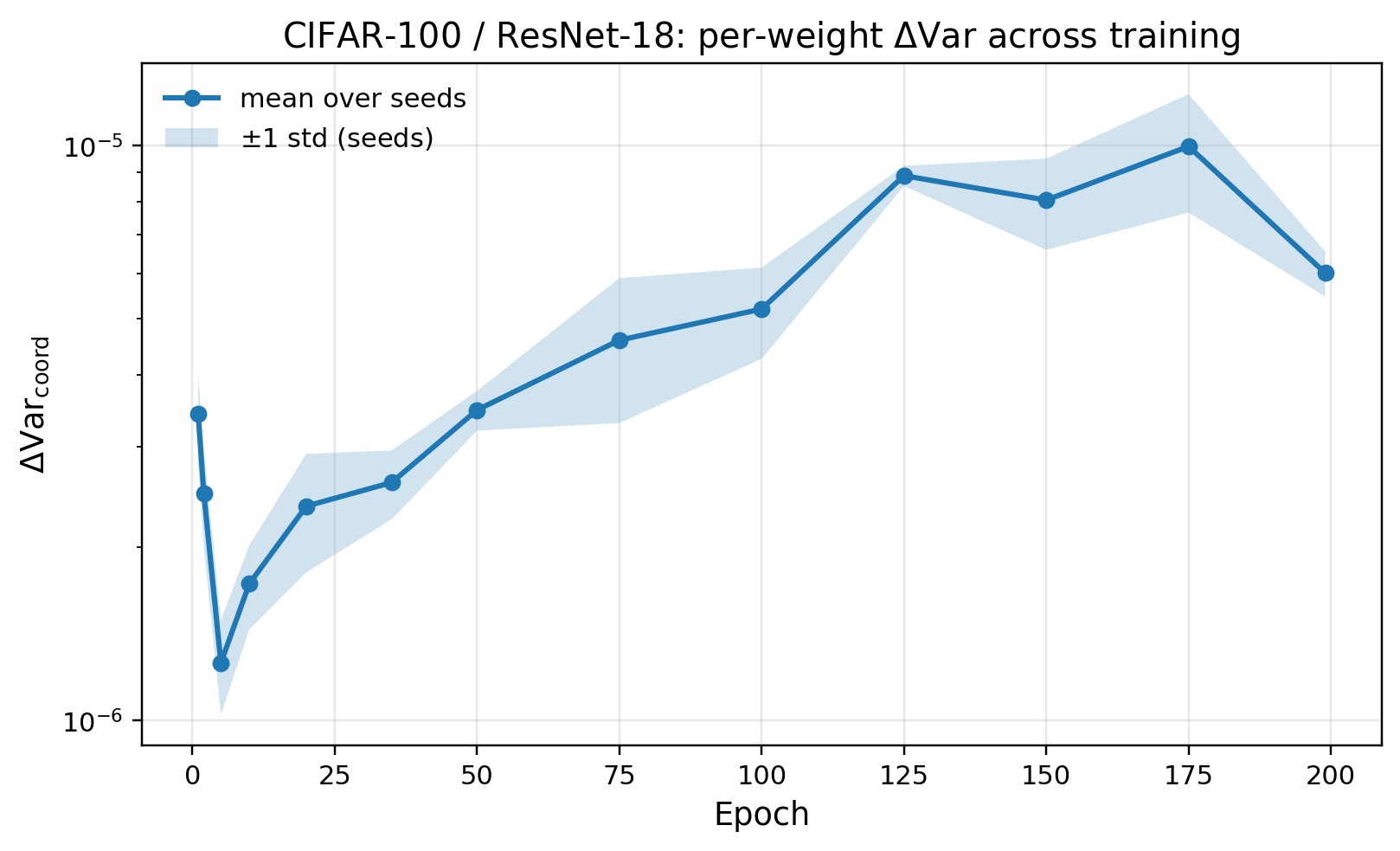}
    \caption{Update-variance gap on CIFAR-100/ResNet-18. The gap remains positive throughout training, confirming Theorem~\ref{thm:var_adabelief_mvngrad}.}
    \label{fig:thm_vgap_ada_prop_cifar}
\end{figure}

Table~\ref{tab:cifar-bs128} reports final accuracies; Figs.~\ref{fig:robust-val-cifar128}, \ref{fig:robust-train-cifar128}, \ref{fig:robust-val-cifar1024}, and \ref{fig:robust-train-cifar1024} show robustness via boxplots over all sweep runs. At batch size 128, MVN-Grad matches AdaBelief and improves over Adam and LaProp by 2.1--2.2\% test accuracy, with comparable sweep medians ($77.72\%$ vs.\ $77.94\%$) and shorter left tails. At batch size 1024, MVN-Grad achieves the best mean test accuracy ($79.63\%$, +0.29\% over AdaBelief) and the strongest sweep robustness: highest median test accuracy ($78.33\%$ vs.\ $76.41\%$ for AdaBelief, $\approx 74\%$ for Adam/LaProp) and highest median train accuracy ($81.65\%$ vs.\ $73.87\%$ for AdaBelief, $\approx 65\%$ for Adam/LaProp). Fig.~\ref{fig:thm_vgap_ada_prop_cifar} validates this on CIFAR-100: the variance gap remains positive across checkpoints, confirming the mechanism of Section~\ref{subsec:cond-moments}.

\begin{figure*}
  \centering
  \begin{subfigure}{0.32\textwidth}
    \centering
    \includegraphics[width=\linewidth]{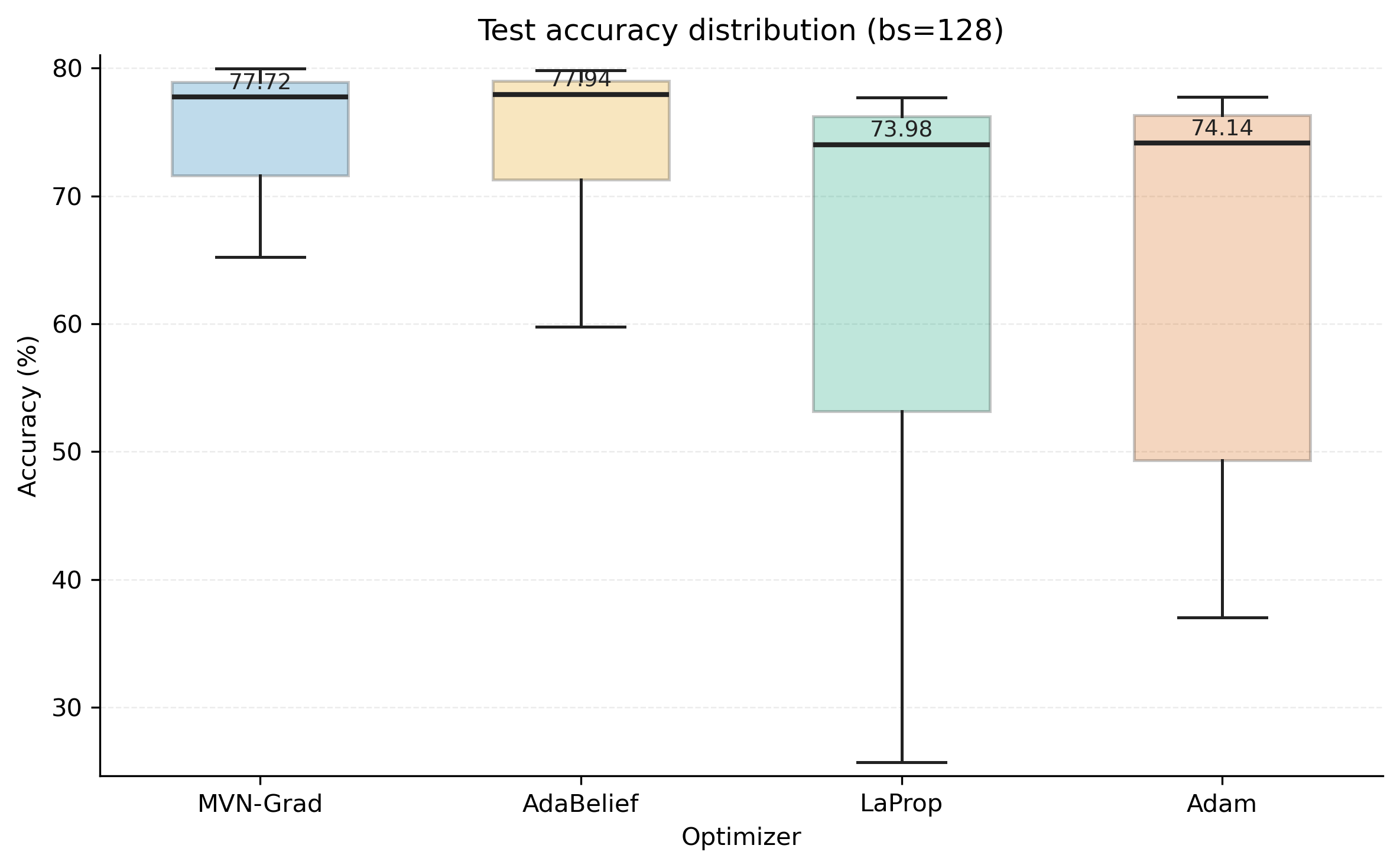}
    \caption{CIFAR-100, bs=128 (test).}
    \label{fig:robust-val-cifar128}
  \end{subfigure}\hfill
  \begin{subfigure}{0.32\textwidth}
    \centering
    \includegraphics[width=\linewidth]{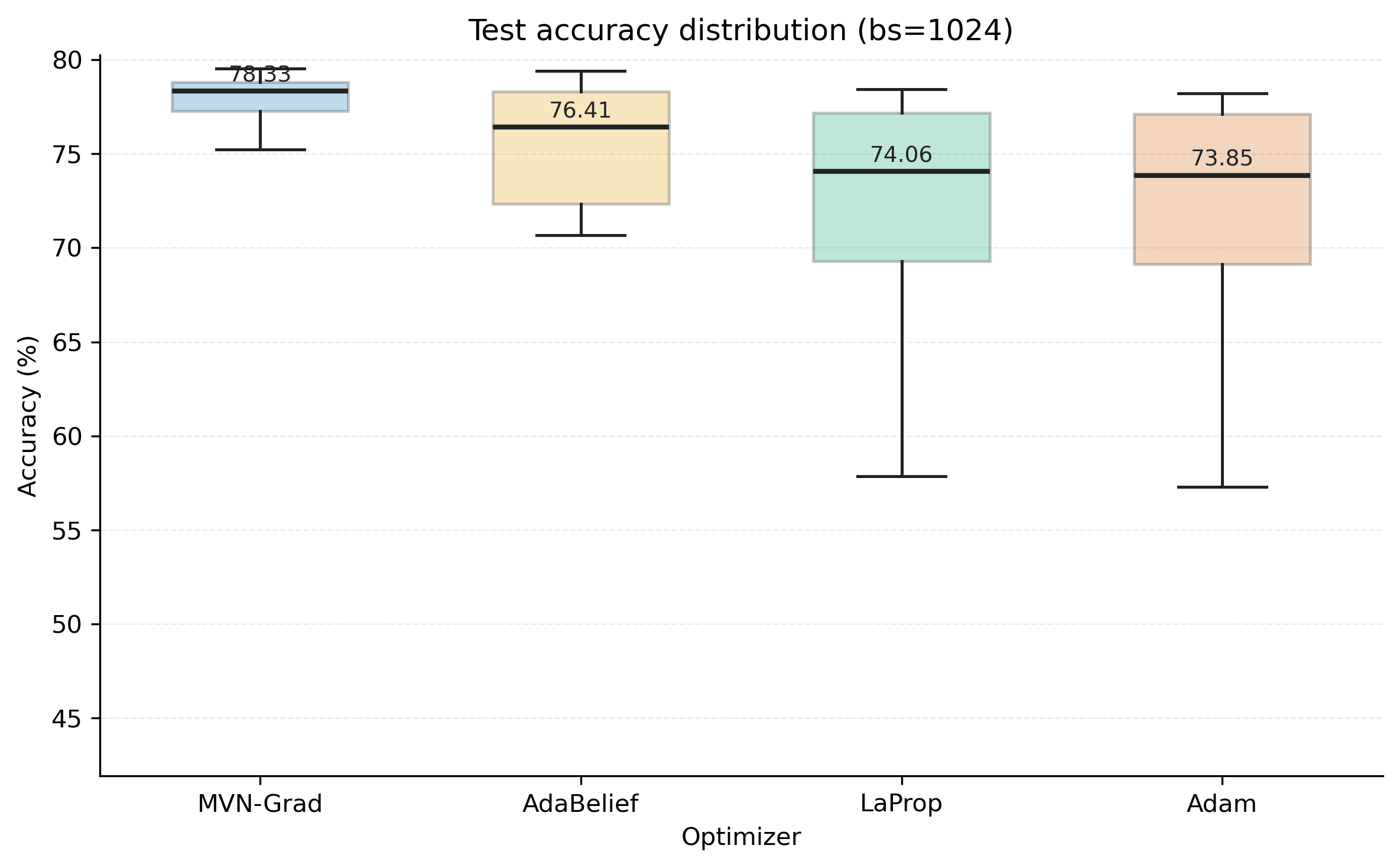}
    \caption{CIFAR-100, bs=1024 (test).}
    \label{fig:robust-val-cifar1024}
  \end{subfigure}\hfill
  \begin{subfigure}{0.32\textwidth}
    \centering
    \includegraphics[width=\linewidth]{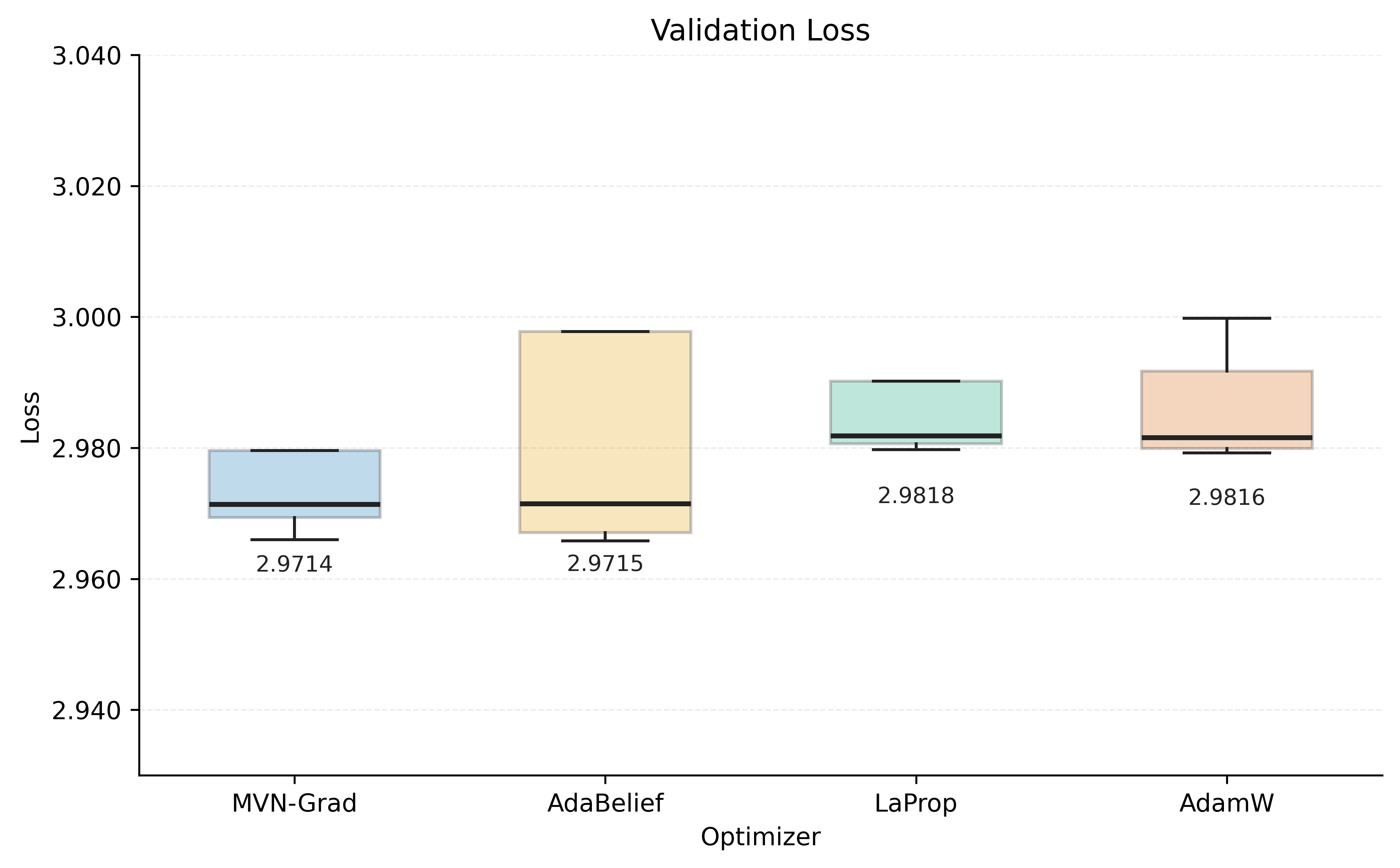}
    \caption{OpenWebText (val loss).}
    \label{fig:robust-val-owt}
  \end{subfigure}\\[4pt]
  \begin{subfigure}{0.32\textwidth}
    \centering
    \includegraphics[width=\linewidth]{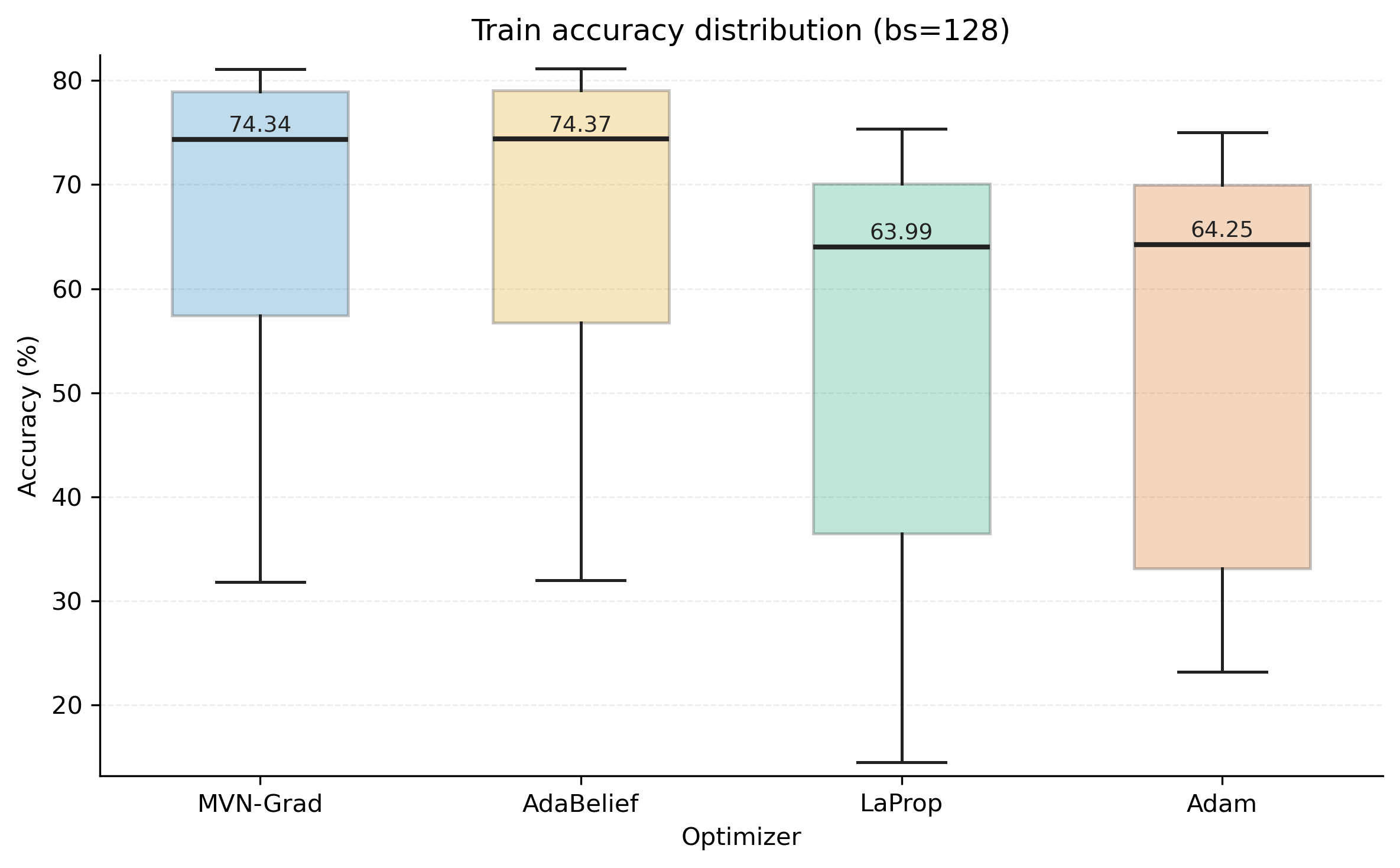}
    \caption{CIFAR-100, bs=128 (train).}
    \label{fig:robust-train-cifar128}
  \end{subfigure}\hfill
  \begin{subfigure}{0.32\textwidth}
    \centering
    \includegraphics[width=\linewidth]{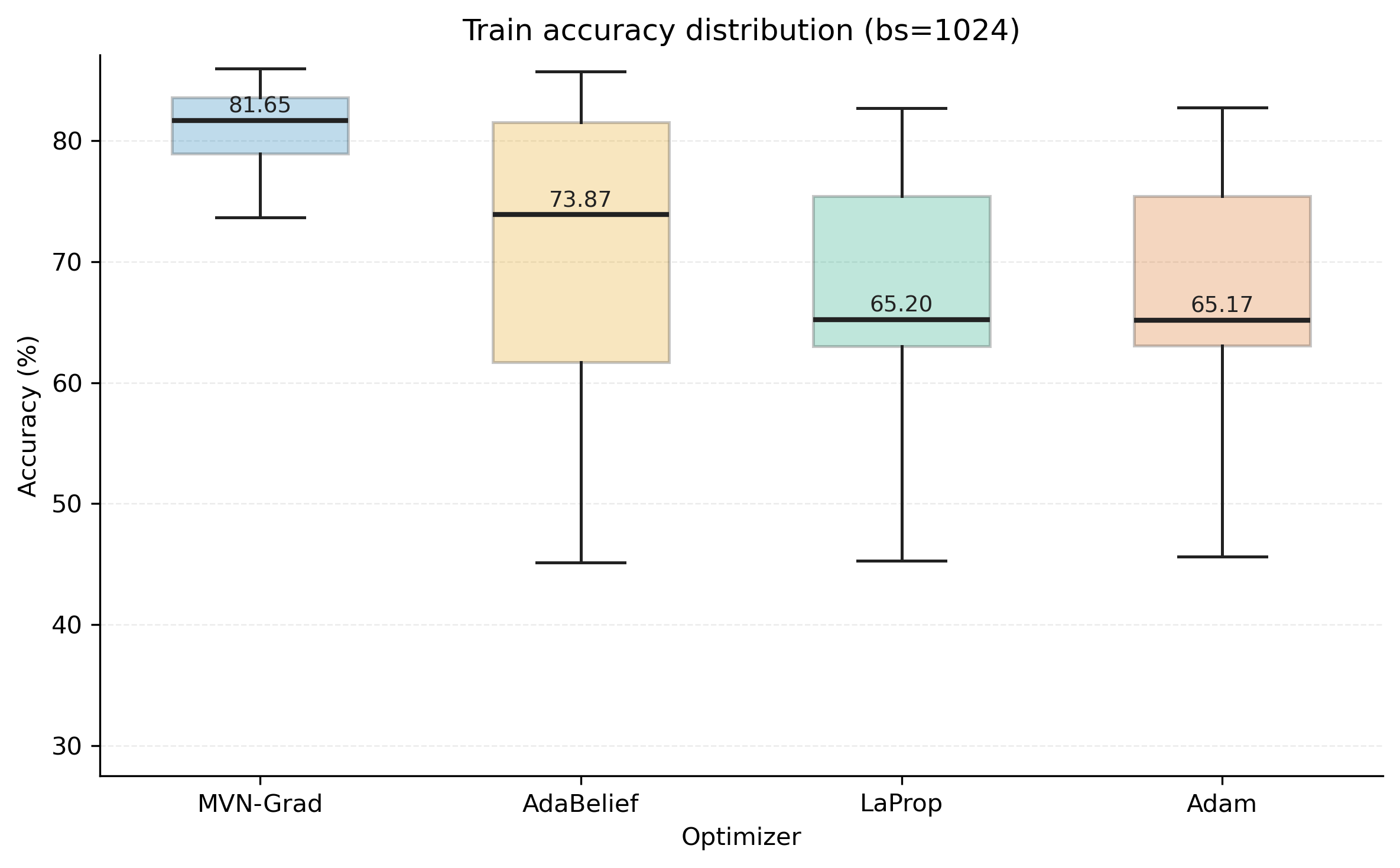}
    \caption{CIFAR-100, bs=1024 (train).}
    \label{fig:robust-train-cifar1024}
  \end{subfigure}\hfill
  \begin{subfigure}{0.32\textwidth}
    \centering
    \includegraphics[width=\linewidth]{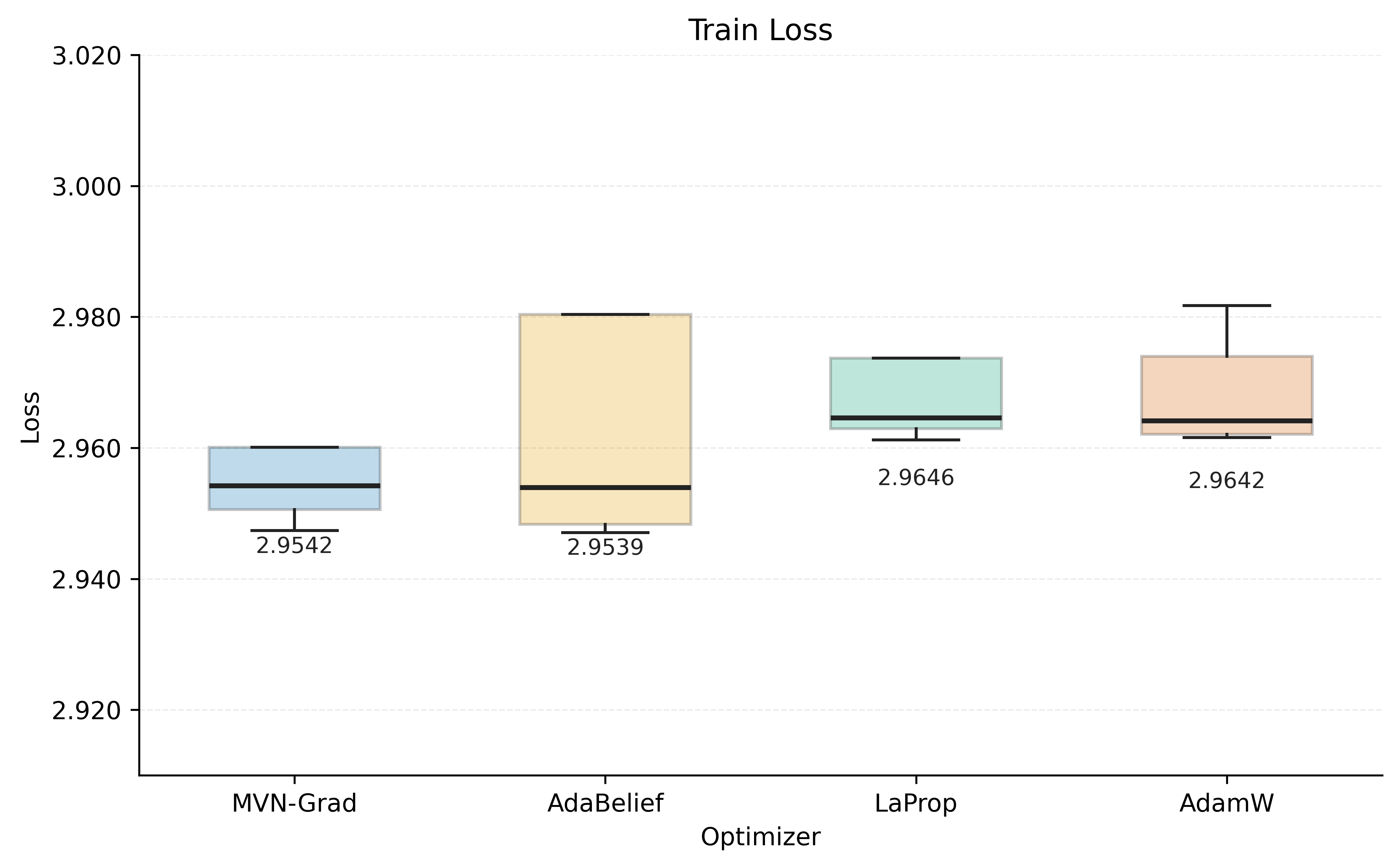}
    \caption{OpenWebText (train loss).}
    \label{fig:robust-train-owt}
  \end{subfigure}
  \caption{Hyperparameter robustness across sweep runs: validation (top) and  training (bottom).}
  \label{fig:robust-val}
  
\end{figure*}

\vspace{-1mm}

\subsection{Language modeling}
\label{subsec:lm}
\vspace{-1mm}

We evaluate two GPT-style language modeling benchmarks: a 30M-parameter model on WikiText-103 and a 124M-parameter model on OpenWebText. We report perplexity $\mathrm{PPL}=\exp(\mathcal{L}_{\text{NLL}})$ on WikiText-103 and validation loss on OpenWebText for consistency with NanoGPT logging.

\textbf{WikiText-103.}
We train a 30M-parameter GPT-2--style Transformer with sequence length 512 on WikiText-103.
All results are reported as mean $\pm$ standard deviation over 4 seeds.
We use a two-stage tuning protocol with shared learning-rate and $(\beta_1,\beta_2)$ grids across optimizers; variance-normalized methods additionally tune $\varepsilon_s$. Full grids are in Appendix~\ref{app:hparams-lm}.
The WikiText-103 experiments use \emph{coupled} $\ell_2$ regularization, in which the term $\lambda x_{t-1}$ is added to the stochastic gradient prior to moment estimation.
Under coupled decay, the regularization signal is filtered through the optimizer’s internal states, and its effect depends on update ordering—specifically, whether momentum is applied before normalization or after normalization.
This interaction can increase seed sensitivity.
For the larger-scale OpenWebText experiments below, we instead use \emph{decoupled} weight decay to enable a cleaner comparison.
As shown in Table~\ref{tab:GPT-Wiki}, MVN-Grad achieves the best mean validation perplexity.
Compared to Adam, MVN-Grad reduces validation perplexity by 1.88, and compared to AdaBelief by 0.93.
We observe higher std for LaProp, and to a lesser extent MVN-Grad, in this setting. %

\begin{table}[t]
\centering
\setlength{\tabcolsep}{3pt}
\caption{Language modeling results (mean $\pm$ std) for the selected configuration.}
\label{tab:GPT-Wiki}
\begin{minipage}[t]{0.48\textwidth}
\centering
\small
\textbf{WikiText-103 (GPT-2 30M)}\vspace{2pt}

\begin{tabular}{lcc}
\toprule
\textbf{Optimizer} & \textbf{Val PPL} & \textbf{Train PPL} \\
\midrule
Adam      & $64.50 \pm 1.16$ & $71.33 \pm 2.55$ \\
LaProp    & $66.96 \pm 5.20$ & $73.63 \pm 5.09$ \\
AdaBelief & $63.55 \pm 1.11$ & $70.92 \pm 1.25$ \\
MVN-Grad  & $\mathbf{62.62 \pm 2.52}$ & $\mathbf{70.01 \pm 3.28}$ \\
\bottomrule
\end{tabular}
\end{minipage}%
\hspace{4pt}%
\begin{minipage}[t]{0.48\textwidth}
\centering
\small
\textbf{OpenWebText (GPT-2 124M)}\vspace{2pt}

\begin{tabular}{lcc}
\toprule
\textbf{Optimizer} & \textbf{Val Loss} & \textbf{Train Loss} \\
\midrule
AdamW        & $2.990 \pm 0.010$ & $2.972 \pm 0.009$ \\
LaPropW      & $2.990 \pm 0.010$ & $2.972 \pm 0.009$ \\
AdaBeliefW   & $2.974 \pm 0.008$ & $2.956 \pm 0.008$ \\
MVN-GradW    & $\mathbf{2.973 \pm 0.007}$ & $\mathbf{2.955 \pm 0.007}$ \\
\bottomrule
\end{tabular}
\label{tab:GPT-Open-loss}
\end{minipage}
\end{table}

\textbf{OpenWebText.}
We train a GPT-2 124M model on OpenWebText with context length 1024, using micro-batches of size 12 with 40 gradient-accumulation steps (effective batch size 480), bf16 precision, gradient clipping at 1, 2000 warmup steps, and 120k training iterations.
We disable dropout and use AdamW-style \emph{decoupled} weight decay with coefficient 0.1.
Based on pilot runs, we fix $\eta=10^{-4}$ and sweep the same $(\beta_1,\beta_2)$ grid for each optimizer; variance-normalized methods use $\varepsilon_s=0$. We select by final validation loss and re-run with three seeds; see Appendix~\ref{app:hparams-lm}.
Table~\ref{tab:GPT-Open-loss} reports final losses for the selected configurations, while Figs.~\ref{fig:robust-val-owt}--\ref{fig:robust-train-owt} visualize robustness via boxplots over all evaluated hyperparameters. MVN-Grad achieves the best mean losses, slightly improving over AdaBeliefW and more clearly outperforming AdamW and LaPropW. Across the full sweep, MVN-GradW attains the lowest median validation loss ($2.9714$), matching AdaBeliefW ($2.9715$) and outperforming AdamW and LaPropW (both $\approx 2.9816$--$2.9818$), with similarly competitive train-loss medians. Moreover, normalize--then--momentum methods show narrower interquartile ranges than momentum--then--normalize methods, indicating less sensitivity to $(\beta_1,\beta_2)$ at a fixed learning rate.

\vspace{-0.5mm}

\noindent\textbf{Memory and runtime overhead.}
MVN-GradW maintains three full-size state tensors per parameter
($m_t$, $s_t$, $u_t$), whereas AdamW, LaPropW, and AdaBeliefW maintain two.
Counting parameters, gradients, and optimizer states, this increases
parameter-sized storage from approximately $4\times$ to $5\times$ model size. In wall-clock time, MVN-GradW remains comparable to
the baselines on OpenWebText: AdamW takes 46.2k\,s, MVN-GradW 46.8k\,s,
LaPropW 47.4k\,s, and AdaBeliefW 48.2k\,s.

\vspace{-0.5mm}

\section{Conclusion}
\vspace{-0.5mm}

We proposed \emph{MVN-Grad}, an Adam-style optimizer that combines variance-based normalization with normalize--then--momentum ordering. Our analysis separated the roles of these two choices: the ordering reduces conditional update variance and bounds spike sensitivity, while variance normalization avoids sign collapse and the resulting dimension-dependent slowdowns in the small-variance (high-signal) regime. On CIFAR-100 and GPT-style language modeling benchmarks, MVN-Grad matched or improved over Adam, AdaBelief, and LaProp, with improved sweep robustness and comparable wall-clock cost.
Our mechanism-isolating theorems rely on local symmetry and high-SNR conditions that clarify structural advantages but do not cover all training regimes; the general convergence rate carries an $\varepsilon^{-1}$ cost from worst-case variance normalization.
Extending the analysis to heavy-tailed noise, larger-scale pretraining, and sharper stabilized-regime rates are directions for future work.

\section*{Acknowledgments}

This work was supported in part by NSF CAREER Award CCF-2338846 and the NSF AI Institute for Foundations of Machine Learning (IFML). We thank the Center for Generative AI (CGAI) and the Texas Advanced Computing Center (TACC) at The University of Texas at Austin for computing support on the Vista GPU Cluster.

\newpage
\appendix
\newpage

\appendix
\section{Extended related work: Adam-style optimizers}
\label{app:related-work}

\paragraph{A taxonomy of Adam variants.}
Most Adam-style optimizers modify (i) the adaptivity statistic and its update rule,
(ii) the coupling between numerator and denominator (including update ordering or temporal decorrelation),
(iii) step-size control (e.g., bounds or variance rectification),
(iv) practical regularization details (e.g., weight decay), and/or
(v) systems constraints such as memory footprint or large-batch training.

\paragraph{Convergence and stability fixes.}
A key theoretical line identifies settings where Adam can fail and proposes convergent variants such as AMSGrad \citep{reddi2019convergence}.
Related stability-oriented modifications include controlling the growth of second-moment accumulators (Yogi \citep{zaheer2018adaptive})
and rectifying early-stage variance of the adaptive learning rate (RAdam \citep{liu2019variance}).

\paragraph{Decoupling momentum and adaptivity.}
Several methods aim to reduce harmful interactions between historical momentum and stochastic normalization.
LaProp \citep{ziyin2020laprop} applies momentum after normalization to separate these effects, which is closely related in spirit to our focus on ordering.
AdaShift \citep{zhou2018adashift} uses temporally shifted statistics to decorrelate numerator and denominator.

\paragraph{Alternative scaling rules.}
Beyond second-moment normalization, AdaBelief \citep{zhuang2020adabelief} adapts step sizes using deviations from the predicted gradient.
This line targets scaling behavior and can be viewed as variance-sensitive normalization.

\paragraph{Practical refinements and generalization-motivated variants.}
AdamW \citep{loshchilov2017fixing} decouples weight decay from the gradient step.
Memory and implementation constraints motivated AdaFactor \citep{shazeer2018adafactor} and SM3 \citep{anil2019memory}.
Other variants aim to mitigate the adaptivity--generalization gap or extreme effective learning rates via bounded/partial adaptivity,
including AdaBound \citep{luo2019adaptive} and Padam \citep{chen2018closing}.
Further modifications address scale-invariant weights (AdamP \citep{heo2020adamp}).

\paragraph{Large-batch and large-scale training.}
Layer-wise adaptations such as LAMB \citep{you2019large} and methods like NovoGrad \citep{ginsburg2019stochastic} target stability and efficiency in large-batch regimes.

\paragraph{Interpretations of Adam and sign-like behavior.}
Several works interpret Adam as combining sign-like directions with variance-dependent magnitudes \citep{balles2018dissecting},
and analyze generalization differences between adaptive methods and SGD \citep{wilson2017marginal}.
These perspectives connect directly to the sign-collapse phenomenon discussed in Section~\ref{subsec:var-vs-second}.

\paragraph{Beyond diagonal second-moment preconditioning.}
Recent optimizers incorporate curvature information using inexpensive diagonal Hessian estimates and clipping,
including Sophia \citep{liu2023sophia} and related diagonal-curvature methods such as AdaHessian \citep{yao2021adahessian}.
We include these for context, though they go beyond strictly Adam-style diagonal second-moment preconditioning.
Related preconditioning methods include Shampoo \citep{pmlr-v80-gupta18a} and Muon \citep{jordan6muon}.

\paragraph{Summary table.}
For quick reference, Table~\ref{tab:app-adam-variants} summarizes representative Adam-style variants, highlighting the core modification each method makes and the primary motivation behind it.

\begin{table*}[ht]
\centering
\small
\setlength{\tabcolsep}{5pt}
\begin{tabular}{p{0.22\textwidth} p{0.33\textwidth} p{0.37\textwidth}}
\toprule
Method(s) & Key change & Main purpose \\
\midrule
AMSGrad \citep{reddi2019convergence} 
& Monotone denominator via $\max$-correction 
& Convergence / avoid pathological steps \\

Yogi \citep{zaheer2018adaptive}, RAdam \citep{liu2019variance}
& Stabilize/rectify second-moment statistics 
& Reduce instability (esp. early / heavy variance) \\

AdaShift \citep{zhou2018adashift}
& Time-shifted statistics 
& Decorrelate numerator and denominator \\

LaProp \citep{ziyin2020laprop}
& Momentum \emph{after} normalization 
& Reduce coupling of stale momentum and noisy scaling \\

AdaBelief \citep{zhuang2020adabelief}
& Deviation/variance-sensitive scaling 
& Improve adaptivity when noise is small \\

AdamW \citep{loshchilov2017fixing}
& Decoupled weight decay 
& Correct regularization under adaptivity \\

AdaFactor \citep{shazeer2018adafactor}, SM3 \citep{anil2019memory}
& Memory-efficient accumulators 
& Scale to large models / reduced memory \\

AdaBound \citep{luo2019adaptive}, Padam \citep{chen2018closing}
& Bounded/partial adaptivity 
& Control extreme per-coordinate LRs; SGD-like late stage \\

AdamP \citep{heo2020adamp}
& Projection for scale-invariant parameters 
& Improve behavior with norm/scale invariances \\

LAMB \citep{you2019large}, NovoGrad \citep{ginsburg2019stochastic}
& Layer-wise scaling / trust ratio 
& Large-batch stability and efficiency \\

Sophia \citep{liu2023sophia}, AdaHessian \citep{yao2021adahessian}
& Diagonal curvature information 
& Curvature-aware scaling beyond second moments \\

\midrule
\multicolumn{3}{l}{\emph{Beyond Adam-style diagonal preconditioning (context):} Shampoo \citep{pmlr-v80-gupta18a}, Muon \citep{jordan6muon}.} \\
\bottomrule
\end{tabular}
\caption{Representative variants around Adam (not exhaustive).}
\label{tab:app-adam-variants}
\end{table*}

\section{Theory}
\subsection{Proof of Theorem~\ref{thm:var_adabelief_mvngrad}}
\begin{proof}

Fix an iteration $t$ and define
\[
Y_t := \frac{1}{\sqrt{s_t}+\varepsilon},
\qquad
X_t := \frac{g_t}{\sqrt{s_t}+\varepsilon}=g_t\,Y_t.
\]
We can write the updates as
\[
\Delta_t^{AB} 
\;=\; \beta_1 m_{t-1}Y_t + (1-\beta_1)X_t,
\]
and, since MVN-Grad applies momentum \emph{after} normalization,
\[
\Delta_t^{MV} \;=\; \beta_1 u_{t-1} + (1-\beta_1)X_t,
\]
where $u_{t-1}$ is $\mathcal F_{t-1}$-measurable.

\paragraph{Conditional variance gap identity.}
Because $m_{t-1}$ and $u_{t-1}$ are $\mathcal F_{t-1}$-measurable, we have
\begin{align*}
\Var\!\left[\Delta_t^{AB}\mid \mathcal F_{t-1}\right]
&=
\Var\!\left[\beta_1 m_{t-1}Y_t + (1-\beta_1)X_t \mid \mathcal F_{t-1}\right] \\
&=
\beta_1^2 m_{t-1}^2 \Var\!\left[Y_t\mid \mathcal F_{t-1}\right]
+
(1-\beta_1)^2 \Var\!\left[X_t\mid \mathcal F_{t-1}\right] \\
&\quad
+2\beta_1(1-\beta_1)m_{t-1}\Cov\!\left[Y_t,X_t\mid \mathcal F_{t-1}\right],
\end{align*}
and
\[
\Var\!\left[\Delta_t^{MV}\mid \mathcal F_{t-1}\right]
=
\Var\!\left[\beta_1 u_{t-1} + (1-\beta_1)X_t \mid \mathcal F_{t-1}\right]
=
(1-\beta_1)^2 \Var\!\left[X_t\mid \mathcal F_{t-1}\right].
\]
Subtracting yields
\begin{equation}
\label{eq:var_gap_AB_MV}
\Delta\Var_t
=
\beta_1^2 m_{t-1}^2 \Var\!\left[Y_t\mid \mathcal F_{t-1}\right]
+
2\beta_1(1-\beta_1)m_{t-1}\Cov\!\left[Y_t,X_t\mid \mathcal F_{t-1}\right].
\end{equation}

\paragraph{Evaluate the covariance term via symmetry.}
Let $\zeta :=g_t-m_{t-1}$. By assumption, conditional on $\mathcal F_{t-1}$,
the random variable $\zeta$ is symmetric about zero. Moreover, using the recursion $m_t=\beta_1 m_{t-1}+(1-\beta_1)g_t$, we have
\[
g_t-m_t = \beta_1(g_t-m_{t-1})=\beta_1 \zeta.
\]
Since $s_t=\beta_2 s_{t-1}+(1-\beta_2)(g_t-m_t)^2$, it follows that
\[
s_t = \beta_2 s_{t-1} + (1-\beta_2)\beta_1^2 \zeta^2,
\qquad\text{and hence}\qquad
Y_t = \frac{1}{\sqrt{\beta_2 s_{t-1} + (1-\beta_2)\beta_1^2 \zeta^2}+\varepsilon}.
\]
Therefore $Y_t$ is an \emph{even} function of $\zeta$, while $\zeta Y_t$ and $\zeta Y_t^2$ are \emph{odd} functions of $\zeta$.
Also,
\[
X_t = g_t Y_t = (\zeta+m_{t-1})Y_t.
\]
Thus,
\begin{align*}
\Cov\!\left[Y_t,X_t\mid \mathcal F_{t-1}\right]
&=
\Cov\!\left[Y_t,(\zeta+m_{t-1})Y_t\mid \mathcal F_{t-1}\right] \\
&=
\Cov\!\left[Y_t,\zeta Y_t\mid \mathcal F_{t-1}\right]
+
m_{t-1}\Var\!\left[Y_t\mid \mathcal F_{t-1}\right].
\end{align*}
To handle the first term, note that
\[
\E[\zeta Y_t\mid \mathcal F_{t-1}] = 0
\quad\text{and}\quad
\E[\zeta Y_t^2\mid \mathcal F_{t-1}] = 0
\]
because $\zeta Y_t$ and $\zeta Y_t^2$ are odd in the (conditionally) symmetric variable $\zeta$. Hence
\[
\Cov\!\left[Y_t,\zeta Y_t\mid \mathcal F_{t-1}\right]
=
\E[\zeta Y_t^2\mid \mathcal F_{t-1}]
-
\E[Y_t\mid \mathcal F_{t-1}]\,\E[\zeta Y_t\mid \mathcal F_{t-1}]
=0,
\]
and therefore
\[
\Cov\!\left[Y_t,X_t\mid \mathcal F_{t-1}\right]
=
m_{t-1}\Var\!\left[Y_t\mid \mathcal F_{t-1}\right].
\]

\paragraph{Conclude.}
Plugging this into~\eqref{eq:var_gap_AB_MV} gives
\[
\Delta\Var_t
=
\beta_1^2 m_{t-1}^2 \Var\!\left[Y_t\mid \mathcal F_{t-1}\right]
+
2\beta_1(1-\beta_1)m_{t-1}^2 \Var\!\left[Y_t\mid \mathcal F_{t-1}\right]
=
(2\beta_1-\beta_1^2)m_{t-1}^2 \Var\!\left[Y_t\mid \mathcal F_{t-1}\right].
\]
Finally, since $\Var[Y_t\mid \mathcal F_{t-1}]\ge 0$ and $2\beta_1-\beta_1^2=\beta_1(2-\beta_1)\ge 0$
for $\beta_1\in[0,1]$, we obtain $\Delta\Var_t\ge 0$, as claimed.

\end{proof}

\subsection{Proof of Proposition~\ref{prop:approx_mean}}
\begin{proof}
We follow the proof of Theorem~\ref{thm:var_adabelief_mvngrad} but account for the tracking error.
Write $\zeta_t := g_t - m_{t-1} = \xi_t + e$ where $e := \mu_t - m_{t-1}$ with $|e| \le \delta$, and $\xi_t$ is symmetric about zero.
The variance gap identity~\eqref{eq:var_gap_AB_MV} still gives
\[
\Delta\Var_t
=
\beta_1^2 m_{t-1}^2 \Var[Y_t \mid \mathcal F_{t-1}]
+
2\beta_1(1-\beta_1)m_{t-1}\Cov[Y_t, X_t \mid \mathcal F_{t-1}],
\]
where $Y_t = (\sqrt{s_t}+\varepsilon)^{-1}$ and $X_t = g_t Y_t = (\zeta + m_{t-1})Y_t$.

Since $\zeta = \xi_t + e$, the substitution $X_t = (\xi_t + e + m_{t-1})Y_t$ gives
\[
\Cov[Y_t, X_t \mid \mathcal F_{t-1}]
=
\Cov[Y_t, \xi_t Y_t \mid \mathcal F_{t-1}]
+
(e + m_{t-1})\Var[Y_t \mid \mathcal F_{t-1}].
\]

We now bound $m_{t-1}\Cov[Y_t, X_t]$ directly, handling both signs of $m_{t-1}$.
From the decomposition above:
\begin{align*}
m_{t-1}\Cov[Y_t, X_t]
&= m_{t-1}(e + m_{t-1})\Var[Y_t] + m_{t-1}\Cov[Y_t, \xi_t Y_t].
\end{align*}
For the first term, $m_{t-1}(e + m_{t-1}) = m_{t-1}^2 + m_{t-1}e$, and by triangle inequality:
\[
m_{t-1}(e + m_{t-1}) \ge m_{t-1}^2 - |m_{t-1}||e| \ge m_{t-1}^2 - |m_{t-1}|\delta.
\]
This holds regardless of the sign of $m_{t-1}$, since $m_{t-1}e \ge -|m_{t-1}||e|$ always.
Multiplying by $\Var[Y_t] \ge 0$:
\[
m_{t-1}(e + m_{t-1})\Var[Y_t] \ge (m_{t-1}^2 - |m_{t-1}|\delta)\Var[Y_t].
\]

For the second term, we bound $|m_{t-1}\Cov[Y_t, \xi_t Y_t]| \le |m_{t-1}| \cdot |\Cov[Y_t, \xi_t Y_t]|$.
By Cauchy--Schwarz:
\[
|\Cov[Y_t, \xi_t Y_t]|
\le \sqrt{\Var[Y_t] \cdot \Var[\xi_t Y_t]}
\le \sqrt{\E[Y_t^2] \cdot \E[\xi_t^2 Y_t^2]}.
\]
Since $Y_t \le 1/\varepsilon$ and $\E[\xi_t^2] \le \sigma_\xi^2$ (bounded by the gradient bound), we obtain:
\[
|\Cov[Y_t, \xi_t Y_t]| \le \frac{\sigma_\xi}{\varepsilon^2}.
\]

Combining:
\begin{align*}
m_{t-1}\Cov[Y_t, X_t]
&\ge (m_{t-1}^2 - |m_{t-1}|\delta)\Var[Y_t] - |m_{t-1}| \cdot \frac{\sigma_\xi}{\varepsilon^2}.
\end{align*}
Plugging into the variance gap identity:
\begin{align*}
\Delta\Var_t
&\ge \beta_1^2 m_{t-1}^2 \Var[Y_t]
+ 2\beta_1(1-\beta_1)\big[(m_{t-1}^2 - |m_{t-1}|\delta)\Var[Y_t] - |m_{t-1}|\sigma_\xi/\varepsilon^2\big] \\
&= (2\beta_1 - \beta_1^2)m_{t-1}^2 \Var[Y_t]
- 2\beta_1(1-\beta_1)|m_{t-1}|\big[\delta\,\Var[Y_t] + \sigma_\xi/\varepsilon^2\big].
\end{align*}
The non-negativity condition $\Delta\Var_t \ge 0$ follows whenever the first term dominates, i.e., when
\[
(2\beta_1 - \beta_1^2)|m_{t-1}|\Var[Y_t] \ge 2\beta_1(1-\beta_1)\big[\delta\,\Var[Y_t] + \sigma_\xi/\varepsilon^2\big],
\]
which holds provided $\delta$ is sufficiently small relative to $|m_{t-1}|$, as stated.
\end{proof}

\color{black}

\subsection{Proof of Theorem~\ref{thm:spike}}

\begin{proof}
We work coordinate-wise (all operations are element-wise), so it suffices to consider the scalar case.
Let $u\neq 0$ be fixed, $g_0=Mu$ with $M\gg 1$, and $g_t=u$ for all $t\ge 1$.
Assume $\Delta_{-1}=0$ and $m_{-1}=0$, and $v_{-1}=s_{-1}=\bar d>0$.

\paragraph{Part I: LaProp and MVN-Grad are uniformly bounded in $M$.}

\emph{LaProp (momentum after second-moment normalization).}
Define
\[
v_t=\beta_2 v_{t-1}+(1-\beta_2)g_t^2,
\qquad
z_t^{\text{LP}} := \frac{g_t}{\sqrt{v_t}+\varepsilon},
\qquad
\Delta_t^{\text{LP}} := \beta_1 \Delta_{t-1}^{\text{LP}} + (1-\beta_1) z_t^{\text{LP}}.
\]
At $t=0$,
\[
v_0=\beta_2 \bar d+(1-\beta_2)M^2u^2 \;\ge\; (1-\beta_2)M^2u^2,
\]
hence
\[
|z_0^{\text{LP}}|
=
\frac{|Mu|}{\sqrt{v_0}+\varepsilon}
\le
\frac{|Mu|}{\sqrt{(1-\beta_2)M^2u^2}}
=
\frac{1}{\sqrt{1-\beta_2}}.
\]
For $t\ge 1$, since $\sqrt{v_t}+\varepsilon\ge \varepsilon$ and $g_t=u$, we have
\[
|z_t^{\text{LP}}|\le \frac{|u|}{\varepsilon}.
\]
Define
\[
Z_{\text{LP}} := \max\Big\{\frac{1}{\sqrt{1-\beta_2}}, \frac{|u|}{\varepsilon}\Big\}.
\]
Then $|z_t^{\text{LP}}|\le Z_{\text{LP}}$ for all $t\ge 0$, and the recursion gives
\[
|\Delta_t^{\text{LP}}|
\le
\beta_1|\Delta_{t-1}^{\text{LP}}|+(1-\beta_1)Z_{\text{LP}}
\le
\max\{|\Delta_{t-1}^{\text{LP}}|, Z_{\text{LP}}\}.
\]
By induction (using $\Delta_{-1}^{\text{LP}}=0$), $|\Delta_t^{\text{LP}}|\le Z_{\text{LP}}$ for all $t\ge 0$, which is independent of $M$.

\emph{MVN-Grad (momentum after variance-style normalization).}
Let
\[
m_t=\beta_1 m_{t-1}+(1-\beta_1)g_t,
\qquad
s_t=\beta_2 s_{t-1}+(1-\beta_2)(g_t-m_t)^2,
\]
and define
\[
z_t^{\text{MV}} := \frac{g_t}{\sqrt{s_t}+\varepsilon},
\qquad
\Delta_t^{\text{MV}} := \beta_1 \Delta_{t-1}^{\text{MV}} + (1-\beta_1) z_t^{\text{MV}}.
\]
At $t=0$, $m_0=(1-\beta_1)Mu$ so $g_0-m_0=\beta_1 Mu$, hence
\[
s_0=\beta_2 \bar d+(1-\beta_2)\beta_1^2 M^2u^2 \;\ge\; (1-\beta_2)\beta_1^2 M^2u^2,
\]
and therefore
\[
|z_0^{\text{MV}}|
=
\frac{|Mu|}{\sqrt{s_0}+\varepsilon}
\le
\frac{|Mu|}{\sqrt{(1-\beta_2)\beta_1^2 M^2u^2}}
=
\frac{1}{\beta_1\sqrt{1-\beta_2}}.
\]
For $t\ge 1$, again $\sqrt{s_t}+\varepsilon\ge\varepsilon$ and $g_t=u$, giving
\[
|z_t^{\text{MV}}|\le \frac{|u|}{\varepsilon}.
\]
Define
\[
Z_{\text{MV}} := \max\Big\{\frac{1}{\beta_1\sqrt{1-\beta_2}}, \frac{|u|}{\varepsilon}\Big\}.
\]
The same induction argument yields $|\Delta_t^{\text{MV}}|\le Z_{\text{MV}}$ for all $t\ge 0$.
Thus we may take $C:=\max\{Z_{\text{LP}},Z_{\text{MV}}\}$, proving the uniform boundedness for LaProp and MVN-Grad.

\paragraph{Part II: Adam.}
For Adam, with
\[
m_t=\beta_1 m_{t-1}+(1-\beta_1)g_t,
\qquad
v_t=\beta_2 v_{t-1}+(1-\beta_2)g_t^2,
\qquad
\Delta_t^{\mathrm{A}} := \frac{m_t}{\sqrt{v_t}+\varepsilon},
\]
the closed forms (using $m_{-1}=0$, $v_{-1}=\bar d$) are
\[
m_t=(1-\beta_1)\beta_1^tMu + (1-\beta_1^t)u,
\]
\[
v_t=\beta_2^{t+1}\bar d + (1-\beta_2)\beta_2^t M^2u^2 + (1-\beta_2^t)u^2.
\]
Define the index
\[
t^\star := \min\Big\{t\ge 1:\ (1-\beta_2)\beta_2^t M^2u^2 \le u^2\Big\}.
\]
Then $(1-\beta_2)\beta_2^{t^\star}M^2u^2\le u^2$ and hence
\[
v_{t^\star}
\le
\beta_2^{t^\star+1}\bar d + u^2 + (1-\beta_2^{t^\star})u^2
\le
\bar d + 2u^2.
\]
Consequently,
\[
\sqrt{v_{t^\star}}+\varepsilon \le \sqrt{\bar d+2u^2}+\varepsilon =: D,
\]
where $D$ depends only on $(\beta_2,\bar d,u,\varepsilon)$.
On the other hand,
since $M>0$ and $u\neq 0$, both terms in $m_{t^\star}=(1-\beta_1)\beta_1^{t^\star}Mu + (1-\beta_1^{t^\star})u$ share the same sign, so
\[
|m_{t^\star}|
=
(1-\beta_1)\beta_1^{t^\star}M|u| + (1-\beta_1^{t^\star})|u|
\ge
(1-\beta_1)\beta_1^{t^\star}M|u|.
\]

Therefore,
\[
|\Delta_{t^\star}^{\mathrm{A}}|
=
\frac{|m_{t^\star}|}{\sqrt{v_{t^\star}}+\varepsilon}
\ge
\frac{(1-\beta_1)\beta_1^{t^\star}M|u|}{D}.
\]
This shows that Adam retains a spike-dependent carry-over term of order
$(1-\beta_1)\beta_1^{t^\star}M$.
\end{proof}

\subsection{Proof of Theorem~\ref{thm:separation_MVN-Grad_LaProp_smallvar}}\label{proof_thm:separation_MVN-Grad_LaProp_smallvar}

In this subsection we prove Theorem~\ref{thm:separation_MVN-Grad_LaProp_smallvar}. 
Recall that throughout we work under Assumptions~\ref{ass:stationary}--\ref{ass:small_variance} and the following oracle model.

\begin{assumption}\label{ass:oracle_second_variance}
Let $F$ be $L$-smooth and bounded below by $F_\star$ on $\mathbb{R}^d$, and suppose
Assumptions~\ref{ass:stationary}--\ref{ass:small_variance} hold.
In addition, assume a high-SNR stochastic oracle: we observe $g_t=\nabla F(x_{t-1})+\xi_t$ such that
$\E[\xi_t\mid\mathcal F_{t-1}]=0$ and, for some $\delta\in(0,1)$ and $\sigma\ge 0$,
\[
|\xi_{t,i}|\le \delta\,|\nabla_i F(x_{t-1})| \quad \text{a.s.}, 
\qquad
\E[\xi_{t,i}^2\mid\mathcal F_{t-1}]\le \sigma^2,
\]
for all $i\in[d]$ and $t\ge 1$.
\end{assumption}

The proof proceeds in two steps: we first establish the convergence rate for MVN-Grad under the above assumptions, and then prove the corresponding lower bound for LaProp.

\textbf{MVN-Grad:} To prove the MVN-Grad part of Theorem~\ref{thm:separation_MVN-Grad_LaProp_smallvar}, we first establish two auxiliary results.
The first shows that, in the low-variance regime where the variance normalizer has stabilized, MVN-Grad reduces to a Heavy-Ball-type recursion driven by the stochastic oracle.

\begin{lemma}
\label{lem:mvn_to_hb_stoch}
Assume that in a low-variance phase the variance normalizer has stabilized so that
$s_t \approx \sigma^2 \mathbf{1}$
for some $\sigma>0$.
Then MVN-Grad reduces to the recursion
\[
z_t=\frac{g_t}{\sigma},
\qquad
u_t=\beta_1 u_{t-1}+(1-\beta_1)z_t,
\qquad
x_t=x_{t-1}-\eta u_t,
\]
Eliminating $u_{t-1}$ yields, for all $t\ge 1$,
\begin{equation}
\label{eq:hb_recursion_stoch}
x_t=x_{t-1}+\beta_1(x_{t-1}-x_{t-2})-\alpha\, g_t,
\qquad
\alpha:=\eta(1-\beta_1)\frac{1}{\sigma},
\end{equation}
where we take $x_{-1}:=x_0$ (equivalently $v_0:=0$).
\end{lemma}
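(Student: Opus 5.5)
Under the stabilization hypothesis $s_t\approx\sigma^2\mathbf 1$, the lemma amounts to an algebraic elimination, and I will carry it out in that exact idealized form. I take $s_t=\sigma^2\mathbf 1$ and $\varepsilon$ negligible relative to $\sigma$, and I drop the bias-correction factors as announced at the start of Section~\ref{sec:theory}. Line 7 of Algorithm~\ref{alg:mvn-grad} then reads $z_t=g_t/\sigma$ coordinate-wise. Lines 8--9 become $u_t=\beta_1u_{t-1}+(1-\beta_1)z_t$ and $x_t=x_{t-1}-\eta u_t$, which is the first displayed recursion.

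Next I eliminate $u$. From the parameter update I get $u_t=(x_{t-1}-x_t)/\eta$ for every $t\ge1$. For $t\ge2$ the same identity at the previous step gives $u_{t-1}=(x_{t-2}-x_{t-1})/\eta$. Substituting both into the momentum recursion gives
\[
\frac{x_{t-1}-x_t}{\eta}=\beta_1\frac{x_{t-2}-x_{t-1}}{\eta}+(1-\beta_1)\frac{g_t}{\sigma}.
\]
Multiplying through by $\eta$ and rearranging yields $x_t=x_{t-1}+\beta_1(x_{t-1}-x_{t-2})-\eta(1-\beta_1)g_t/\sigma$, which is \eqref{eq:hb_recursion_stoch} with $\alpha=\eta(1-\beta_1)/\sigma$.

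The case $t=1$ has to be checked separately, because $u_0=0$ is an initialization and not of the form $(x_{-1}-x_0)/\eta$. With $u_0=0$ the algorithm gives $x_1=x_0-\eta(1-\beta_1)g_1/\sigma=x_0-\alpha g_1$. The convention $x_{-1}:=x_0$ makes the momentum term $\beta_1(x_0-x_{-1})$ vanish, so \eqref{eq:hb_recursion_stoch} agrees with the algorithm at $t=1$ as well. This convention is the same as taking a zero initial velocity, written $v_0:=0$ in the statement.

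There is no real obstacle in the algebra. The only point needing care is interpretive: the "$\approx$" in $s_t\approx\sigma^2\mathbf 1$ has to be read as an exact substitution, with $\varepsilon$ absorbed or neglected. If $\varepsilon$ is kept, the same derivation goes through with $\sigma$ replaced by $\sigma+\varepsilon$ throughout, including in $\alpha$. I would state the lemma as exact under this idealization, so that the later Heavy-Ball potential argument can use \eqref{eq:hb_recursion_stoch} directly.
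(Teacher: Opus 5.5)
Your proposal is correct and matches the paper's proof: it is the same elimination, solving $x_{t-1}=x_{t-2}-\eta u_{t-1}$ for $u_{t-1}$ and substituting into the momentum recursion. Your explicit check at $t=1$ (where $u_0=0$ corresponds to the convention $x_{-1}:=x_0$) and your remark that $\varepsilon$ is being neglected (otherwise $\sigma$ becomes $\sigma+\varepsilon$) spell out two points the paper leaves implicit.
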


\begin{proof}
From $x_{t-1}=x_{t-2}-\eta u_{t-1}$ we have $u_{t-1}=-(x_{t-1}-x_{t-2})/\eta$.
Using $u_t=\beta_1 u_{t-1}+(1-\beta_1)\frac{g_t}{\sigma}$ gives
\[
x_t=x_{t-1}-\eta u_t
=x_{t-1}+\beta_1(x_{t-1}-x_{t-2})-\eta(1-\beta_1)\frac{g_t}{\sigma},
\]
which is \eqref{eq:hb_recursion_stoch}.
\end{proof}

Lemma~\ref{lem:mvn_to_hb_stoch} reduces the analysis of MVN-Grad in the stabilized low-variance regime to a Heavy-Ball recursion with an additive stochastic driving term.
The next lemma introduces a suitable potential function and establishes a one-step descent inequality, which we will later telescope to obtain the claimed rate.

\begin{lemma}\label{lem:potential_descent_hsnr}
Define the increments $y_t:=x_t-x_{t-1}$ for $t\ge 0$ with $y_0:=0$, and set
$\alpha := \eta(1-\beta)\frac{1}{\sigma}$.
Assume the stepsize condition
\begin{equation}\label{eq:stepsize_mvn_stoch}
\alpha \le \frac{1-\beta^2}{L}.
\end{equation}
Define the potential
\[
\Phi_t \;:=\; F(x_t)+\frac{\beta^2}{2\alpha}\,\|y_t\|^2.
\]
Then for all $t\ge 1$,
\begin{equation}
\label{eq:one_step_descent}
    \E[\Phi_t\mid\mathcal F_{t-1}]
    \;\le\;
    \Phi_{t-1}
    -\frac{\alpha}{2}\,\|\nabla F(x_{t-1})\|^2
    +\frac{\alpha}{2}\,\E[\|\xi_t\|^2\mid\mathcal F_{t-1}].    
\end{equation}

\end{lemma}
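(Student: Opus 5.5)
The plan is a direct smoothness computation on the Heavy-Ball recursion from Lemma~\ref{lem:mvn_to_hb_stoch}, written with $\beta=\beta_1$. Write $\nabla:=\nabla F(x_{t-1})$ and $g_t=\nabla+\xi_t$. By \eqref{eq:hb_recursion_stoch}, the increments satisfy
\[
y_t=\beta y_{t-1}-\alpha g_t,
\qquad\text{equivalently}\qquad
\beta y_{t-1}-y_t=\alpha g_t .
\]
Here $y_{t-1}$ is $\mathcal F_{t-1}$-measurable. The initialization $y_0=0$ is consistent with the convention $x_{-1}=x_0$.

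\textbf{Step 1 (smoothness).} $L$-smoothness gives
\[
F(x_t)\le F(x_{t-1})+\langle \nabla,y_t\rangle+\tfrac L2\|y_t\|^2 .
\]
To generate a gradient term, I would write $\nabla=g_t-\xi_t=(\beta y_{t-1}-y_t)/\alpha-\xi_t$. This gives
\[
\langle\nabla,y_t\rangle=\tfrac1\alpha\langle \beta y_{t-1}-y_t,\,y_t\rangle-\langle\xi_t,y_t\rangle .
\]
The key algebraic move is the exact polarization identity
\[
2\beta\langle y_{t-1},y_t\rangle=\beta^2\|y_{t-1}\|^2+\|y_t\|^2-\|\beta y_{t-1}-y_t\|^2 .
\]
Combined with $\|\beta y_{t-1}-y_t\|^2=\alpha^2\|g_t\|^2$, it yields
\[
\langle\nabla,y_t\rangle=\frac{\beta^2\|y_{t-1}\|^2-\|y_t\|^2}{2\alpha}-\frac\alpha2\|g_t\|^2-\langle\xi_t,y_t\rangle .
\]

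\textbf{Step 2 (potential bookkeeping).} Add $\frac{\beta^2}{2\alpha}\|y_t\|^2$ to both sides and use $\Phi_{t-1}=F(x_{t-1})+\frac{\beta^2}{2\alpha}\|y_{t-1}\|^2$. This gives
\[
\Phi_t\le \Phi_{t-1}-\frac{1-\beta^2-\alpha L}{2\alpha}\|y_t\|^2-\frac\alpha2\|g_t\|^2-\langle\xi_t,y_t\rangle .
\]
The stepsize condition \eqref{eq:stepsize_mvn_stoch} makes the coefficient of $\|y_t\|^2$ nonpositive, so that term can be dropped.

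\textbf{Step 3 (conditional expectation).} Take expectations conditional on $\mathcal F_{t-1}$, using $\E[\xi_t\mid\mathcal F_{t-1}]=0$.
\begin{itemize}
\item For the gradient term, $\E[\|g_t\|^2\mid\mathcal F_{t-1}]=\|\nabla\|^2+\E[\|\xi_t\|^2\mid\mathcal F_{t-1}]$.
\item For the cross term, $y_t=\beta y_{t-1}-\alpha\nabla-\alpha\xi_t$ with the first two pieces $\mathcal F_{t-1}$-measurable. Hence $\E[\langle\xi_t,y_t\rangle\mid\mathcal F_{t-1}]=-\alpha\,\E[\|\xi_t\|^2\mid\mathcal F_{t-1}]$.
\end{itemize}
Collecting terms gives
\[
-\tfrac\alpha2\|\nabla\|^2-\tfrac\alpha2\E\|\xi_t\|^2+\alpha\E\|\xi_t\|^2=-\tfrac\alpha2\|\nabla\|^2+\tfrac\alpha2\E\|\xi_t\|^2 ,
\]
which is exactly \eqref{eq:one_step_descent}.

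\textbf{Main obstacle.} The main obstacle is Step 1: choosing to express $\nabla$ through the recursion and using the exact polarization identity, rather than a Young inequality. A Young inequality would lose the $-\frac\alpha2\|g_t\|^2$ term, which is the only source of the $-\|\nabla\|^2$ descent. The noise cross term is not a difficulty, since its conditional expectation is computed exactly.
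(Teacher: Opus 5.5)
Your proof is correct and follows essentially the paper's route: $L$-smoothness, an exact squared-norm expansion of the Heavy-Ball recursion $y_t=\beta y_{t-1}-\alpha g_t$, conditional expectation, and dropping the $\|y_t\|^2$ term via $\alpha\le(1-\beta^2)/L$. The only difference is in how the identity is grouped. The paper squares $y_t+\alpha\nabla F(x_{t-1})=\beta y_{t-1}-\alpha\xi_t$, so its noise cross term pairs with the $\mathcal F_{t-1}$-measurable $y_{t-1}$ and simply vanishes. You instead square $\beta y_{t-1}-y_t=\alpha g_t$ and then compute $\E[\langle\xi_t,y_t\rangle\mid\mathcal F_{t-1}]=-\alpha\,\E[\|\xi_t\|^2\mid\mathcal F_{t-1}]$ explicitly, which gives the same inequality.
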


\begin{proof}
By $L$-smoothness,
\begin{equation}\label{eq:smoothness_step_stoch_full}
F(x_t)\le F(x_{t-1})+\langle \nabla F(x_{t-1}),y_t\rangle+\frac{L}{2}\|y_t\|^2.
\end{equation}
From the definition of $y_t$ and \eqref{eq:hb_recursion_stoch},
\[
y_t+\alpha \nabla F(x_{t-1})=\beta y_{t-1}-\alpha \xi_t.
\]
Taking squared norms and expanding both sides gives
\[
\|y_t\|^2+2\alpha\langle \nabla F(x_{t-1}),y_t\rangle+\alpha^2\|\nabla F(x_{t-1})\|^2
=
\beta^2\|y_{t-1}\|^2-2\alpha\beta\langle y_{t-1},\xi_t\rangle+\alpha^2\|\xi_t\|^2.
\]
Rearranging,
\[
2\alpha\langle \nabla F(x_{t-1}),y_t\rangle
=
\beta^2\|y_{t-1}\|^2-\|y_t\|^2-\alpha^2\|\nabla F(x_{t-1})\|^2
+\alpha^2\|\xi_t\|^2
-2\alpha\beta\langle y_{t-1},\xi_t\rangle.
\]
Taking conditional expectation given $\mathcal F_{t-1}$ and using
$\E[\xi_t\mid\mathcal F_{t-1}]=0$ (hence $\E[\langle y_{t-1},\xi_t\rangle\mid\mathcal F_{t-1}]=0$) yields
\begin{equation}\label{eq:inner_identity_stoch_cond_full}
2\alpha\,\E\!\left[\langle \nabla F(x_{t-1}),y_t\rangle\mid\mathcal F_{t-1}\right]
=
\beta^2\|y_{t-1}\|^2
-\E[\|y_t\|^2\mid\mathcal F_{t-1}]
-\alpha^2\|\nabla F(x_{t-1})\|^2
+\alpha^2\E[\|\xi_t\|^2\mid\mathcal F_{t-1}].
\end{equation}
Taking conditional expectation in \eqref{eq:smoothness_step_stoch_full} and substituting
\eqref{eq:inner_identity_stoch_cond_full} gives
\begin{align*}
\E[F(x_t)\mid\mathcal F_{t-1}]
&\le
F(x_{t-1})
+\frac{\beta^2}{2\alpha}\|y_{t-1}\|^2
-\frac{1}{2\alpha}\E[\|y_t\|^2\mid\mathcal F_{t-1}]
-\frac{\alpha}{2}\|\nabla F(x_{t-1})\|^2
+\frac{\alpha}{2}\E[\|\xi_t\|^2\mid\mathcal F_{t-1}] \\
& \quad  +\frac{L}{2}\E[\|y_t\|^2\mid\mathcal F_{t-1}]\\
&=
F(x_{t-1})
+\frac{\beta^2}{2\alpha}\|y_{t-1}\|^2
-\frac{\alpha}{2}\|\nabla F(x_{t-1})\|^2
+\frac{\alpha}{2}\E[\|\xi_t\|^2\mid\mathcal F_{t-1}]\\
& \quad -\left(\frac{1}{2\alpha}-\frac{L}{2}\right)\E[\|y_t\|^2\mid\mathcal F_{t-1}].
\end{align*}
Adding $\frac{\beta^2}{2\alpha}\E[\|y_t\|^2\mid\mathcal F_{t-1}]$ to both sides yields
\begin{align*}
\E\!\left[F(x_t)+\frac{\beta^2}{2\alpha}\|y_t\|^2\ \middle|\ \mathcal F_{t-1}\right]
&\le
F(x_{t-1})+\frac{\beta^2}{2\alpha}\|y_{t-1}\|^2
-\frac{\alpha}{2}\|\nabla F(x_{t-1})\|^2
+\frac{\alpha}{2}\E[\|\xi_t\|^2\mid\mathcal F_{t-1}]\\
&\quad
-\left(\frac{1-\beta^2}{2\alpha}-\frac{L}{2}\right)\E[\|y_t\|^2\mid\mathcal F_{t-1}].
\end{align*}
Under \eqref{eq:stepsize_mvn_stoch}, the last term is nonpositive and can be dropped.
Recognizing the definition of $\Phi_t$ completes the proof.
\end{proof}

Lemma~\ref{lem:potential_descent_hsnr} introduceda one-step descent inequality. Note that by Assumption~\ref{ass:oracle_second_variance}, we have a.s.
$\|\xi_t\|^2=\sum_{i=1}^d \xi_{t,i}^2 \le \delta^2\sum_{i=1}^d (\nabla_iF(x_{t-1}))^2
=\delta^2\|\nabla F(x_{t-1})\|^2$.
Hence \eqref{eq:one_step_descent} reduces to:
\[
\E[\Phi_t\mid\mathcal F_{t-1}]
\le
\Phi_{t-1}
-\frac{\alpha}{2}(1-\delta^2)\|\nabla F(x_{t-1})\|^2.
\]
Taking total expectation gives
\[
\E[\Phi_t]
\le
\E[\Phi_{t-1}]
-\frac{\alpha}{2}(1-\delta^2)\E\|\nabla F(x_{t-1})\|^2.
\]

Summing from $t=1$ to $T$ yields
\[
\frac{\alpha}{2}(1-\delta^2)\sum_{t=1}^T \E\|\nabla F(x_{t-1})\|^2
\le
\E[\Phi_0]-\E[\Phi_T]
\le
\E[\Phi_0]-F_\star.
\]
Since $y_0=0$, $\Phi_0=F(x_0)$. Therefore,
\[
\frac{\alpha}{2}(1-\delta^2)\sum_{t=1}^T \E\|\nabla F(x_{t-1})\|^2
\le
F(x_0)-F_\star.
\]
Dividing by $T$ gives
\[
\min_{0\le t\le T-1}\E\|\nabla F(x_t)\|^2
\le
\frac{1}{T}\sum_{t=0}^{T-1}\E\|\nabla F(x_t)\|^2
\le
\frac{2(F(x_0)-F_\star)}{\alpha(1-\delta^2)\,T},
\]
Finally, Jensen implies
$\min_{0\le t\le T-1}\E\|\nabla F(x_t)\|
\le
\sqrt{\min_{0\le t\le T-1}\E\|\nabla F(x_t)\|^2}$,
so choosing $T$ as stated ensures the $\varepsilon$-stationarity bound.

\textbf{LaProp.}
We now prove the LaProp part of Theorem~\ref{thm:separation_MVN-Grad_LaProp_smallvar}.
In the small-variance regime, LaProp’s second-moment normalization collapses to a (momentum-smoothed) sign method
(see Section~\ref{subsec:var-vs-second}). Accordingly, we analyze the limiting sign dynamics on a smooth quadratic and
show that, under the same high-SNR oracle as Assumption~\ref{ass:oracle_second_variance}, LaProp requires
$\Omega(d/\varepsilon^2)$ iterations to reach an $\varepsilon$-stationary point under the standard nonconvex stepsize scale.

\paragraph{Objective and oracle.}
Throughout this proof we work with the $L$-smooth function
\[
F(x)=\frac{L}{2}\|x\|_2^2,\qquad x\in\mathbb{R}^d,
\]
so that $\nabla F(x)=Lx$ and the unique stationary point is $x^\star=0$.
We use the same stochastic oracle model as in Assumption~\ref{ass:oracle_second_variance}; in particular,
\[
g_t = \nabla F(x_{t-1})+\xi_t = Lx_{t-1}+\xi_t,
\qquad
\E[\xi_t\mid\mathcal F_{t-1}]=0,
\qquad
|\xi_{t,i}|\le \delta\,|\nabla_i F(x_{t-1})|\ \text{a.s.}
\]

\paragraph{Limiting sign-collapse dynamics.}
As discussed in Section~\ref{subsec:var-vs-second}, in the high-SNR small-variance regime LaProp reduces to the
(stochastic) momentum-smoothed sign recursion
\begin{equation}\label{eq:laprop-sign-stoch}
x_t \;=\; x_{t-1}-\eta_{t-1}\,u_{t-1},
\qquad
u_{t-1} \;:=\; (1-\beta)\sum_{k=0}^{t-1}\beta^{t-1-k}\,\sign(g_{k+1}),
\end{equation}
where $\beta\in[0,1)$ and $\{\eta_t\}_{t\ge 0}$ is any positive stepsize sequence.

\begin{lemma}
\label{lem:sign_preserve}
Assume the oracle of Assumption~\ref{ass:oracle_second_variance}. If $x_{t-1}\succ 0$, then
$\sign(g_t)=\mathbf 1$ a.s. Consequently, along any horizon for which $x_s\succ 0$ holds for all $s\le T$,
the recursion \eqref{eq:laprop-sign-stoch} coincides (a.s.) with the deterministic sign-collapse recursion
\begin{equation}\label{eq:laprop-sign-collapse-det}
x_t \;=\; x_{t-1}-\eta_{t-1}\,u_{t-1},
\qquad
u_{t-1} \;=\; (1-\beta)\sum_{k=0}^{t-1}\beta^{t-1-k}\,\mathbf 1 \;=\; (1-\beta^{t})\,\mathbf 1.
\end{equation}
\end{lemma}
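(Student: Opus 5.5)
The first claim is pointwise and immediate from the oracle. Since $\nabla F(x)=Lx$, the hypothesis $x_{t-1}\succ 0$ gives $\nabla_i F(x_{t-1})=Lx_{t-1,i}>0$ for every coordinate $i$. By Assumption~\ref{ass:oracle_second_variance} we have $|\xi_{t,i}|\le \delta\,|\nabla_i F(x_{t-1})|$ almost surely with $\delta<1$. Hence
\[
g_{t,i}=Lx_{t-1,i}+\xi_{t,i}\ \ge\ (1-\delta)Lx_{t-1,i}\ >\ 0 \quad\text{a.s.},
\]
so $\sign(g_t)=\mathbf 1$ almost surely. The only point to check is that $L>0$, which holds because $F$ is $L$-smooth with a nontrivial quadratic.

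For the second claim I would argue by induction on $t\le T$, working on the almost-sure event where the bound $|\xi_{k,i}|\le\delta|\nabla_iF(x_{k-1})|$ holds for all $k\le T$. This event is a countable intersection of probability-one events, so it has probability one. Suppose $x_s\succ0$ for all $s\le T$. Each index $k$ with $0\le k\le t-1$ then uses an iterate $x_k\succ 0$ with $k\le T-1$. The first step gives $\sign(g_{k+1})=\mathbf 1$ for every such $k$. Substituting these signs into \eqref{eq:laprop-sign-stoch} turns the momentum buffer into a deterministic geometric sum:
\[
u_{t-1}=(1-\beta)\sum_{k=0}^{t-1}\beta^{t-1-k}\mathbf 1=(1-\beta)\,\frac{1-\beta^{t}}{1-\beta}\,\mathbf 1=(1-\beta^t)\mathbf 1.
\]
This is exactly \eqref{eq:laprop-sign-collapse-det}. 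By induction, the stochastic and deterministic trajectories coincide on that event for every $t\le T$.

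There is no real obstacle here. The one subtlety is that the hypothesis ``$x_s\succ0$ for all $s\le T$'' is imposed on the trajectory itself, so the argument is really a statement about the event on which positivity holds. Since the deterministic recursion has no randomness, its positivity can later be verified directly from the choice of $x_0(T)$ and $\eta_t$. This is the consistency step I would flag for the subsequent lower-bound argument rather than for this lemma.
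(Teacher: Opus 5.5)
Your proposal is correct and follows essentially the same route as the paper. The oracle bound $|\xi_{t,i}|\le\delta|\nabla_iF(x_{t-1})|$ with $\delta<1$ gives $g_{t,i}\ge(1-\delta)Lx_{t-1,i}>0$, and substituting $\sign(g_{k+1})=\mathbf 1$ into the momentum sum gives the geometric series $(1-\beta^t)\mathbf 1$; your explicit handling of the probability-one event and the induction just make precise what the paper's short proof leaves implicit.
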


\begin{proof}
Fix $t\ge 1$ and coordinate $j$. If $x_{t-1}\succ 0$, then $x_{t-1,j}>0$ and $\nabla_j F(x_{t-1})=Lx_{t-1,j}>0$.
By Assumption~\ref{ass:oracle_second_variance},
\[
g_{t,j}=Lx_{t-1,j}+\xi_{t,j}\ \ge\ (1-\delta)Lx_{t-1,j}\ >\ 0,
\]
hence $\sign(g_{t,j})=+1$ for all $j$, i.e., $\sign(g_t)=\mathbf 1$ a.s.
Substituting $\sign(g_{k+1})=\mathbf 1$ into \eqref{eq:laprop-sign-stoch} yields \eqref{eq:laprop-sign-collapse-det}.
\end{proof}

\begin{lemma}\label{lem:laprop_finite_horizon}
Fix any horizon $T\ge 1$, any stepsizes $\eta_0,\ldots,\eta_{T-1}>0$, and any $\beta\in[0,1)$.
Define
\begin{equation}\label{eq:a-def-stoch}
a \;:=\; \sum_{t=0}^{T-1}\eta_t\,(1-\beta^{t+1}) \;+\; \frac{\eta_{T-1}}{2}\,(1-\beta^{T}),
\qquad
x_0 := a\,\mathbf{1}\in\mathbb{R}^d,
\end{equation}
where $\mathbf{1}=(1,\ldots,1)$.
Consider the recursion \eqref{eq:laprop-sign-stoch} on $F(x)=\tfrac{L}{2}\|x\|^2$ under the oracle of
Assumption~\ref{ass:oracle_second_variance}. Then, for every $t=0,1,\ldots,T$, the following hold almost surely:
\begin{enumerate}
\item \textbf{Positive orthant invariance.} $x_t\succ 0$. In particular, $\sign(g_t)=\mathbf 1$ for all $t\le T$ a.s.
\item \textbf{Closed form.}
\[
u_t = (1-\beta^{t+1})\,\mathbf{1},
\qquad
x_t = \Big(a - \sum_{k=0}^{t-1}\eta_k(1-\beta^{k+1})\Big)\mathbf{1},
\qquad
x_T=\frac{\eta_{T-1}}{2}(1-\beta^{T})\,\mathbf{1}.
\]
\item \textbf{Gradient lower bound.}
\[
\min_{0\le t\le T}\|\nabla F(x_t)\|_2 \;=\; \|\nabla F(x_T)\|_2
\;=\; \frac{L}{2}(1-\beta^{T})\,\eta_{T-1}\,\sqrt{d}.
\]
Equivalently, to guarantee $\min_{0\le t\le T}\|\nabla F(x_t)\|_2\le \varepsilon$, it is necessary that
\begin{equation}\label{eq:eta-necessary-stoch}
\eta_{T-1} \;\le\; \frac{2\varepsilon}{L(1-\beta^{T})\sqrt{d}}.
\end{equation}
\end{enumerate}
\end{lemma}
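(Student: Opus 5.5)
The plan is a strong induction on $t$ that carries positive-orthant invariance and the closed form together, since each step of the recursion needs the sign information from all earlier iterates. Write the scalar sequence $a_t := a - \sum_{k=0}^{t-1}\eta_k(1-\beta^{k+1})$, with $a_0=a$.

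\textbf{Monotonicity and positivity of $a_t$.} Because $\eta_k>0$ and $1-\beta^{k+1}>0$ for $\beta\in[0,1)$, the sequence $a_t$ is strictly decreasing in $t$. By the choice of $a$ in \eqref{eq:a-def-stoch}, its last value is
\[
a_T=\frac{\eta_{T-1}}{2}(1-\beta^T)>0 .
\]
Hence $a_t\ge a_T>0$ for every $0\le t\le T$. This scalar fact is used throughout.

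\textbf{Induction hypothesis.} For a given $t$, assume:
\begin{itemize}
\item $x_s=a_s\mathbf 1\succ 0$ for all $s\le t-1$;
\item $\sign(g_{s})=\mathbf 1$ for all $s\le t-1$.
\end{itemize}

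\textbf{Inductive step.} Since $x_{t-1}\succ 0$, Lemma~\ref{lem:sign_preserve} gives $\sign(g_t)=\mathbf 1$ almost surely. So every sign entering $u_{t-1}$ in \eqref{eq:laprop-sign-stoch}, namely $\sign(g_1),\dots,\sign(g_t)$, equals $\mathbf 1$. Summing the geometric series gives $u_{t-1}=(1-\beta^t)\mathbf 1$. Then
\[
x_t=x_{t-1}-\eta_{t-1}(1-\beta^t)\mathbf 1=a_t\mathbf 1\succ 0,
\]
which closes the induction up to $t=T$.

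\textbf{Closed forms.} Applying Lemma~\ref{lem:sign_preserve} once more at $x_T\succ 0$ yields $\sign(g_{T+1})=\mathbf 1$. This justifies $u_t=(1-\beta^{t+1})\mathbf 1$ for all $t\le T$. The value $x_T=\frac{\eta_{T-1}}{2}(1-\beta^T)\mathbf 1$ is simply $a_T\mathbf 1$.

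\textbf{Gradient lower bound.} We have $\nabla F(x_t)=Lx_t=La_t\mathbf 1$, so $\|\nabla F(x_t)\|_2=La_t\sqrt d$. Since $a_t$ is decreasing and positive, the minimum over $0\le t\le T$ is attained at $t=T$ and equals $\frac L2(1-\beta^T)\eta_{T-1}\sqrt d$. Requiring this quantity to be at most $\varepsilon$ and solving for $\eta_{T-1}$ gives \eqref{eq:eta-necessary-stoch}.

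\textbf{Main obstacle.} The only delicate point is the index bookkeeping. The momentum $u_{t-1}$ depends on $\sign(g_{k+1})$ for $k\le t-1$, and each $g_{k+1}$ is evaluated at $x_k$. Positivity must therefore be established for $x_0,\dots,x_{t-1}$ before $u_{t-1}$ can be computed, which is why strong induction is needed rather than a one-step argument. Everything else is direct computation.
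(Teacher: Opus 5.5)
Your proof is correct and follows essentially the same route as the paper. The paper phrases it as a bootstrap: it assumes all signs equal $\mathbf 1$, unrolls to the closed form, uses $a_t \ge a_T = \tfrac{\eta_{T-1}}{2}(1-\beta^T) > 0$ to get positivity, and then invokes Lemma~\ref{lem:sign_preserve} to validate the sign assumption; your strong induction makes that argument explicit, and your remark that $x_T \succ 0$ gives $\sign(g_{T+1}) = \mathbf 1$ (needed for the closed form of $u_T$) fills a small bookkeeping step the paper leaves implicit.
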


\begin{proof}
\textbf{Positive orthant invariance.}
By Lemma~\ref{lem:sign_preserve}, as long as $x_k\succ 0$ we have $\sign(g_{k+1})=\mathbf 1$ a.s.
Assume temporarily that $\sign(g_{k+1})=\mathbf{1}$ for all $k\le t$. Then
\[
u_t=(1-\beta)\sum_{k=0}^t \beta^{t-k}\mathbf{1}=(1-\beta^{t+1})\,\mathbf{1},
\]
and \eqref{eq:laprop-sign-stoch} becomes
\begin{equation}\label{eq:unrolled-stoch}
x_{t+1}=x_t-\eta_t(1-\beta^{t+1})\,\mathbf{1}.
\end{equation}

\textbf{Closed form.}
Unrolling \eqref{eq:unrolled-stoch} gives, for any $t\le T$,
\[
x_t
= x_0 - \sum_{k=0}^{t-1}\eta_k(1-\beta^{k+1})\,\mathbf{1}
= \Big(a - \sum_{k=0}^{t-1}\eta_k(1-\beta^{k+1})\Big)\mathbf{1}.
\]
Using \eqref{eq:a-def-stoch}, for every $t\le T$,
\[
a - \sum_{k=0}^{t-1}\eta_k(1-\beta^{k+1})
\;\ge\;
a - \sum_{k=0}^{T-1}\eta_k(1-\beta^{k+1})
=
\frac{\eta_{T-1}}{2}(1-\beta^{T})
>0,
\]
so $x_t\succ 0$ for all $t\le T$. Lemma~\ref{lem:sign_preserve} then validates the constant-sign hypothesis,
and the expression for $x_T$ follows by plugging $t=T$.

\textbf{Gradient lower bound.}
Since $\nabla F(x)=Lx$,
\[
\|\nabla F(x_T)\|_2 = L\|x_T\|_2
= L\cdot \frac{\eta_{T-1}}{2}(1-\beta^{T})\|\mathbf{1}\|_2
= \frac{L}{2}(1-\beta^{T})\,\eta_{T-1}\,\sqrt{d}.
\]
Moreover, \eqref{eq:unrolled-stoch} decreases each coordinate strictly while staying positive, hence $\|x_t\|_2$
(and thus $\|\nabla F(x_t)\|_2$) is strictly decreasing in $t$, so the minimum over $t\in\{0,\ldots,T\}$ is attained at $T$.
Finally, requiring $\|\nabla F(x_T)\|_2\le \varepsilon$ yields \eqref{eq:eta-necessary-stoch}.
\end{proof}

\begin{theorem}
\label{thm:laprop_omega_d_eps2}
Take $\eta_t=\frac{c}{L\sqrt{t+1}}$ for a constant $c>0$, and assume $T$ is large enough so that
$1-\beta^{T}\ge \tfrac12$. Then there exists an initialization $x_0$ (given by \eqref{eq:a-def-stoch}) such that
the recursion \eqref{eq:laprop-sign-stoch} satisfies
\[
\min_{0\le t\le T}\|\nabla F(x_t)\|_2 \;\ge\; \frac{1}{4}\cdot \frac{c\sqrt{d}}{\sqrt{T}}.
\]
Hence, to guarantee $\min_{0\le t\le T}\|\nabla F(x_t)\|_2\le \varepsilon$, it is necessary that
\[
T \;\ge\; \Omega\!\left(\frac{d}{\varepsilon^2}\right).
\]
\end{theorem}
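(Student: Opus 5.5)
The plan is to apply Lemma~\ref{lem:laprop_finite_horizon} directly with the prescribed stepsizes $\eta_t=\frac{c}{L\sqrt{t+1}}$. Take $x_0=a\mathbf 1$ with $a$ defined by \eqref{eq:a-def-stoch}. Since all $\eta_t>0$, the lemma gives three facts almost surely: the trajectory stays in the positive orthant, the stochastic recursion coincides with the deterministic sign-collapse recursion, and
\[
\min_{0\le t\le T}\|\nabla F(x_t)\|_2=\frac{L}{2}(1-\beta^{T})\,\eta_{T-1}\sqrt d .
\]
Also $F(x)=\tfrac L2\|x\|^2$ is $L$-smooth and bounded below by $0$. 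Moreover, in the positive orthant the oracle $g_t=Lx_{t-1}+\xi_t$ with $|\xi_{t,i}|\le\delta L x_{t-1,i}$ meets the high-SNR model, so this instance satisfies the hypotheses of Theorem~\ref{thm:separation_MVN-Grad_LaProp_smallvar}(ii). Note that the construction depends on $T$ through $a$, which is allowed since the statement quantifies the initialization per horizon.

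Next, substitute $\eta_{T-1}=\frac{c}{L\sqrt T}$. Together with the assumption $1-\beta^T\ge\tfrac12$, this gives
\[
\min_{t\le T}\|\nabla F(x_t)\|_2=\frac L2(1-\beta^T)\frac{c}{L\sqrt T}\sqrt d\ \ge\ \frac14\cdot\frac{c\sqrt d}{\sqrt T},
\]
which is the first claim.

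For the complexity conclusion, suppose the minimum gradient norm is at most $\varepsilon$. Then $\frac{c\sqrt d}{4\sqrt T}\le\varepsilon$, which forces $T\ge \frac{c^2 d}{16\varepsilon^2}=\Omega(d/\varepsilon^2)$ for any fixed $c$. The condition $1-\beta^T\ge\tfrac12$ holds once $T\ge \log 2/\log(1/\beta)$, a $\beta$-dependent constant; for smaller $T$ the bound is trivially absorbed into the $\Omega$, or one can use $1-\beta^T\ge 1-\beta$ instead.

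The only genuinely delicate point is justifying that the minimum is attained at $t=T$ and that signs never flip. Both are already handled inside Lemma~\ref{lem:laprop_finite_horizon} through the choice of the slack term $\frac{\eta_{T-1}}{2}(1-\beta^T)$ in $a$, so no new obstacle remains beyond careful substitution.
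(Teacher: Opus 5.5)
Your proposal is correct and follows the paper's proof: invoke Lemma~\ref{lem:laprop_finite_horizon} to get the exact value $\frac{L}{2}(1-\beta^{T})\eta_{T-1}\sqrt d$ of the minimum gradient norm, then substitute $\eta_{T-1}=\frac{c}{L\sqrt T}$ and $1-\beta^T\ge\tfrac12$. Inverting $\frac{c\sqrt d}{4\sqrt T}\le\varepsilon$ gives $T\ge \frac{c^2 d}{16\varepsilon^2}$, which is the paper's argument, with the same implicit understanding that $c$ is held fixed.
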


\begin{proof}
Apply Lemma~\ref{lem:laprop_finite_horizon} with $\eta_{T-1}=\frac{c}{L\sqrt{T}}$ and $1-\beta^T\ge \tfrac12$:
\[
\min_{0\le t\le T}\|\nabla F(x_t)\|_2
= \frac{L}{2}(1-\beta^{T})\,\eta_{T-1}\,\sqrt{d}
\ge \frac{L}{2}\cdot \frac12 \cdot \frac{c}{L\sqrt{T}}\sqrt{d}
= \frac{1}{4}\cdot\frac{c\sqrt{d}}{\sqrt{T}}.
\]
Requiring the right-hand side to be at most $\varepsilon$ yields $T=\Omega(d/\varepsilon^2)$.
\end{proof}

\color{black}

\subsection{Proof of Theorem~\ref{thm:mvngrad_convergence}}
\label{app:proof_convergence}

\paragraph{Proof overview.}
The proof relies on three ingredients. First, we use a \emph{virtual-iterate reduction}: defining
$\tilde{x}_t := x_t - \frac{\beta_1}{1-\beta_1}\eta\,u_t$
decouples the momentum recursion and yields the simple update
$\tilde{x}_t-\tilde{x}_{t-1}=-\eta z_t .$

Thus, MVN-Grad with heavy-ball momentum can be analyzed as a momentum-free adaptive method on the virtual sequence, after which the resulting bound is transferred back to the actual iterates $x_t$.

Second, we use a \emph{covariance decomposition} to handle the dependence between the stochastic gradient and the variance-based normalizer. Writing
\[
    \E[z_{t,i}\mid\mathcal F_{t-1}]
    =
    \nabla_i F(x_{t-1})\,\E[w_{t,i}\mid\mathcal F_{t-1}]
    +
    \Cov(\xi_{t,i},w_{t,i}\mid\mathcal F_{t-1}),
\]
with $w_{t,i}=(\sqrt{s_{t,i}}+\varepsilon)^{-1}$, isolates the numerator--denominator coupling into a covariance term. We bound this term using the concavity of the square root, in particular
$|\sqrt a-\sqrt b|\le \sqrt{|a-b|}$, which avoids singular behavior when $s_t$ is small.

Third, Young's inequality controls both the covariance residual and the mismatch between gradients evaluated at the virtual and actual iterates. This yields an inner-product lower bound of the form
\[
\left\langle \nabla F(\tilde x_{t-1}),\E_{t-1}[z_t]\right\rangle
\ge
\frac12 S_t-\frac32 R_t-\frac32 Q_t,
\]
where $S_t$ is the weighted gradient norm, $R_t$ is the virtual-iterate mismatch error, and $Q_t$ is the covariance residual. Combining this lower bound with smoothness of $F$ and telescoping the resulting descent inequality proves Theorem~\ref{thm:mvngrad_convergence}.

\begin{proof}
\textbf{Step 1: Virtual iterate reduction.}
Define $\tilde{x}_t := x_t - \frac{\beta_1}{1-\beta_1}\eta\, u_t$. Since $u_0=0$, we have $\tilde{x}_0=x_0$.
We claim $\tilde{x}_t - \tilde{x}_{t-1} = -\eta z_t$ for all $t \ge 1$.
Indeed, from $x_t = x_{t-1} - \eta u_t$ and $u_t = \beta_1 u_{t-1} + (1-\beta_1)z_t$:
\begin{align*}
\tilde{x}_t - \tilde{x}_{t-1}
&= (x_t - x_{t-1}) - \frac{\beta_1\eta}{1-\beta_1}(u_t - u_{t-1}) \\
&= -\eta u_t - \frac{\beta_1\eta}{1-\beta_1}\big[(\beta_1-1)u_{t-1} + (1-\beta_1)z_t\big] \\
&= -\eta u_t + \beta_1\eta u_{t-1} - \beta_1\eta z_t \\
&= -\eta\big[\beta_1 u_{t-1} + (1-\beta_1)z_t\big] + \beta_1\eta u_{t-1} - \beta_1\eta z_t \\
&= -\eta z_t.
\end{align*}

\textbf{Step 2: Bounded normalized gradient.}
Since $\sqrt{s_{t,i}} + \varepsilon \ge \varepsilon$ and $|g_{t,i}| \le G$:
\[
|z_{t,i}| = \frac{|g_{t,i}|}{\sqrt{s_{t,i}}+\varepsilon} \le \frac{G}{\varepsilon},
\qquad
\|z_t\|^2 \le \frac{dG^2}{\varepsilon^2}.
\]

\textbf{Step 3: Descent on virtual iterates.}
By $L$-smoothness applied to the virtual iterates:
\begin{align*}
F(\tilde{x}_t)
&\le F(\tilde{x}_{t-1}) + \langle \nabla F(\tilde{x}_{t-1}), \tilde{x}_t - \tilde{x}_{t-1}\rangle
+ \frac{L}{2}\|\tilde{x}_t - \tilde{x}_{t-1}\|^2 \\
&= F(\tilde{x}_{t-1}) - \eta\langle \nabla F(\tilde{x}_{t-1}), z_t\rangle
+ \frac{L\eta^2}{2}\|z_t\|^2.
\end{align*}

\textbf{Step 4: Inner product in expectation.}
Write $\E_{t-1}[\cdot] := \E[\cdot \mid \mathcal F_{t-1}]$ and
$\Cov_{t-1}(\cdot,\cdot) := \Cov(\cdot,\cdot \mid \mathcal F_{t-1})$.
Since $\tilde x_{t-1}$ is $\mathcal F_{t-1}$-measurable, taking conditional expectations in Step~3 gives
\[
\mathbb{E}_{t-1} [F(\tilde{x}_t)]
\leq F(\tilde{x}_{t-1})
- \eta\,\langle \nabla F(\tilde{x}_{t-1}),\, \mathbb{E}_{t-1}[z_t]\rangle
+ \frac{L\eta^2}{2}\mathbb{E}_{t-1}[\|z_t\|^2].
\]
Let
\[
w_{t,i}:=\frac{1}{\sqrt{s_{t,i}}+\varepsilon},
\qquad
W_{t,i}:=\mathbb{E}_{t-1}[w_{t,i}],
\qquad
c_{t,i}:=\Cov_{t-1}(\xi_{t,i},w_{t,i}).
\]
Then $z_{t,i}=g_{t,i}w_{t,i}$ and, since
$g_{t,i}=\nabla_iF(x_{t-1})+\xi_{t,i}$ with
$\E_{t-1}[\xi_{t,i}]=0$, we have
\begin{equation}\label{eq:ez_decomp}
\mathbb{E}_{t-1}[z_{t,i}]
=
\nabla_iF(x_{t-1})W_{t,i}+c_{t,i}.
\end{equation}

We first record simple bounds on $W_{t,i}$. Since $\|g_t\|_\infty\le G$ a.s.,
we have $\|\nabla F(x_{t-1})\|_\infty\le G$. Moreover, $m_t$ is an EMA of
bounded gradients, so $\|m_t\|_\infty\le G$. Hence
$|g_{t,i}-m_{t,i}|\le 2G$ and therefore $0\le s_{t,i}\le 4G^2$ a.s. Thus
\begin{equation}\label{eq:w_bounds}
\frac{1}{2G+\varepsilon}\le w_{t,i}\le \frac{1}{\varepsilon},
\qquad
\frac{1}{2G+\varepsilon}\le W_{t,i}\le \frac{1}{\varepsilon}.
\end{equation}

We next bound the covariance term. Define
\[
a_{t,i}:=\nabla_iF(x_{t-1}),\qquad
\delta_{t,i}:=a_{t,i}-m_{t-1,i},
\]
and let
\[
\bar s_{t,i}
:=
\beta_2 s_{t-1,i}
+
(1-\beta_2)\beta_1^2\delta_{t,i}^2,
\qquad
\bar w_{t,i}:=\frac{1}{\sqrt{\bar s_{t,i}}+\varepsilon}.
\]
Here $\bar s_{t,i}$ is the value of $s_{t,i}$ at $\xi_{t,i}=0$, and
$\bar w_{t,i}$ is $\mathcal F_{t-1}$-measurable. Since
$\E_{t-1}[\xi_{t,i}\bar w_{t,i}]=0$,
\[
c_{t,i}
=
\mathbb{E}_{t-1} \!\left[\xi_{t,i}(w_{t,i}-\bar w_{t,i})\right].
\]
Also,
\[
|w_{t,i}-\bar w_{t,i}|
\le
\frac{|\sqrt{s_{t,i}}-\sqrt{\bar s_{t,i}}|}{\varepsilon^2}
\le
\frac{\sqrt{|s_{t,i}-\bar s_{t,i}|}}{\varepsilon^2}.
\]
Using
\[
s_{t,i}-\bar s_{t,i}
=
(1-\beta_2)\beta_1^2(2\xi_{t,i}\delta_{t,i}+\xi_{t,i}^2),
\]
we get
\[
|w_{t,i}-\bar w_{t,i}|
\le
\frac{\beta_1\sqrt{1-\beta_2}}{\varepsilon^2}
\sqrt{|2\xi_{t,i}\delta_{t,i}+\xi_{t,i}^2|}.
\]
Therefore, by Cauchy--Schwarz,
\[
|c_{t,i}|
\le
\frac{\beta_1\sqrt{1-\beta_2}}{\varepsilon^2}
\sqrt{
\mathbb{E}_{t-1}[\xi_{t,i}^2]\,
\mathbb{E}_{t-1}[|2\xi_{t,i}\delta_{t,i}+\xi_{t,i}^2|]
}.
\]
Since $|a_{t,i}|\le G$, $|m_{t-1,i}|\le G$, and $|g_{t,i}|\le G$, we have
$|\delta_{t,i}|\le 2G$ and $|\xi_{t,i}|\le 2G$ almost surely. Thus the variance proxy can be bounded using
$\bar\sigma_\xi:=\min\{\sigma_\xi,2G\}$. Since the final bound is monotone in
$\sigma_\xi$, we relabel $\bar\sigma_\xi$ as $\sigma_\xi$ and assume
$\sigma_\xi\le 2G$ without loss of generality.

\[
\mathbb{E}_{t-1}[|2\xi_{t,i}\delta_{t,i}+\xi_{t,i}^2|]
\le
4G\,\mathbb{E}_{t-1}[|\xi_{t,i}|]+\mathbb{E}_{t-1}[\xi_{t,i}^2]
\le
4G\sigma_\xi+\sigma_\xi^2
\le
6G\sigma_\xi.
\]
Thus
\begin{equation}\label{eq:cov_bound}
|c_{t,i}|
\le
\frac{\beta_1\sqrt{6G}\,(1-\beta_2)^{1/2}\sigma_\xi^{3/2}}{\varepsilon^2}.
\end{equation}

Now we lower bound the descent inner product. Let
\[
a_t:=\nabla F(x_{t-1}),\qquad
b_t:=\nabla F(\tilde x_{t-1}),\qquad
e_t:=b_t-a_t.
\]
Using~\eqref{eq:ez_decomp},
\[
\langle b_t,\mathbb{E}_{t-1}[z_t]\rangle
=
\sum_i (a_{t,i}+e_{t,i})(a_{t,i}W_{t,i}+c_{t,i}).
\]
Define
\[
S_t:=\sum_i a_{t,i}^2W_{t,i},
\qquad
R_t:=\sum_i e_{t,i}^2W_{t,i},
\qquad
Q_t:=\sum_i \frac{c_{t,i}^2}{W_{t,i}}.
\]
Then, by Young's inequality,
\[
\sum_i e_{t,i}a_{t,i}W_{t,i}
\ge
-\frac14 S_t-R_t,
\]
\[
\sum_i a_{t,i}c_{t,i}
\ge
-\frac14 S_t-Q_t,
\]
and
\[
\sum_i e_{t,i}c_{t,i}
\ge
-\frac12 R_t-\frac12 Q_t.
\]
Combining these three bounds gives
\begin{equation}\label{eq:inner_product_lower}
\langle b_t,\mathbb{E}_{t-1}[z_t]\rangle
\ge
\frac12 S_t-\frac32 R_t-\frac32 Q_t.
\end{equation}

It remains to bound $R_t$ and $Q_t$. By $L$-smoothness,
\[
\|e_t\|
\le
L\|\tilde x_{t-1}-x_{t-1}\|
=
\frac{L\beta_1\eta}{1-\beta_1}\|u_{t-1}\|.
\]
Since $u_{t-1}$ is an EMA of vectors $z_k$ satisfying
$\|z_k\|\le G\sqrt d/\varepsilon$, we have
$\|u_{t-1}\|\le G\sqrt d/\varepsilon$. Therefore,
\[
\|e_t\|
\le
\frac{L\beta_1\eta G\sqrt d}{(1-\beta_1)\varepsilon}.
\]
Using $W_{t,i}\le 1/\varepsilon$,
\begin{equation}\label{eq:R_bound}
R_t
\le
\frac{1}{\varepsilon}\|e_t\|^2
\le
\frac{L^2\beta_1^2\eta^2 dG^2}{(1-\beta_1)^2\varepsilon^3}.
\end{equation}
Similarly, using~\eqref{eq:w_bounds} and~\eqref{eq:cov_bound},
\begin{equation}\label{eq:Q_bound}
Q_t
\le
\frac{6\beta_1^2 d(1-\beta_2)G(2G+\varepsilon)\sigma_\xi^3}{\varepsilon^4}.
\end{equation}
In particular, if $\varepsilon\le G$, then
\[
Q_t
\le
\frac{18\beta_1^2 d(1-\beta_2)G^2\sigma_\xi^3}{\varepsilon^4}.
\]

Substituting~\eqref{eq:inner_product_lower},~\eqref{eq:R_bound}, and~\eqref{eq:Q_bound}
into the conditional descent inequality, and using
$\E_{t-1}[\|z_t\|^2]\le dG^2/\varepsilon^2$, gives
\begin{align}
\mathbb{E}_{t-1}[F(\tilde{x}_t)]
&\le
F(\tilde{x}_{t-1})
-\frac{\eta}{2}S_t
+
\frac{3\eta}{2}R_t
+
\frac{3\eta}{2}Q_t
+
\frac{L\eta^2 dG^2}{2\varepsilon^2}  \nonumber\\
&\le
F(\tilde{x}_{t-1})
-\frac{\eta}{2}S_t
+
\frac{3L^2\beta_1^2\eta^3 dG^2}{2(1-\beta_1)^2\varepsilon^3}
+
\frac{9\beta_1^2\eta d(1-\beta_2)G(2G+\varepsilon)\sigma_\xi^3}{\varepsilon^4}
+
\frac{L\eta^2 dG^2}{2\varepsilon^2}.
\label{eq:one_step_clean}
\end{align}
Finally, by~\eqref{eq:w_bounds},
\[
S_t
=
\sum_i \nabla_iF(x_{t-1})^2 W_{t,i}
\ge
\frac{1}{2G+\varepsilon}\|\nabla F(x_{t-1})\|^2.
\]

\textbf{Step 5: Telescope.}
Taking total expectations in~\eqref{eq:one_step_clean} and summing from
$t=1$ to $T$, we obtain
\begin{align*}
\frac{\eta}{2}\sum_{t=1}^{T}\E[S_t]
&\le
F(\tilde x_0)-\E[F(\tilde x_T)]
+
\frac{3L^2\beta_1^2\eta^3T dG^2}
{2(1-\beta_1)^2\varepsilon^3}  \\
&\quad
+
\frac{9\beta_1^2\eta T d(1-\beta_2)G(2G+\varepsilon)\sigma_\xi^3}
{\varepsilon^4}
+
\frac{L\eta^2T dG^2}{2\varepsilon^2}.
\end{align*}
Since $\tilde x_0=x_0$ and $F(\tilde x_T)\ge F_\star$, this gives
\begin{align*}
\frac{\eta}{2}\sum_{t=1}^{T}\E[S_t]
&\le
\Delta_0
+
\frac{3L^2\beta_1^2\eta^3T dG^2}
{2(1-\beta_1)^2\varepsilon^3}  \\
&\quad
+
\frac{9\beta_1^2\eta T d(1-\beta_2)G(2G+\varepsilon)\sigma_\xi^3}
{\varepsilon^4}
+
\frac{L\eta^2T dG^2}{2\varepsilon^2}.
\end{align*}
Moreover, by~\eqref{eq:w_bounds},
\[
S_t
=
\sum_i \nabla_iF(x_{t-1})^2 W_{t,i}
\ge
\frac{1}{2G+\varepsilon}\|\nabla F(x_{t-1})\|^2.
\]
Therefore,
\begin{align*}
\frac1T\sum_{t=0}^{T-1}\E\|\nabla F(x_t)\|^2
&\le
\frac{2(2G+\varepsilon)\Delta_0}{\eta T}
+
(2G+\varepsilon)L\eta\frac{dG^2}{\varepsilon^2} \\
&\quad
+
\frac{3(2G+\varepsilon)L^2\beta_1^2\eta^2 dG^2}
{(1-\beta_1)^2\varepsilon^3}
+
\frac{18\beta_1^2d(1-\beta_2)G(2G+\varepsilon)^2\sigma_\xi^3}
{\varepsilon^4}.
\end{align*}
Under $\varepsilon\le G$, we have $2G+\varepsilon\le 3G$. Also, by the
stepsize condition
$\eta \le c\frac{(1-\beta_1)^2\varepsilon}{L\beta_1^2},$
for $c>0$ sufficiently small, the third term is dominated by the second
term. Hence there exist constants $C_1,C_2>0$, depending only on
$\beta_1$, such that
\[
\frac1T\sum_{t=0}^{T-1}\E\|\nabla F(x_t)\|^2
\le
\frac{C_1G\Delta_0}{\eta T}
+
C_1L\eta\frac{dG^3}{\varepsilon^2}
+
\frac{C_2d(1-\beta_2)G^3\sigma_\xi^3}{\varepsilon^4}.
\]
This proves~\eqref{eq:general_rate}.
\end{proof}

\newpage

\newpage

\newpage

\section{Experimental details}

\subsection{Compute resources}

CIFAR-100/ResNet-18 experiments were run on NVIDIA H100 NVL and NVIDIA GH200 GPUs, depending on availability. The batch-size 128 runs used 1 NVIDIA H100 NVL GPUs and took approximately 4000 seconds per run. The batch-size 1024 runs took approximately 4000--4300 seconds per completed run, using either 1 NVIDIA H100 NVL GPUs or 1 NVIDIA GH200 GPU.

WikiText-103/GPT-2 30M experiments were run on NVIDIA H100 GPUs. Each run used batch size 128, sequence length 512, and 100k training steps, and took approximately 4 wall-clock hours based on the logged throughput.

OpenWebText/GPT-2 124M experiments were run on NVIDIA GH200 GPUs. The selected-configuration runs took approximately 46k--48k seconds per run, corresponding to roughly 13 wall-clock hours. MVN-GradW has one additional optimizer-state tensor relative to AdamW / AdaBeliefW / LaPropW; in our OpenWebText runs, wall-clock time remained comparable across optimizers. The full research process required additional compute beyond the final reported runs due to pilot experiments, failed configurations, and hyperparameter sweeps.

\subsection{MNIST experiment hyperparameters}
\label{app:mnist-hparams}

\paragraph{Fig.~\ref{fig:loss_smallbatch}.}
\[
\eta = 5 \cdot 10^{-3}, \qquad 
\beta_1 = 0.999, \qquad 
\beta_2 = 0.7, \qquad 
\varepsilon = 10^{-8}, \qquad
\text{batch size} = 128.
\]

\paragraph{Fig.~\ref{fig:loss_largebatch}.}
\[
\eta = 1 \cdot 10^{-4}, \qquad 
\beta_1 = 0.95, \qquad 
\beta_2 = 0.999, \qquad 
\varepsilon = 10^{-8}, \qquad
\text{batch size} = 1024.
\]

No optimizer-specific tuning was performed in either panel.

\subsection{MNIST conditional-variance diagnostic}
\label{app:mnist-vgap}

For Figure~\ref{fig:thm_vgap_ada_prop}, we estimate the one-step conditional
update variance along a real training trajectory. We train a two-hidden-layer
MLP on MNIST using AdaBelief and MVN-Grad with batch size $16$, learning rate
$10^{-3}$, $\beta_1=0.9$, and $\beta_2=0.95$. At each selected checkpoint, we
freeze the optimizer state $(x_{t-1},m_{t-1},s_{t-1})$ and evaluate both
methods from the same state by drawing $K=128$ independent mini-batches. Using
the resulting gradients, we form one-step updates and estimate
\[
\Delta\Var(t)
=
\Var\!\left(\Delta_t^{\mathrm{AB}}\mid\mathcal F_{t-1}\right)
-
\Var\!\left(\Delta_t^{\mathrm{MV}}\mid\mathcal F_{t-1}\right)
\]
via Monte Carlo. Variances are computed coordinate-wise, averaged over
parameters, and then averaged over three seeds.

\subsection{Single-spike robustness experiment details}
\label{app:ssr-hparams}

Figures~\ref{fig:thm_ssr_1_1}--\ref{fig:thm_ssr_1_2} simulate the single-spike model from
Theorem~\ref{thm:spike}. We set $g_0 = Mu$ for fixed $u\neq 0$, followed by
$g_t=u$ for all $t\ge 1$, and initialize the adaptive statistics with
$m_{-1}=0$ and $v_{-1}=s_{-1}=\bar d>0$. For each configuration, we simulate
$T$ iterations and record the peak update magnitude
\[
\max_{0\le \tau\le T}|\Delta_\tau|
\]
as a function of the spike size $M$. Within each panel, all optimizers use
identical hyperparameters; only $M$ varies across runs. A formal spike analysis
for AdaBelief is more involved because its variance estimator is coupled to the
momentum state, so we report its behavior empirically.

\paragraph{Figure~\ref{fig:thm_ssr_1_1} (long horizon).}
\[
\beta_1 = 0.9, \qquad
\beta_2 = 0.6, \qquad
\varepsilon = 10^{-8}, \qquad
\bar d = 1, \qquad
u = 10^{-3}, \qquad
T = 1000.
\]

\paragraph{Figure~\ref{fig:thm_ssr_1_2} (early carry-over regime).}
\[
\beta_1 = 0.99999, \qquad
\beta_2 = 0.1, \qquad
\varepsilon = 10^{-8}, \qquad
\bar d = 10, \qquad
u = 10, \qquad
T = 50.
\]

No optimizer-specific tuning was performed in either panel.

\subsection{CIFAR-100 with ResNet-18}
\label{app:hparams-cifar}
\paragraph{Hyperparameter search.}
We use the same search budget across methods per batch size.
For batch size 1024, we first swept $\eta \in {2\times10^{-3}, 3\times10^{-3}, 4\times10^{-3}}$ with $\beta_1 = 0.9$ and $\beta_2 \in {0.99, 0.999}$. We then swept $\beta_1 \in {0.7, 0.8, 0.9, 0.95}$, and $\beta_2 \in {0.99, 0.999}$ at $\eta \in {10^{-4}, 10^{-3}, 4\times10^{-3}, 10^{-2}}$. 
For batch size 128, we swept $\eta \in \{10^{-4}, 5\!\times\!10^{-4}, 10^{-3}\}$ with $\beta_1=0.9$ and $\beta_2\in\{0.99,0.999\}$; for AdaBelief and MVN-Grad we additionally evaluate $\varepsilon_s \in \{10^{-6},10^{-8}\}$, while using $\varepsilon=10^{-8}$ for all methods.
We then swept $\beta_1 \in \{0.7,0.8,0.9,0.95\}$ over $\eta \in \{5\!\times\!10^{-4},10^{-3},5\!\times\!10^{-3},10^{-2}\}$.
In both cases, we perform single-seed selection followed by 3-seed evaluation.

\paragraph{Selected hyperparameters for CIFAR-100 / ResNet-18} For reproducibility, Tables~\ref{tab:cifar-bs128}--\ref{tab:cifar-bs1024} report results for the
best configuration (by validation / selected checkpoint criterion) found in our sweep for each optimizer.
\begin{table}[h]
\centering
\caption{Selected optimizer hyperparameters for CIFAR-100 / ResNet-18 (batch size 128).}
\label{tab:hparams-cifar-bs128}
\begin{tabular}{lcccc}
\toprule
\textbf{Optimizer} & $\eta$ & $\beta_1$ & $\beta_2$ & $\varepsilon_s$ \\
\midrule
LaProp     & $5\!\times\!10^{-4}$ & $0.7$  & $0.999$ & -  \\
AdaBelief  & $1\!\times\!10^{-3}$ & $0.9$  & $0.99$  & $10^{-8}$ \\
Adam       & $5\!\times\!10^{-4}$ & $0.8$  & $0.999$ & - \\
MVN-Grad   & $1\!\times\!10^{-3}$ & $0.95$ & $0.99$ & $10^{-8}$ \\
\bottomrule
\end{tabular}
\end{table}

\begin{table}[h]
\centering
\caption{Selected optimizer hyperparameters for CIFAR-100 / ResNet-18 (batch size 1024).}
\label{tab:hparams-cifar-bs1024}
\begin{tabular}{lcccc}
\toprule
\textbf{Optimizer} & $\eta$ & $\beta_1$ & $\beta_2$ & $\varepsilon_s$ \\
\midrule
AdaBelief  & $4\!\times\!10^{-3}$ & $0.7$  & $0.99$  & $10^{-8}$ \\
Adam       & $1\!\times\!10^{-3}$ & $0.8$  & $0.999$ & -  \\
LaProp     & $1\!\times\!10^{-3}$ & $0.95$ & $0.999$ & -  \\
MVN-Grad   & $4\!\times\!10^{-3}$ & $0.7$  & $0.99$  & $10^{-8}$ \\
\bottomrule
\end{tabular}
\end{table}

\subsection{Language-modeling hyperparameter grids and selected configurations}
\label{app:hparams-lm}

This appendix lists the tuning grids and selected optimizer hyperparameters
corresponding to the language-modeling results in Table~\ref{tab:GPT-Wiki}
and Table~\ref{tab:GPT-Open-loss}. All other settings follow the training
setup described in the main text.

\noindent\textbf{WikiText-103 sweep.}
We use a two-stage tuning protocol. First, we perform a coarse sweep over
learning rates $\{10^{-4},10^{-3},10^{-2}\}$ with
$\beta_1\in\{0.9,0.95\}$ and $\beta_2\in\{0.99,0.999\}$ to identify stable
learning-rate regimes. Final comparisons use a focused sweep in which all
optimizers are evaluated on identical grids with learning rates
$\{5\times 10^{-4},10^{-3}\}$, $\beta_1\in\{0.6,0.7,0.9,0.95\}$, and
$\beta_2\in\{0.95,0.99\}$. Variance-normalized methods additionally evaluate
$\varepsilon_s\in\{0,10^{-8}\}$.

\begin{table}[ht]
\centering
\caption{WikiText-103 selected optimizer hyperparameters used in Table~\ref{tab:GPT-Wiki}. A dash indicates that the method does not use $\varepsilon_s$.}
\label{tab:hparams-gpt-wiki}
\begin{tabular}{lcccc}
\toprule
\textbf{Optimizer} & $\eta$ & $\beta_1$ & $\beta_2$ & $\varepsilon_s$ \\
\midrule
Adam      & $1\times 10^{-3}$ & $0.95$ & $0.99$ & -- \\
LaProp    & $1\times 10^{-3}$ & $0.95$ & $0.99$ & -- \\
AdaBelief & $1\times 10^{-3}$ & $0.95$ & $0.99$ & $0$ \\
MVN-Grad  & $1\times 10^{-3}$ & $0.95$ & $0.99$ & $0$ \\
\bottomrule
\end{tabular}
\end{table}

\noindent\textbf{OpenWebText sweep.}
Based on pilot runs, we fix the peak learning rate to $\eta=10^{-4}$ for all
optimizers and tune only $(\beta_1,\beta_2)$. For variance-normalized methods
we set $\varepsilon_s=0$. For each optimizer, we evaluate the same nine
representative $(\beta_1,\beta_2)$ pairs:
$(0.6,0.99)$, $(0.6,0.999)$, $(0.7,0.95)$, $(0.7,0.99)$, $(0.7,0.999)$,
$(0.8,0.9)$, $(0.8,0.99)$, $(0.8,0.999)$, and $(0.9,0.95)$.
We select the configuration minimizing final validation loss and re-run it
with three seeds.

\begin{table}[ht]
\centering
\caption{OpenWebText selected optimizer hyperparameters used in Table~\ref{tab:GPT-Open-loss}. A dash indicates that the method does not use $\varepsilon_s$.}
\label{tab:hparams-gpt-owt}
\begin{tabular}{lcccc}
\toprule
\textbf{Optimizer} & $\eta$ & $\beta_1$ & $\beta_2$ & $\varepsilon_s$ \\
\midrule
AdamW      & $1\times 10^{-4}$ & $0.6$ & $0.99$  & -- \\
LaPropW    & $1\times 10^{-4}$ & $0.6$ & $0.999$ & -- \\
AdaBeliefW & $1\times 10^{-4}$ & $0.6$ & $0.999$ & $0$ \\
MVN-GradW  & $1\times 10^{-4}$ & $0.6$ & $0.999$ & $0$ \\
\bottomrule
\end{tabular}
\end{table}

\newpage

\newpage

\printbibliography

\end{document}